\RequirePackage{etoolbox}
\newtoggle{neurips}

\togglefalse{neurips}     %

\documentclass{article}

\newcommand{\neurips}[1]{\iftoggle{neurips}{#1}{}}
\newcommand{\arxiv}[1]{\iftoggle{neurips}{}{#1}}

\neurips{
  \usepackage[nonatbib]{neurips_2026}   %
  \usepackage{natbib}
  \bibpunct{(}{)}{;}{a}{,}{,}
}
  
\arxiv{   
	\usepackage{natbib}
	\bibpunct{(}{)}{;}{a}{,}{,}
}

\arxiv{
	\usepackage[letterpaper, left=1in, right=1in, top=1in,
        bottom=1in]{geometry}
        \usepackage[hypertexnames=false]{hyperref}
        \hypersetup{colorlinks,citecolor=blue!70!black,linkcolor =
          blue!70!black}       %
        \usepackage{parskip}
      }

\arxiv{
  \usepackage[dvipsnames]{xcolor}         %
}
\neurips{
  \usepackage[dvipsnames]{xcolor}
  \usepackage[hypertexnames=false,colorlinks=true,
              citecolor=blue!70!black,linkcolor=blue!70!black,urlcolor=black,breaklinks=true]{hyperref}
}

\usepackage[utf8]{inputenc} %
\usepackage[T1]{fontenc}    %
\usepackage{url}            %
\usepackage{booktabs}       %
\usepackage{amsfonts}       %
\usepackage{nicefrac}       %
\usepackage{microtype}      %
\renewcommand{\paragraph}[1]{\par\noindent\textbf{#1}}
\arxiv{\usepackage{amsthm}}
\neurips{\usepackage{amsthm}}
\usepackage{mathtools} 
\usepackage{amsmath}
\usepackage{bm}
\usepackage{bbm}
\usepackage{amssymb} 
\usepackage{MnSymbol} %
\DeclareSymbolFont{upgreek}{U}{eur}{m}{n}
\DeclareMathSymbol{\uppi}{0}{upgreek}{"19}
\usepackage{makecell}
\usepackage{enumitem}
\usepackage{comment}
\usepackage{wrapfig}
\usepackage{colortbl}
\usepackage{setspace}
\usepackage{transparent}
\usepackage{inconsolata}
\usepackage[scaled=.90]{helvet}
\usepackage{xspace}
\usepackage{verbatim}
\usepackage{titletoc}
\usepackage{algorithm}
\usepackage[noend]{algpseudocode}

\makeatletter
\AtBeginDocument{%
  \let\oldALG@step\ALG@step
  \renewcommand{\ALG@step}{%
    \oldALG@step
    \xdef\ALG@currentHref{ALG@line.\thealgorithm.\arabic{ALG@line}}%
    \global\let\@currentHref\ALG@currentHref
    \xdef\@currentlabel{\arabic{ALG@line}}%
  }%
  \patchcmd{\ALG@doentity}%
    {\noindent\hskip\ALG@tlm}%
    {\noindent\hskip\ALG@tlm\Hy@raisedlink{\hypertarget{\ALG@currentHref}{}}}%
    {}{}%
}
\makeatother

\newcommand{\multiline}[1]{\parbox[t]{\dimexpr\linewidth-\algorithmicindent}{#1}}
\usepackage{tablefootnote}
\usepackage{pifont}
\usepackage[nameinlink,capitalize]{cleveref}
\neurips{\usepackage[nameinlink,capitalize]{cleveref}}

\makeatletter
\newcommand{\neutralize}[1]{\expandafter\let\csname c@#1\endcsname\count@}
\makeatother

\arxiv{
\usepackage{thmtools}
\declaretheoremstyle[
  spaceabove=8pt, %
  spacebelow=8pt, %
  headfont=\bfseries,
  bodyfont=\normalfont\itshape,
  headpunct=., %
  notefont=\normalfont\bfseries, 
  postheadspace=0.5em,
]{myspacingstyle}

\declaretheorem[style=myspacingstyle,name=Theorem,parent=section]{theorem}
\declaretheorem[style=myspacingstyle,name=Lemma,parent=section]{lemma}
\declaretheorem[style=myspacingstyle,name=Corollary,parent=section]{corollary}
\declaretheorem[style=myspacingstyle,name=Claim,parent=section]{claim}
\declaretheorem[style=myspacingstyle,name=Assumption, parent=section]{assumption}
\declaretheorem[style=myspacingstyle,name=Condition, parent=section]{condition}

\declaretheorem[style=myspacingstyle,name=Remark,parent=section]{remark}
\declaretheorem[style=myspacingstyle,name=Proposition, parent=section]{proposition}
\declaretheorem[style=myspacingstyle,name=Fact, parent=section]{fact}
\declaretheorem[style=myspacingstyle,name=Definition, parent=section]{definition}
}

\usepackage{aliascnt}
\neurips{
\theoremstyle{plain}
\newtheorem{theorem}{Theorem}[section]

\newaliascnt{lemma}{theorem}
\newtheorem{lemma}[lemma]{Lemma}
\aliascntresetthe{lemma}

\newaliascnt{corollary}{theorem}
\newtheorem{corollary}[corollary]{Corollary}
\aliascntresetthe{corollary}

\newaliascnt{proposition}{theorem}
\newtheorem{proposition}[proposition]{Proposition}
\aliascntresetthe{proposition}

\newaliascnt{remark}{theorem}
\theoremstyle{remark}

\aliascntresetthe{remark}

\theoremstyle{definition}

\newaliascnt{assumption}{theorem}
\newtheorem{assumption}[assumption]{Assumption}
\aliascntresetthe{assumption}

\newaliascnt{definition}{theorem}
\newtheorem{definition}[definition]{Definition}
\aliascntresetthe{definition}

\newaliascnt{condition}{theorem}
\newtheorem{condition}[condition]{Condition}
\aliascntresetthe{condition}

\newaliascnt{claim}{theorem}

\aliascntresetthe{claim}

\newaliascnt{fact}{theorem}

\aliascntresetthe{fact}
}

\usepackage{crossreftools}
\makeatletter
\renewenvironment{proof}[1][Proof]%
{%
	\par\noindent{\bfseries\upshape {#1.}\ }%
}%
{\qed\newline}
\makeatother

\theoremstyle{plain}

\usepackage{xpatch}
\xpatchcmd{\proof}{\itshape}{\normalfont\proofnameformat}{}{}
\newcommand{\proofnameformat}{\bfseries}

\newcommand{\pref}[1]{\cref{#1}}
\newcommand{\pfref}[1]{Proof of \pref{#1}}

\renewcommand{\eqref}[1]{\texorpdfstring{\hyperref[#1]{(\ref*{#1})}}{(\ref*{#1})}}
\crefformat{equation}{#2Eq.\,(#1)#3}
\Crefformat{equation}{#2Eq.\,(#1)#3}

\Crefformat{figure}{#2Figure~#1#3}
\Crefformat{assumption}{#2Assumption~#1#3}

\Crefname{assumption}{Assumption}{Assumptions}
\crefname{assumption}{Assumption}{Assumptions}
\Crefname{condition}{Condition}{Conditions}
\crefname{condition}{Condition}{Conditions}
\Crefname{claim}{Claim}{Claims}
\crefname{claim}{Claim}{Claims}
\Crefname{fact}{Fact}{Facts}
\crefname{fact}{Fact}{Facts}
\Crefname{definition}{Definition}{Definitions}
\crefname{definition}{Definition}{Definitions}
\Crefname{line}{Line}{Lines}
\crefname{line}{Line}{Lines}

\def\ddefloop#1{\ifx\ddefloop#1\else\ddef{#1}\expandafter\ddefloop\fi}
\def\ddef#1{\expandafter\def\csname bb#1\endcsname{\ensuremath{\mathbb{#1}}}}
\ddefloop ABCDEFGHIJKLMNOPQRSTUVWXYZ\ddefloop
\def\ddefloop#1{\ifx\ddefloop#1\else\ddef{#1}\expandafter\ddefloop\fi}
\def\ddef#1{\expandafter\def\csname b#1\endcsname{\ensuremath{\mathbf{#1}}}}
\ddefloop ABCDEFGHIJKLMNOPQRSTUVWXYZ\ddefloop
\def\ddef#1{\expandafter\def\csname sf#1\endcsname{\ensuremath{\mathsf{#1}}}}
\ddefloop ABCDEFGHIJKLMNOPQRSTUVWXYZ\ddefloop
\def\ddef#1{\expandafter\def\csname c#1\endcsname{\ensuremath{\mathcal{#1}}}}
\ddefloop ABCDEFGHIJKLMNOPQRSTUVWXYZ\ddefloop
\def\ddef#1{\expandafter\def\csname h#1\endcsname{\ensuremath{\widehat{#1}}}}
\ddefloop ABCDEFGHIJKLMNOPQRSTUVWXYZ\ddefloop
\def\ddef#1{\expandafter\def\csname hc#1\endcsname{\ensuremath{\widehat{\mathcal{#1}}}}}
\ddefloop ABCDEFGHIJKLMNOPQRSTUVWXYZ\ddefloop
\def\ddef#1{\expandafter\def\csname t#1\endcsname{\ensuremath{\widetilde{#1}}}}
\ddefloop ABCDEFGHIJKLMNOPQRSTUVWXYZ\ddefloop
\def\ddef#1{\expandafter\def\csname tc#1\endcsname{\ensuremath{\widetilde{\mathcal{#1}}}}}
\ddefloop ABCDEFGHIJKLMNOPQRSTUVWXYZ\ddefloop
\def\ddefloop#1{\ifx\ddefloop#1\else\ddef{#1}\expandafter\ddefloop\fi}
\def\ddef#1{\expandafter\def\csname scr#1\endcsname{\ensuremath{\mathscr{#1}}}}
\ddefloop ABCDEFGHIJKLMNOPQRSTUVWXYZ\ddefloop

\newcommand{\eps}{\epsilon}
\newcommand{\veps}{\varepsilon}

\DeclareMathOperator*{\argmax}{arg\,max}

\def\ddef#1{\expandafter\def\csname b#1\endcsname{\ensuremath{\mb{#1}}}}
\ddefloop abcdeghijklmnopqrstuvwxyz\ddefloop

\newcommand{\ind}[1]{^{\sss{(#1)}}}

\DeclarePairedDelimiter{\nrm}{\|}{\|}

\DeclarePairedDelimiter{\ceil}{\lceil}{\rceil}

\let\P\undefined

\DeclareMathOperator{\P}{P}

\newcommand{\mb}[1]{\boldsymbol{#1}}
\renewcommand{\bm}[1]{\boldsymbol{#1}}
\newcommand{\wt}[1]{\widetilde{#1}}
\newcommand{\wh}[1]{\widehat{#1}}
\newcommand{\wb}[1]{\widebar{#1}}

\let\underbar\undefined

\makeatletter
\let\save@mathaccent\mathaccent
\newcommand*\if@single[3]{%
	\setbox0\hbox{${\mathaccent"0362{#1}}^H$}%
	\setbox2\hbox{${\mathaccent"0362{\kern0pt#1}}^H$}%
	\ifdim\ht0=\ht2 #3\else #2\fi
}
\newcommand*\rel@kern[1]{\kern#1\dimexpr\macc@kerna}
\newcommand*\widebar[1]{\@ifnextchar^{{\wide@bar{#1}{0}}}{\wide@bar{#1}{1}}}
\newcommand*\underbar[1]{\@ifnextchar_{{\under@bar{#1}{0}}}{\under@bar{#1}{1}}}
\newcommand*\wide@bar[2]{\if@single{#1}{\wide@bar@{#1}{#2}{1}}{\wide@bar@{#1}{#2}{2}}}
\newcommand*\under@bar[2]{\if@single{#1}{\under@bar@{#1}{#2}{1}}{\under@bar@{#1}{#2}{2}}}
\newcommand*\wide@bar@[3]{%
	\begingroup
	\def\mathaccent##1##2{%
		\let\mathaccent\save@mathaccent
		\if#32 \let\macc@nucleus\first@char \fi
		\setbox\z@\hbox{$\macc@style{\macc@nucleus}_{}$}%
		\setbox\tw@\hbox{$\macc@style{\macc@nucleus}{}_{}$}%
		\dimen@\wd\tw@
		\advance\dimen@-\wd\z@
		\divide\dimen@ 3
		\@tempdima\wd\tw@
		\advance\@tempdima-\scriptspace
		\divide\@tempdima 10
		\advance\dimen@-\@tempdima
		\ifdim\dimen@>\z@ \dimen@0pt\fi
		\rel@kern{0.6}\kern-\dimen@
		\if#31
		\overline{\rel@kern{-0.6}\kern\dimen@\macc@nucleus\rel@kern{0.4}\kern\dimen@}%
		\advance\dimen@0.4\dimexpr\macc@kerna
		\let\final@kern#2%
		\ifdim\dimen@<\z@ \let\final@kern1\fi
		\if\final@kern1 \kern-\dimen@\fi
		\else
		\overline{\rel@kern{-0.6}\kern\dimen@#1}%
		\fi
	}%
	\macc@depth\@ne
	\let\math@bgroup\@empty \let\math@egroup\macc@set@skewchar
	\mathsurround\z@ \frozen@everymath{\mathgroup\macc@group\relax}%
	\macc@set@skewchar\relax
	\let\mathaccentV\macc@nested@a
	\if#31
	\macc@nested@a\relax111{#1}%
	\else
	\def\gobble@till@marker##1\endmarker{}%
	\futurelet\first@char\gobble@till@marker#1\endmarker
	\ifcat\noexpand\first@char A\else
	\def\first@char{}%
	\fi
	\macc@nested@a\relax111{\first@char}%
	\fi
	\endgroup
}
\newcommand*\under@bar@[3]{%
	\begingroup
	\def\mathaccent##1##2{%
		\let\mathaccent\save@mathaccent
		\if#32 \let\macc@nucleus\first@char \fi
		\setbox\z@\hbox{$\macc@style{\macc@nucleus}_{}$}%
		\setbox\tw@\hbox{$\macc@style{\macc@nucleus}{}_{}$}%
		\dimen@\wd\tw@
		\advance\dimen@-\wd\z@
		\divide\dimen@ 3
		\@tempdima\wd\tw@
		\advance\@tempdima-\scriptspace
		\divide\@tempdima 10
		\advance\dimen@-\@tempdima
		\ifdim\dimen@>\z@ \dimen@0pt\fi
		\rel@kern{0.6}\kern-\dimen@
		\if#31
		\underline{\rel@kern{-0.6}\kern\dimen@\macc@nucleus\rel@kern{0.4}\kern\dimen@}%
		\advance\dimen@0.4\dimexpr\macc@kerna
		\let\final@kern#2%
		\ifdim\dimen@<\z@ \let\final@kern1\fi
		\if\final@kern1 \kern-\dimen@\fi
		\else
		\underline{\rel@kern{-0.6}\kern\dimen@#1}%
		\fi
	}%
	\macc@depth\@ne
	\let\math@bgroup\@empty \let\math@egroup\macc@set@skewchar
	\mathsurround\z@ \frozen@everymath{\mathgroup\macc@group\relax}%
	\macc@set@skewchar\relax
	\let\mathaccentV\macc@nested@a
	\if#31
	\macc@nested@a\relax111{#1}%
	\else
	\def\gobble@till@marker##1\endmarker{}%
	\futurelet\first@char\gobble@till@marker#1\endmarker
	\ifcat\noexpand\first@char A\else
	\def\first@char{}%
	\fi
	\macc@nested@a\relax111{\first@char}%
	\fi
	\endgroup
}
\makeatother

\newcommand{\R}{\mathbb{R}}

\DeclarePairedDelimiterX\ip[2]{\langle}{\rangle}{#1,#2}

\let\P\undefined
\newcommand{\P}{\mathbb{P}}

\newcommand{\E}{\mathbb{E}}

\DeclarePairedDelimiterXPP\Es[2]{\mathbb{E}_{#1}}[]{}{
	
	#2
}

\renewcommand{\succeq}{\geq}

\DeclarePairedDelimiterXPP\Ipt[2]{{#1}\transpose}(){}{#2}
\DeclarePairedDelimiterXPP\Iptr[2]{}(){\transpose{#2}}{#1}

\usepackage{xspace}

\newcommand{\Pim}{\Pi}

 \newcommand{\A}{\mathcal{A}}

\def\argmax{\mathop{\mbox{ arg\,max}}}

\newcommand{\iprod}[2]{\left\langle#1,#2\right\rangle}

\newcommand{\norm}[1]{\left\|#1\right\|}

\newcommand{\transpose}{^\mathsf{\scriptscriptstyle T}}

\usepackage{todonotes}
\definecolor{PalePurp}{rgb}{0.66,0.57,0.66}

\newcommand{\veceval}{\mathrm{vec}}

\newcommand{\apx}{\texttt{LinOpt}}

\newcommand{\est}{\texttt{Vec}}

\newcommand{\wtilde}[1]{\widetilde{#1}}

\def\argmax{\mathop{\textup{ arg\,max}}}

\renewcommand{\tilde}{\wt}

\definecolor{darkpastelgreen}{rgb}{0.01, 0.75, 0.24}

\usepackage[normalem]{ulem}

\newcommand{\ntest}{n_{\texttt{test}}}
\newcommand{\mboost}{m_{\texttt{boost}}}
\newcommand{\nfqi}{n_{\texttt{fqi}}}
\newcommand{\nout}{n_{\texttt{outlier}}}
\newcommand{\ndir}{n_{\texttt{samp}}}
\newcommand{\nspan}{n_{\texttt{span}}}
\newcommand{\ncover}{n_{\texttt{cover}}}
\newcommand{\nrej}{n_{\texttt{reject}}}
\newcommand{\StateSpace}{\mathsf{S}}

\newcommand{\rob}{\texttt{rob}}
\newcommand{\apxspanner}{\textsc{RobustSpanner}\xspace}

\renewcommand{\veceval}{\textsc{EstVec}\xspace}
\newcommand{\ee}{\E}

\newcommand{\FQI}{\textsc{FQI}\xspace}
\newcommand{\nveceval}{n_{\texttt{veceval}\xspace}}
\renewcommand{\apx}{\textsc{LinOpt}\xspace}
\renewcommand{\est}{\textsc{Vec}\xspace}

\newcommand{\pirefh}[1][h]{\pi_{h,\texttt{ref}}}

\newcommand{\RS}{\textsc{RejectionSampling}\xspace}
\newcommand{\Out}{\mathrm{Out}}

\newcommand{\phibar}{\wb{\phi}}

\newcommand{\pihat}{\wh{\pi}}
\newcommand{\Span}{\operatorname{Span}}

\newcommand{\lazyspanner}{\textsc{SubspaceCover}\xspace}

\renewcommand{\argmax}{\operatorname{argmax}}

\renewcommand{\emptyset}{\varnothing}

\newcommand{\algcommentlight}[1]{\textcolor{blue!70!black}{\transparent{0.5}\footnotesize{\texttt{\textbf{//\hspace{2pt}#1}}}}}

\renewcommand{\ind}[1]{^{{\scriptscriptstyle#1}}}

\newcommand{\poly}{\mathrm{poly}}

\newcommand{\Qf}{Q}

\newcommand{\Qhat}{\wh{\Qf}}

\def\multiset#1#2{\ensuremath{\left(\kern-.3em\left(\genfrac{}{}{0pt}{}{#1}{#2}\right)\kern-.3em\right)}}

\renewcommand{\emptyset}{\varnothing}

\newcommand{\Alg}{\mathtt{Alg}}

\newcommand{\algcommentbiglight}[1]{\textcolor{blue!70!black}{\transparent{0.5}\footnotesize{\texttt{\textbf{/* #1~*/}}}}}

\renewcommand{\a}{\bm{a}}

\renewcommand{\br}{\bm{r}}

\newcommand{\reals}{\mathbb{R}}

\newcommand{\w}{\bm{w}}

\renewcommand{\P}{\mathbb{P}}

\DeclareMathOperator{\Law}{Law}

\newcommand{\x}{\bm{x}}
\renewcommand{\a}{\bm{a}}
\newcommand{\nn}{\nonumber}

\Crefformat{line}{#2Line #1#3}

\makeatletter
\let\OldStatex\Statex
\renewcommand{\Statex}[1][3]{%
  \setlength\@tempdima{\algorithmicindent}%
  \OldStatex\hskip\dimexpr#1\@tempdima\relax}
\makeatother

\newcommand{\paragraphi}[1]{\par\noindent\emph{#1.}}

\usepackage[suppress]{color-edits}

\addauthor{no}{ForestGreen}
\addauthor{zm}{red}
\addauthor{sr}{magenta}

\let\oldtextsc\textsc
\renewcommand{\textsc}[1]{\textup{\oldtextsc{#1}}}

\neurips{
\title{End-to-End Efficient RL for Linear Bellman Complete MDPs with Deterministic Transitions}
\author{%
  Anonymous Author(s)
}
}

\arxiv{
\title{End-to-End Efficient RL for Linear Bellman Complete MDPs with Deterministic Transitions}
\author{%
  Zakaria Mhammedi\thanks{Correspondence to: \texttt{mhammedi@google.com}} \\
  Google Research, NYC \\
  \and
  Alexander Rakhlin \\
  MIT \\
  \and
  Nneka Okolo \\
  MIT \\
}
}

\begin{document}

\maketitle

\begin{abstract}
We study reinforcement learning (RL) with linear function approximation in Markov Decision Processes (MDPs) satisfying \emph{linear Bellman completeness}---a fundamental setting where the Bellman backup of any linear value function remains linear. While statistically tractable, prior computationally efficient algorithms are either limited to small action spaces or require strong oracle assumptions over the feature space.
We provide a computationally efficient algorithm for linear Bellman complete MDPs with \emph{deterministic transitions}, stochastic initial states, and stochastic rewards. For finite action spaces, our algorithm is end-to-end efficient; for large or infinite action spaces, we require only a standard argmax oracle over actions. Our algorithm learns an $\veps$-optimal policy with sample and computational complexity polynomial in the horizon, feature dimension, and $1/\veps$.
\end{abstract}

\renewcommand{\u}{\bm{u}}
\renewcommand{\w}{\bm{w}}

\section{Introduction}

Reinforcement Learning (RL) is a standard framework for sequential decision making \citep{sutton2018reinforcement}, with applications ranging from game playing \citep{mnih2015human,silver2016mastering} and robotics \citep{andrychowicz2020learning} to training and fine-tuning large language models \citep{ouyang2022training,schulman2017proximal}.
In many of these applications, the state and action spaces are large or infinite, making function approximation essential to generalize across states and actions \citep{bertsekas1996neuro,sutton2018reinforcement}.
A classical and widely studied approach is \emph{linear value function approximation}, where one represents value functions as linear functions of pre-specified feature vectors.
This approach dates back decades \citep{bradtke1996linear,sutton2018reinforcement} and continues to underpin many modern empirical methods \citep{mnih2015human,schulman2017proximal}.

\paragraph{Linear Bellman completeness.}
This paper focuses on \emph{linear Bellman complete} MDPs, a fundamental setting in RL with function approximation.
In this setting, we are given feature maps $\phi_h:\cX\times\cA\to\reals^d$ for each step $h$ in the horizon (where $\cX$ and $\cA$ are the state and action spaces, respectively), and we assume that the \emph{Bellman backup} of any linear function (in these features) remains linear. The Bellman backup at step $h$ maps a function $f_{h+1}:\cX\times\cA\to\reals$ to the function $(x,a)\mapsto \E[\br_h + \max_{a'}f_{h+1}(\x_{h+1},a')\mid \x_h = x, \a_h =a]$; it is the core operator underlying value iteration.
Formally, linear Bellman completeness means that for any linear function $\langle \phi_{h+1}(x,a), \theta \rangle$, its Bellman backup at step $h$ can be expressed as $\langle \phi_h(x,a), \theta' \rangle$ for some coefficient vector $\theta'$.
This structural assumption is attractive for several reasons:
\begin{itemize}[leftmargin=*,itemsep=0.5pt]
  \item \emph{Generality:} Linear Bellman completeness strictly generalizes several well-studied RL models, including linear MDPs \citep{jin2020provably,yang2019sample}, and also captures the Linear Quadratic Regulator (LQR), a cornerstone model in control theory \citep{bertsekas1996neuro,wu2024computationally}.
  \item \emph{Necessity for value iteration:} It is a necessary condition for the success of least-squares value iteration (LSVI), one of the most classical algorithms in RL \citep{bradtke1996linear}. Without Bellman completeness, LSVI can diverge arbitrarily \citep{tsitsiklis1996feature}. %
  \item \emph{Statistical tractability:} When the learner has access to sufficiently exploratory data, linear Bellman completeness is sufficient for statistically efficient learning \citep{munos2005error,munos2008finite,zanette2020learning}. %
\end{itemize}

\paragraph{The computational challenge.}
Despite its simplicity and statistical tractability, linear Bellman completeness appears to exhibit a \emph{statistical-computational gap}.
While we know that polynomial sample complexity is achievable, the proposed algorithms that attain it are computationally intractable \citep{zanette2020learning,jin2021bellman}.
This gap is particularly notable given the centrality of linear Bellman completeness to the theory of value iteration.

The core difficulty is \emph{exploration}: algorithms must collect data with sufficient coverage to learn a near-optimal policy.
Classical approaches that rely on data from a fixed exploratory policy can fail if that policy does not provide adequate coverage of the state-action space, leading to poor sample efficiency or suboptimal policies \citep{munos2005error}.
Overcoming this requires careful exploration, but the two principal exploration paradigms that have proven successful in simpler RL settings both break down for linear Bellman complete MDPs:
\begin{itemize}[leftmargin=*,itemsep=0.5pt]
  \item \emph{Optimistic bonuses.} In tabular MDPs and linear MDPs, algorithms such as UCBVI \citep{azar2017minimax,jin2018q} and LSVI-UCB \citep{jin2020provably} add exploration bonuses to the reward that scale inversely with the frequency of visits to each state (or feature direction). This ``local optimism'' approach is computationally efficient but requires that the value function class be complete with respect to these exploration bonuses. Unfortunately, for linear Bellman completeness, standard bonus functions (e.g., quadratic functions of the features) do not preserve linearity under Bellman backups, breaking this approach \citep{zanette2020learning}.
  \item \emph{Global optimism and version spaces.} An alternative approach is to maintain a version space of plausible value functions and construct policies that are optimistic with respect to this set \citep{jiang2017contextual,zanette2020learning,jin2021bellman,du2021bilinear}. While statistically efficient, these methods require solving complex nonconvex optimization problems at each step, making them computationally intractable even in the linear case.
\end{itemize}

As a result, the following fundamental question has remained open:
\begin{center}
\emph{Is there an algorithm that learns a near-optimal policy in a linear Bellman complete MDP using $\poly(d, H, 1/\veps)$ samples and computation time?}
\end{center}
Even the special case where the action space is a small constant was only recently resolved \citep{golowich2024linear}, but their approach has complexity exponential in $|\cA|$. Moreover, their algorithm requires a bounded $Q$-function parameter assumption. As \cite{wu2024computationally} shows, this assumption is restrictive and limits generality (we discuss this limitation in more detail in \cref{sec:related}).
Prior to this, \citet{zanette2020provably} gave the first computationally efficient algorithm (FRANCIS), but under a strong reachability assumption requiring that every feature direction be reachable.

\paragraph{Our setting and contributions.}
We study the setting where the \emph{state transition dynamics are deterministic} but the initial state and rewards are stochastic \citep{wen2016efficient,du2020agnostic,wu2024computationally}.
This setting is practically relevant (e.g., in simulators, game environments, or robotic systems with known physics but stochastic rewards), and it remains theoretically challenging: the learner must still explore to estimate value functions from noisy reward feedback and handle stochastic initial states.
Importantly, even this restricted setting has remained open for large action spaces. While \citet{wu2024computationally} studied this setting, their approach is not end-to-end computationally efficient: it relies on a linear optimization oracle over the feature space (one that solves problems of the form $\sup_{(x,a)\in \cX\times \cA}\phi_h(x,a)^\top \theta$ for various $\theta$). Such an oracle is not needed in, e.g., the linear MDP setting, and its requirement represents a significant gap. Resolving this discrepancy is an important step toward understanding the general problem.

Our main contribution is a computationally efficient algorithm for linear Bellman complete MDPs with deterministic transitions, large action spaces, and stochastic rewards and initial states. The algorithm requires only an $\argmax$ oracle over actions (i.e., $\argmax_{a\in\cA}\phi_h(x,a)^\top\theta$ for a given $x$ and $\theta$), which is the standard oracle assumption used even in computationally efficient linear MDP algorithms \citep{jin2020provably}. For finite action spaces, this oracle is trivially implementable by enumeration, yielding an end-to-end efficient algorithm.

At a high level, the algorithm proceeds layer-by-layer, constructing a small set of policies that induce feature vectors spanning the relevant subspaces at each layer. The key insight is that deterministic dynamics enable efficient discovery of all relevant feature directions without requiring the linear optimization oracle that prior work relies upon.

While the paradigm of first learning a policy cover and then planning is well-established \citep{du2019provably,mhammedi2023efficient,mhammedi2023representation}, adapting it to linear Bellman completeness requires new ideas. The novelty in our approach lies in: (i)~exploiting deterministic dynamics to efficiently construct the cover without solving complex optimization problems, and (ii)~reducing stochastic-reward planning to deterministic-reward planning by first estimating the instantaneous reward parameters---this reduction allows us to bypass the bounded $Q$-function parameter assumption required by prior work.

\paragraph{Organization.}
\Cref{sec:prelims} formalizes the setting and introduces notation.
\Cref{sec:framework} introduces the exploration-exploitation framework underlying our approach and discusses the challenges of applying it to linear Bellman completeness. \Cref{sec:main_guarantee} presents our algorithm and guarantees.
The appendix contains related work (\cref{sec:related}), proofs of the main theorems, and supporting technical lemmas.

\section{Preliminaries}
\label{sec:prelims}
We model the environment as a Markov Decision Process (MDP), formally given by a tuple $\cM = (\cX, \cA, H, P, r^\star)$. Here $\cX$ and $\cA$ denote the state and action spaces, both of which may be large or infinite; when $\cA$ is finite, we write $A \coloneqq |\cA|$. The horizon is $H \in \mathbb{N}$. The reward function is specified by $r^\star = \{r_h^\star\}_{h=1}^H$ where each $r_h^\star : \cX \times \cA \to [0,1]$. The transitions are governed by $P = \{P_h\}_{h=0}^H$ where $P_h : \cX \times \cA \to \Delta(\cX)$ for $h \geq 1$, and $P_0(\cdot \mid \varnothing)$ specifies the initial state distribution.

A deterministic policy is a sequence $\pi = \{\pi_h : \cX \to \cA\}_{h=1}^H$. We denote by $\Pi$ the set of all deterministic policies and focus on such policies throughout. Under policy $\pi \in \Pi$, a trajectory $(\x_1, \a_1, \bm{r}_1), \ldots, (\x_H, \a_H, \bm{r}_H)$ is generated via $\a_h = \pi_h(\x_h)$, $\bm{r}_h \sim r_h^\star(\x_h, \a_h)$, $\x_{h+1} \sim P_h(\cdot \mid \x_h, \a_h)$, starting from $\x_1 \sim P_0(\cdot \mid \varnothing)$. We use $\P^\pi[\cdot]$ and $\E^\pi[\cdot]$ to denote probability and expectation with respect to this sampling process.

The state-action value function ($Q$-function) and state value function are defined as $Q_h^\pi(\cdot,\cdot) \coloneqq \E^\pi\bigl[\sum_{\tau=h}^H \bm{r}_\tau \mid \x_h=\cdot, \a_h=\cdot\bigr]$ and $V_h^\pi(\cdot) \coloneqq Q^\pi_h(\cdot,\pi_h(\cdot))$, respectively. We write $V_h^\star(\cdot)\coloneqq \sup_\pi V_h^\pi(\cdot)$\arxiv{ for the optimal value}.
More generally, for an arbitrary collection of reward functions $r_{1:H}$ (where each $r_h:\cX\times\cA\to\reals$), we define the value functions under $r_{1:H}$ as
\[
Q_h^\pi(x,a;r_{1:H}) \coloneqq \E^\pi\left[\sum_{\tau=h}^H r_\tau(\x_\tau,\a_\tau) \mid \x_h=x,\a_h=a\right],
\quad
V_h^\pi(x;r_{1:H}) \coloneqq Q_h^\pi(x,\pi_h(x);r_{1:H}).
\]
When $r_{1:H}=r^\star_{1:H}$ (the MDP's reward functions), this recovers the standard notation: $V_h^\pi(\cdot)=V_h^\pi(\cdot;r^\star_{1:H})$.

\begin{definition}[Linear Bellman completeness]
\label{def:lbc}
For each layer $h\in[H]$, let $\phi_h:\cX\times\cA\to\reals^d$ be a feature map and define $
  \cB_h \coloneqq \left\{\theta \in \reals^d : |\langle \phi_h(x,a), \theta \rangle| \le 1 \ \forall (x,a) \in \cX \times \cA\right\}.$

The MDP $\cM$ is defined to be \emph{linear Bellman complete} with respect to the feature mappings $(\phi_h)_{h\in[H]}$ if, for each $h \in [H]$, there is a mapping $\cT_h : \cB_{h+1} \to \cB_h$ so that,
\[
 \forall \theta \in \cB_{h+1}, \forall (x,a) \in \cX \times \cA, \quad \langle \phi_h(x,a), \cT_h \theta \rangle = \E \left[ \max_{a' \in \cA} \langle \phi_{h+1}(\x_{h+1}, a'), \theta \rangle \mid \x_h = x, \a_h = a \right].
\]
\end{definition}

\begin{assumption}[Deterministic transitions and Linear Bellman completeness]
  \label{ass:bellman_complete}
  We assume:
  \begin{enumerate}[leftmargin=*,itemsep=0.5pt]
      \item \emph{Deterministic transitions:} For each $h \in [H]$, the transition $P_h(\cdot \mid x, a)$ is supported on a single state for all $(x,a) \in \cX \times \cA$. (The initial state $\x_1$ and rewards $\bm{r}_h$ remain random.) \label{item:deterministic}
      \item \emph{Linear Bellman completeness:} The MDP $\cM$ satisfies \cref{def:lbc} with respect to $\phi_{1:H}$. \label{item:lbc}
  \end{enumerate}
  \end{assumption}

\begin{assumption}[Boundedness and linearity]
\label{ass:features}
We assume the feature maps $\phi_{1:H}$ are known and\arxiv{ that}:
\begin{enumerate}[leftmargin=*,itemsep=0.5pt]
    \item \emph{Boundedness:} For all $h \in [H]$ and $(x,a) \in \cX \times \cA$, $\|\phi_h(x,a)\|_2 \le 1$. \label{item:feature_boundedness}
    \item \emph{Reward linearity:} For all $h \in [H]$, $\E[\bm{r}_h\mid \x_h=x,\a_h=a] = \langle w^\star_h, \phi_h(x,a) \rangle$ for some unknown $w^\star_h \in \reals^d$ with $\|w^\star_h\|\leq \sqrt{d}$. \label{item:reward_linearity}
\end{enumerate}
\end{assumption}

\paragraph{Boundedness assumptions.}
Several related definitions of linear Bellman completeness have been considered in the literature, some of which impose additional $\ell_2$-norm constraints that can be overly restrictive.
We briefly discuss two such assumptions and explain why they are problematic:
\begin{enumerate}[leftmargin=*,itemsep=0.5pt]
  \item \emph{Bounded $Q$-function parameters.} Some works \citep{golowich2024linear,zanette2020learning,zanette2020provably} assume that the parameter vector $\theta$ of any $Q$-function satisfies $\|\theta\|_2 \le R$ for some polynomial $R$ in $d$. As shown in \citep[Section 3.1]{wu2024computationally}, this additional assumption is restrictive and weakens the generality of linear Bellman completeness.
  \item \emph{Non-expansiveness of Bellman backup.} \citet{song2022hybrid} assume that Bellman backups do not increase the $\ell_2$-norm of parameters. \neurips{This fails even in simple tabular MDPs \citep{wu2024computationally}.}\arxiv{This is even more restrictive and fails in simple tabular MDPs \citep{wu2024computationally}.}
\end{enumerate}
In contrast, \cref{item:reward_linearity} of \cref{ass:features} only assumes bounded parameters $\|w^\star_h\| \le \sqrt{d}$ for the \emph{instantaneous reward} functions.
This is a standard assumption in the linear MDP literature \citep{jin2020provably} and does not constrain the $Q$-function parameters.\footnote{Importantly, even with bounded reward parameters and Bellman completeness, the $Q$-function parameters can still grow unboundedly---but this does not pose a problem for our approach, which circumvents the need to bound $Q$-function parameters (see \cref{sec:main_guarantee}).}

\paragraph{Action argmax oracle.}
Throughout, we assume that ties in $\argmax_{a\in \cA}$ operations are broken in an arbitrary but fixed manner.
For large or infinite action spaces, we assume access to an oracle that returns this argmax, i.e., computes $\argmax_{a\in \cA} \phi_h(x,a)^\top \theta$ for any given state $x$ and vector $\theta$.

\arxiv{\paragraph{Goal.}
Our goal is to derive an algorithm that returns a policy
$\widehat{\pi}$ with $\E[V_1^\star(\x_1) - V_1^{\widehat{\pi}}(\x_1)] \le \veps$ with probability $1-\delta$,
with sample complexity polynomial in $(d,H,1/\veps,\log(1/\delta))$ and a polynomial number of calls to the action $\argmax$ oracle (for finite actions, this oracle is trivially implementable by enumeration).}
\neurips{\paragraph{Goal.} Under \cref{ass:bellman_complete} and \cref{ass:features}, we seek an algorithm returning a policy $\widehat{\pi}$ with $\E[V_1^\star(\x_1) - V_1^{\widehat{\pi}}(\x_1)] \le \veps$ with probability $1-\delta$, sample complexity $\poly(d,H,1/\veps,\log(1/\delta))$, and $\poly$ calls to an action $\argmax$ oracle (trivially implementable for finite actions).}

\paragraph{Additional notation.}
Since transitions and policies are deterministic, the trajectory from any $\x_1=x$ under $\pi$ is unique. We write $\phi^\pi_h(x)$ for the feature $\phi_h(\x_h, \a_h)$ along this trajectory.

\section{Algorithmic Framework and Challenges}
\label{sec:framework}

In this section, we introduce the exploration-exploitation paradigm \citep{kearns2002near,du2019provably,zanette2020provably,jin2020reward} that underlies our algorithm, discuss the key challenges that arise when applying it to the linear Bellman complete setting, and motivate the approach we develop in \cref{sec:main_guarantee}.
 
\paragraph{The exploration-exploitation paradigm.}
Beyond the tabular and linear MDP settings, a successful algorithmic paradigm in the theoretical RL literature is to \emph{separate exploration from exploitation}.
In this paradigm, the algorithm proceeds in two phases:
\begin{itemize}[leftmargin=*,itemsep=0.5pt]
  \item \emph{Phase I (Exploration):} Build a \emph{policy cover} layer-by-layer.
  Intuitively, a policy cover is a small set of policies such that for any state, there exists a policy in the set that reaches that state with near-optimal probability (or density).
  \item \emph{Phase II (Exploitation):} Use the policy cover to learn a near-optimal policy, typically via dynamic programming.
\end{itemize}
This paradigm has been applied successfully to several structured MDP settings, including block MDPs \citep{misra2020kinematic,mhammedi2023representation} (which generalize the tabular setting by incorporating rich observations), sparse linear MDPs \citep{golowich2024exploring}, low-rank MDPs \citep{mhammedi2023efficient}, and observable POMDPs \citep{golowich2022learning}.

\paragraph{Handling error compounding.}
A key challenge in provable RL beyond tabular settings is preventing approximation errors from \emph{compounding across layers} of the MDP.
\citet{mhammedi2023representation,mhammedi2023efficient,golowich2024exploring} address this challenge by decomposing the state space into two types of states:
\begin{itemize}[leftmargin=*,itemsep=0.5pt]
  \item \emph{High-reachability states:} States that are reachable with sufficient probability (or density) by some policy. The algorithms target covering these states and achieve a \emph{multiplicative guarantee}: for each such state, the covering policy reaches it with probability comparable to the optimal probability.
  \item \emph{Low-reachability states:} States that are not reachable by any policy with meaningful probability. These states can be \emph{safely ignored}.
\end{itemize}
The ``safe ignoring'' argument proceeds as follows.
Consider an imaginary MDP where each low-reachability state leads deterministically to a terminal state.
With non-negative rewards, any policy has higher value in the original MDP than in this imaginary MDP.
The key insight from \citep{mhammedi2023representation,mhammedi2023efficient,golowich2024exploring} is that, precisely because the ignored states have low reachability, the value of any policy differs only negligibly between the original and imaginary MDPs. Consequently, an optimal policy in the imaginary MDP is near-optimal in the original MDP.
Together, the multiplicative guarantee and the safe-ignoring argument imply that it suffices to focus on high-reachability states, where coverage errors can be controlled layer-by-layer without compounding.
Once a valid policy cover for these states is computed, exploitation via dynamic programming is straightforward.

\paragraph{Can this paradigm be applied to linear Bellman completeness?}
A natural question is whether this paradigm can be applied to learn a near-optimal policy in the linear Bellman complete setting.
To assess this, we need to verify three things:
\begin{enumerate}[leftmargin=*,itemsep=0.5pt]
  \item \label{item:q1} \emph{Intrinsic objective for cover construction.} Can one define an objective (or a set of objectives) that, when optimized, yields a notion of policy cover useful for learning a near-optimal policy in a second phase (e.g., via dynamic programming)?
  \item \label{item:q2} \emph{Layer-by-layer computation without compounding.} Can such a cover be approximately computed layer-by-layer without errors compounding between layers?
  \item \label{item:q3} \emph{Exploitation from the cover.} Given such a cover, can we perform dynamic programming to compute a near-optimal policy?
\end{enumerate}
In the remainder of this section, we partially address these questions and identify the main obstacles. In \cref{sec:main_guarantee}, we present our algorithm that overcomes all of them.

\paragraph{Barycentric spanners as policy covers.}
We start by addressing Question~\ref{item:q1}.
In the linear Bellman complete setting, there is a natural class of policies (argmax policies with a fixed tie-breaking rule) that contains the optimal policy and has the following property: for any policy $\pi$ and any $\pi'$ in this class, the expected value $\E^{\pi}[Q^{\pi'}_h(\x_h,\a_h)]$ is linear in the expected feature vector $\E^{\pi}[\phi_h(\x_h, \a_h)]$ \citep{golowich2024linear}; that is, there exists $\theta_{h}^{\pi'} \in \reals^d$ such that $\E^{\pi}[Q^{\pi'}_h(\x_h,\a_h)] = \E^{\pi}[\phi_h(\x_h,\a_h)]^\top \theta_{h}^{\pi'}$. Thus, policy evaluation reduces to estimating the parameter $\theta_h^{\pi'}$.
By rolling in with a policy $\pi$ up to layer $h$ and rolling out with $\pi'$, one can estimate $\theta_h^{\pi'}$ along the direction of $\E^{\pi}[\phi_h(\x_h, \a_h)]$ via regression. This motivates finding a small set of policies whose expected features span those of all other policies, so that data from the covering set suffices to estimate $\theta_h^{\pi'}$ in all relevant directions. This is precisely the notion of a \emph{barycentric spanner} \citep{awerbuch2008online}, also used in the low-rank MDP setting by \citet{mhammedi2023efficient}.
 
Formally, a $C$-barycentric spanner for $\cW \subseteq \reals^d$ is a subset $\{w_1, \ldots, w_d\} \subseteq \cW$ such that every $w \in \cW$ can be written as $w = \sum_{i=1}^d \beta_i w_i$ with $|\beta_i| \le C$.
We use the relaxed $(C, \veps)$-\emph{approximate} spanner \citep{mhammedi2023efficient}, allowing additive error: $\|w - \sum_{i=1}^d \beta_i w_i\| \le \veps$ with $|\beta_i| \le C$. In our setting, $\cW=\{\E^\pi[\phi_h(\x_h, \a_h)]: \pi \in \Pi\}$.
A spanner for $\cW$ guarantees that the corresponding policies cover all relevant feature directions, which is precisely what is needed to estimate $\theta_h^{\pi'}$ as described above.

To compute a spanner, we use the \apxspanner algorithm from \citep{mhammedi2023efficient} (see \cref{alg:spanner}), which tolerates approximate oracles---unlike the original construction of \citep{awerbuch2008online}---at the cost of additive error $\veps$.
The algorithm requires two oracles:
\begin{enumerate}[leftmargin=*,itemsep=0.5pt]
  \item \emph{Approximate linear optimization} ($\apx$): Given $\theta \in \reals^d$ and $h\in[H]$, return $\pihat$ approximately maximizing the functional $\pi \mapsto \theta^\top \E^\pi[\phi_h(\x_h, \a_h)]$ over $\pi \in \Pi$.
  \item \emph{Vector estimation} ($\est$): Given $\pi$, estimate $\E^\pi[\phi_h(\x_h, \a_h)]= \E[\phi_h^\pi(\x_1)]$.
\end{enumerate}
The vector estimation oracle $\est$ is implemented via Monte Carlo sampling (\cref{alg:veceval} in the appendix). The more challenging task is implementing the approximate linear optimization oracle $\apx$. We discuss how to construct such a subroutine in \cref{sec:main_guarantee}.

\paragraph{The challenge of exploitation: unbounded $Q$-function parameters.}
We now partially address Question~\ref{item:q3}.
As argued above, a barycentric spanner provides exactly the coverage needed to estimate value function parameters via regression. Given such a spanner, a natural approach to exploitation is \emph{Policy Search by Dynamic Programming} (PSDP) \citep{bagnell2003policy}, which constructs a policy layer by layer: at each layer starting from $H$, it uses the spanner's coverage to evaluate the $Q$-function of the current policy and selects the greedy action with respect to this $Q$-function.
Formally, PSDP proceeds via a backward loop over layers $\ell = H, \ldots, 1$: at each layer, one rolls in with policies from the cover to reach layer $\ell$, then rolls out with the partially constructed policy $\pihat_{\ell+1:H}$, regressing onto the sum of future rewards to learn a $Q$-function for layer $\ell$.
The policy $\pihat_\ell$ is then constructed as the argmax of this $Q$-function, which remains linear by the result of \citet{golowich2024linear}.
However, obtaining tractable guarantees for the regression problems requires bounded $Q$-function parameters---a restrictive assumption as shown by \citet{wu2024computationally} (see also \cref{sec:prelims}). 

Our approach bypasses this limitation by estimating the instantaneous reward parameters $w^\star_h$ \emph{first} (which are bounded by \cref{ass:features}), constructing deterministic proxy rewards $\wh r_h(x,a) = \phi_h(x,a)^\top \wh w_h$, and then planning over these deterministic rewards.
This decouples the regression problem (which requires only bounded reward parameters) from the planning problem.
Two challenges remain: first, accurate reward estimation requires a policy cover with good feature coverage---bringing us back to Questions~\ref{item:q1} and~\ref{item:q2}; second, even with deterministic proxy rewards, one must still solve the planning problem efficiently. We address both challenges in \cref{sec:main_guarantee}.

\paragraph{The compounding error challenge in linear Bellman completeness.}
Finally, we address Question~\ref{item:q2}.
Unlike block MDPs and low-rank MDPs, the linear Bellman complete setting does \emph{not} provide the structural decomposition into high- and low-reachability states that enables the safe-ignoring argument of \citet{mhammedi2023representation,mhammedi2023efficient}; see \cite{golowich2024linear}.
As a result, coverage errors from approximate spanners can compound across layers: if the spanner at layer $h$ fails to span some relevant feature directions, these gaps propagate and amplify at subsequent layers.
Since \apxspanner can only compute \emph{approximate} spanners, this compounding is unavoidable with the approach of prior work.
Thus, a fundamentally different approach is needed.

\section{An Efficient Algorithm}
\label{sec:main_guarantee}

Our algorithm follows the two-phase paradigm of \cref{sec:framework} and assumes access to an $\argmax$ oracle over actions (\cref{sec:prelims}).
Recall from \cref{sec:framework} that three challenges must be addressed: computing a policy cover for each layer, doing so without error compounding, and solving the planning problem efficiently after estimating reward parameters.
Our key ideas are:
(i)~we introduce \lazyspanner, a subroutine that computes a new type of coverage policies (distinct from the approximate spanners of \cref{sec:framework}) and can be used to \emph{exactly} identify all relevant feature directions at each layer, avoiding error compounding entirely;
and (ii)~we use Fitted $Q$-Iteration (\textsc{FQI}) with these coverage policies to perform exact linear optimization within each layer, solving both the exploration and the deterministic planning problems efficiently. We now describe how FQI with a new type of coverage policies is used in our algorithm for exact linear optimization.

\begin{algorithm}[h]
  \caption{$\lazyspanner(h,\Psi_{1:h-1},
      \veps,\delta)$.}
  \label{alg:notspanner}
  \begin{algorithmic}[1]
    \Require{$h,\Psi_{1:h-1}, \veps, \delta$.}
    \State Initialize $\cS\ind{1}_h \gets \emptyset$ and $\Psi\ind{1}_h \gets \emptyset$.
    \State Let $\ntest$, $\ndir$, $\mboost$, $\nrej$, $\nspan$, $\nfqi$ be as in \eqref{eq:parameters}.
    \For{$t=1,\dots, d+1$}
    \State Set $K\ind{t} \gets d - \mathrm{dim}(\cS\ind{t}_h)$.
    \State {\bf if} $K\ind{t}=0$ {\bf return} $(\cS\ind{t}_h, \Psi\ind{t}_{h})$.
    \State Let $\{\theta\ind{t,j}\}_{j\in [K\ind{t}]}$ be an orthonormal basis of $(\cS\ind{t}_h)^\perp$.
\Statex[1] \algcommentbiglight{Search for reachable directions outside the current subspace $\cS\ind{t}_h$.}
    \For{$i=1,\dots, K\ind{t}$}
    \State Set $r_{1}\ind{t, i, \pm}\equiv \dots r\ind{t,i, \pm}_{h-1}\equiv 0$ and $r\ind{t,i,-}_h(\cdot,\cdot) \equiv -\phi_h(\cdot, \cdot)^\top \theta\ind{t,i}$ and $r\ind{t,i,+}_h(\cdot,\cdot) \equiv \phi_h(\cdot, \cdot)^\top \theta\ind{t,i}$.
    \State Set $\pihat\ind{t,i,-} \gets \textsc{FQI}(h, r\ind{t,i,-}_{1:h}, \Psi_{1:h-1},\nfqi)$ and $\pihat\ind{t,i,+} \gets \textsc{FQI}(h, r\ind{t,i,+}_{1:h}, \Psi_{1:h-1},\nfqi)$. \label{line:fqi_call}
    \State Initialize $\nout^\pm \gets 0$.
    \For{$\ntest$ times}
    \State Sample $(\x^-_1, \a^-_1, \dots, \x^-_h, \a^-_h) \sim \P^{\pihat\ind{t,i,-}}$ and $(\x^+_1, \a^+_1, \dots, \x^+_h, \a^+_h) \sim \P^{\pihat\ind{t,i,+}}$.
    \State {\bf if} $\phi_h(\x^-_h,\a^-_h)\notin \cS\ind{t}_h$ {\bf then} $\nout^- \gets \nout^- + 1$.
    \State {\bf if} $\phi_h(\x^+_h,\a^+_h)\notin \cS\ind{t}_h$ {\bf then} $\nout^+ \gets \nout^+ + 1$.
    \EndFor
    \If{$\nout^+ \geq \nout^-$} \label{line:outlier_compare}
    \State Set $\pihat\ind{t,i} \gets \pihat\ind{t,i,+}$ and $\nout \gets \nout^+$. \label{line:first_outlier}
    \Else
    \State Set $\pihat\ind{t,i} \gets \pihat\ind{t,i,-}$ and $\nout \gets \nout^-$. \label{line:second_outlier}
    \EndIf
    \If{$\nout > \ntest \cdot \frac{\veps}{4 Hd}$} \hfill \algcommentlight{Outlier direction detected} \label{line:outlier} 
    \State $\Psi\ind{t+1}_{h}\gets \Psi\ind{t}_{h}\cup\{\pihat\ind{t,i}\}$.
    \State \multiline{Let $P\ind{t,i}$ denote the distribution of $\phi^\perp_h(\x_h,\a_h) = \mathrm{proj}_{(\cS\ind{t}_h)^\perp} \phi_h(\x_h,\a_h)$, where\\ $(\x_h,\a_h) \sim \P^{\pihat\ind{t,i}}$.}
    \State Define $Q\ind{t,i} = \textsc{RejectionSampling}(P\ind{t,i}, \reals^{d} \setminus \{0\} ,\nrej)$. \hfill \algcommentlight{\cref{alg:rejection-sampling-budgeted}}\label{line:rejection}
    \State Set $v\ind{t,i}\gets \textsc{ComputeOutlierDirection}(Q\ind{t,i}, \ndir, \mboost)$. \hfill \algcommentlight{\cref{alg:getdirection}} \label{line:computeoutlierdirection}
    \State $\cS\ind{t+1}_h\gets \Span\{\cS\ind{t}_h\cup \{v\ind{t,i}\}\}$. \label{line:adddir}
    \State {\bf break}
    \ElsIf{$i=K\ind{t}$}
    \State {\bf return } $(\cS\ind{t}_h, \Psi\ind{t}_{h})$.
    \EndIf
    \EndFor
    \EndFor
    \State \textbf{Return:} $(\cS\ind{t}_h, \Psi\ind{t}_{h})$.
  \end{algorithmic}
\end{algorithm}

\paragraph{FQI with exact linear optimization.}
To avoid the bounded $Q$-function parameter requirement of PSDP, we use FQI \citep{munos2008finite} and perform linear optimization \emph{exactly} within each layer.
To enable this exact linear optimization, we build an additional set of policies $\Psi_h$ at each layer $h$ that satisfy a \emph{coverage property}: for i.i.d.~initial states $\x_1,\x_1\ind{1}, \dots, \x_1\ind{\ncover}$,
\begin{align}
  \P\left[\cS_h \subseteq \Span\left\{\phi^\pi_h(\x_{1}\ind{m}): m\in[\ncover], \pi \in \Psi_{h}\right\}\right] \geq 1- \delta/H^2,
  \label{eq:coverage_property_intro}
\end{align}
where, with high probability over the initial state, $\cS_h$ is a subspace spanned by $\phi_h^{\pi}(\x_1)$ for all $\pi \in \Pi$.
Such a coverage property ensures that sampling enough initial states and rolling in with policies in $\Psi_h$ covers all relevant feature directions.
Thanks to deterministic transitions (\cref{ass:bellman_complete}), the regression problems in \textsc{FQI} can be solved \emph{exactly} via minimum-norm least-squares; coverage then guarantees these solutions are correct for most sampled initial states.
Unlike \citet{wu2024computationally}, our least-squares problems are \emph{unconstrained}, avoiding the need for a linear optimization oracle over features.
At the same time, we also compute an approximate spanner $\Gamma_h$ via \apxspanner.
The reason we need both $\Psi_h$ and $\Gamma_h$ is that while the coverage policies $\Psi_h$ suffice for deterministic regression in \textsc{FQI}, they lack the spanner property needed for stochastic regression.
In Phase II, estimating the reward parameters $w^\star_h$ via ridge regression requires the spanner property of $\Gamma_h$ for accurate estimation across relevant feature directions.

\paragraph{The \lazyspanner\ subroutine.}
The subroutine that computes the coverage policies $\Psi_h$ (along with the subspace $\cS_h$) is \lazyspanner\ (\cref{alg:notspanner}).
Unlike \apxspanner\ or the original barycentric spanner algorithm, \lazyspanner\ does not attempt to maximize the determinant (which would yield a true spanner).
Instead, its goal is simply to: (i)~identify the subspace $\cS_h$ where feature maps land with high probability, and (ii)~find a set of policies $\Psi_h$ that satisfy the coverage property \eqref{eq:coverage_property_intro}.
The formal guarantee (\cref{lem:notspanner_guarantee}) shows that given coverage policies for layers $1,\dots,h-1$, \lazyspanner\ outputs a subspace $\cS_h$ containing the span of all reachable feature vectors at layer $h$ with high probability, along with policies $\Psi_h$ that provide coverage for $\cS_h$.

\lazyspanner\ identifies the subspace $\cS_h$ in a greedy fashion.
It runs an iterative process that searches for policies leading to features that fall \emph{outside} the current subspace $\cS\ind{t}_h$.
Once such a policy is identified (detected by the test in \cref{line:outlier}), it is added to $\Psi_h$.
Next, this policy is used to identify a single ``persistent'' outlier direction $v$ via the subroutine \textsc{ComputeOutlierDirection} in \cref{line:computeoutlierdirection}.
Once this direction is found, the subspace is updated: $\cS\ind{t+1}_h \gets \Span(\cS\ind{t}_h \cup \{v\})$; see \cref{line:adddir}.
Since the ambient dimension is $d$, this process terminates after at most $d$ updates.

\paragraph{The \textsc{ComputeOutlierDirection} subroutine.}
The goal of \textsc{ComputeOutlierDirection} (\cref{alg:getdirection}) is to solve the following abstract problem (formalized in \cref{lem:compute_outlier_direction}): given access to samples from a distribution $P$ over $\reals^d$, identify a direction $v$ such that $v$ lies in the span of $n$ fresh i.i.d.\ samples from $P$ with high probability.
In our concrete setting, we are trying to identify a direction in the orthogonal complement of $\cS\ind{t}_h$.
The relevant distribution $P$ is that of the projected features $\phi^\perp_h(\x_h,\a_h) = \mathrm{proj}_{(\cS\ind{t}_h)^\perp} \phi_h(\x_h,\a_h)$ where $(\x_h,\a_h) \sim \P^{\pihat\ind{t,i}}$, \emph{conditioned on} $\phi_h(\x_h,\a_h) \notin \cS\ind{t}_h$ (i.e., conditioned on the feature falling outside the current subspace).
Since we do not have direct sampling access to this conditional distribution, we approximate it using rejection sampling (\cref{alg:rejection-sampling-budgeted}) in \cref{line:rejection} of \cref{alg:notspanner}.
We only call \textsc{ComputeOutlierDirection} when the current candidate policy has passed the test in \cref{line:outlier}, which detects that an outlier direction is discoverable with sufficient probability.

\begin{algorithm}[!htbp]
  \caption{$\textsc{ComputeSpanner}_{\texttt{fqi}}(\veps,\delta)$: Phase I (Exploration).}
  \label{alg:cover}
  \begin{algorithmic}[1]
    \Require{$\veps, \delta$.}
    \State Let $\nfqi$, $\nveceval$, and $\veps_{\rob}$ be as in \eqref{eq:parameters}.
    \For{$h=1,\dots, H$}
    \State $(\cS_h, \Psi_{h}) \gets \lazyspanner\big(h,\Psi_{1:h-1}, \veps, \frac{\delta}{4H}\big)$.
    \State For $\theta \in \reals^d$, $h\in[H]$, define $r_{h}(\cdot,\cdot;\theta)= \phi_h(\cdot,\cdot)^\top{\theta}$ and $r_{1:h-1}\equiv 0$.
    \State For $\theta\in\reals^d$, let $\apx(\theta)=\FQI(h, r_{1:h}(\cdot,\cdot;\theta),\Psi_{1:h-1},\nfqi)$. \label{line:fqi}\label{line:linopt} \hfill\algcommentlight{\cref{alg:fqi}.}
    \State For $\pi\in\Pim$, define $\est(\pi)=\veceval(h,\phi_h, \pi, \nveceval)$. \label{line:est} \hfill\algcommentlight{\cref{alg:veceval}.}
    \State \label{line:spanner}Set $\Gamma_h =\{\pi_{1}, \dots, \pi_d\} =  \apxspanner(\apx(\cdot), \est(\cdot), 2,  \veps_{\rob})$. \hfill\algcommentlight{\cref{alg:spanner}.}
    \EndFor
    \State \textbf{Return:} $(\cS_{1:H}, \Psi_{1:H},\Gamma_{1:H})$.
  \end{algorithmic}
\end{algorithm}

\renewcommand{\w}{\bm{w}}
\renewcommand{\u}{\bm{u}}

The following lemma summarizes the guarantee of \cref{alg:cover}: it outputs coverage policies $\Psi_{1:H}$ and spanner policies $\Gamma_{1:H}$ with the properties needed for Phase II.
The proof is in \cref{sec:spanner_guarantee_main}.
\begin{lemma}[Guarantee of $\textsc{ComputeSpanner}_{\texttt{fqi}}$]
  \label{thm:cover_guarantee} 
  Consider a call to \cref{alg:cover} with inputs $\veps,\delta$.
  Suppose \cref{ass:features,ass:bellman_complete} hold.
  Then, with probability at least $1-\delta$, the outputs $(\cS_{1:H}, \Psi_{1:H}, \Gamma_{1:H})$ satisfy the following properties:
  \begin{enumerate}[leftmargin=*,itemsep=0.5pt]
    \item \label{item:thm_coverage} \emph{(Coverage)} For all $h\in[H]$ and i.i.d.~$\x_1, \x_1\ind{1},\dots,\x_1\ind{\ncover}$ initial states:
        \begin{align}
          \label{eq:thm_highcoverage}
          \P[\cS_h \subseteq \Span\left\{\phi^\pi_h(\x_{1}\ind{m}): m\in[\ncover], \pi \in \Psi_{h}\right\}] \geq 1- \delta/H^2.
        \end{align}
    \item \label{item:thm_concentration} \emph{(State concentration)} For all $h\in[H]$, $\P[\x_1\in \cX\ind{h}]\geq 1-(h-1) \veps/H$, where
            \begin{align}
              \cX\ind{h} =\bigcap_{\ell \in[h-1]}\left\{x\in \cX \mid  \Span \left\{ \phi^{\pi}_\ell(x) : \pi \in \Pi\right\} \subseteq \cS_{\ell}\right\}. \label{eq:thm_Xdef}
            \end{align}
    \item \label{item:thm_spanner} \emph{(Spanner property)} For all $h\in[H]$, the policy set $\Gamma_h=\{\pi_{1},\dots,\pi_d\}$ satisfies: for any $\pi \in \Pi$, there exist coefficients $\beta_{1},\dots,\beta_d\in[-2,2]$ such that
      \begin{align}
        \nrm*{\E^{\pi}[\phi_h(\x_h,\a_h)]- \sum_{j=1}^d \beta_j\cdot \E^{\pi_j}[\phi_h(\x_h,\a_h)]} \leq 25 d \veps.
        \label{eq:cover_spanner_prop}
      \end{align}
  \end{enumerate}
\end{lemma}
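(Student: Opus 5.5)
The proof is an induction over layers $h=1,\dots,H$, with a union bound over per-layer failure events. The inductive hypothesis $\mathcal{E}_{h-1}$ states that for every $\ell\le h-1$ the coverage property \eqref{eq:thm_highcoverage} holds for $\Psi_\ell$, and that $\P[\x_1\in\cX\ind{h}]\ge 1-(h-1)\veps/H$.

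\textbf{Step 1: exactness of FQI.} Assume $\mathcal{E}_{h-1}$. The first task is a lemma about $\FQI(h,r_{1:h},\Psi_{1:h-1},\nfqi)$ when the rewards are deterministic and linear. On the event that the sampled initial states satisfy the coverage inclusions, and on $\x_1\in\cX\ind{h}$, FQI should return a policy that is \emph{exactly} greedy for $r_{1:h}$.

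The argument runs backward. Transitions are deterministic and rewards are deterministic, so each regression target is a noiseless linear function of the features, by Bellman completeness. The minimum-norm least-squares solution therefore agrees with the true parameter on $\Span\{\phi^\pi_\ell(\x_1\ind{m})\}$. By coverage this span contains $\cS_\ell$. For $x\in\cX\ind{h}$, every reachable feature at layer $\ell$ lies in $\cS_\ell$, so the fitted $Q$ agrees with the true $Q$ at every reachable $(x,a)$. Hence the argmax, with fixed tie-breaking, is optimal from $x$.

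Consequently, $\apx(\theta)$ maximizes $\theta^\top\phi^\pi_h(x)$ pointwise for $x\in\cX\ind{h}$. Its suboptimality in expectation is then at most the mass outside $\cX\ind{h}$ times a bound of order $\|\theta\|$, plus the coverage failure probability.

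\textbf{Step 2: guarantee of \lazyspanner.} Next I invoke the \lazyspanner guarantee, \cref{lem:notspanner_guarantee}. This is the main obstacle if it must be proved here; I take it as the referenced lemma. Its proof would use Step 1 together with the outlier test and \cref{lem:compute_outlier_direction}. When \lazyspanner terminates without finding an outlier, every direction $\pm\theta\ind{t,i}$ has small outlier probability under the exact maximizer. This gives $\P[\exists\pi:\phi^\pi_h(\x_1)\notin\cS_h]\le \veps/H$. The reason is that the event ``some $\pi$ leaves $\cS_h$'' means some basis direction of $\cS_h^\perp$ has nonzero inner product, so it is caught by the $\pm$ maximizer for that direction. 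A union over the $K\le d$ directions explains the $\veps/(4Hd)$ threshold.

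Each direction added via \textsc{ComputeOutlierDirection} lies in the span of $\ncover$ fresh samples under the policy that found it, with probability $1-\delta/(dH^2)$. Applying this at most $d$ times and combining with the already covered $\cS\ind{t}_h$ yields \eqref{eq:thm_highcoverage} for $\Psi_h$.

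Intersecting the new event with $\cX\ind{h}$ gives $\P[\x_1\in\cX\ind{h+1}]\ge 1-h\veps/H$. This closes the induction for Items 1 and 2.

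\textbf{Step 3: spanner property.} For Item 3, I apply the \apxspanner guarantee of \citet{mhammedi2023efficient} with $C=2$. It requires $\apx$ to be $\veps_{\rob}$-approximately optimal over unit $\theta$, and $\est$ to be $\veps_{\rob}$-accurate. Accuracy of $\est$ follows from Hoeffding's inequality, since $\|\phi_h\|\le1$, with $\nveceval$ samples and a union bound over the polynomially many calls. Approximate optimality of $\apx$ follows from Step 1: the gap is at most $2(\P[\x_1\notin\cX\ind{h}]+\delta/H^2)\lesssim\veps$. The guarantee then yields coefficients in $[-2,2]$ with additive error $O(d\,\veps_{\rob})$, and choosing $\veps_{\rob}$ as in \eqref{eq:parameters} gives the bound $25d\veps$.

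\textbf{Step 4: failure probability.} Finally, the per-layer failure events have probability $\delta/(4H)$ from \lazyspanner and $\delta/(4H)$ from $\est$, so a union bound over $H$ layers gives total probability at least $1-\delta$.

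The delicate point is Step 1. Exact FQI correctness on $\cX\ind{h}$ must hold uniformly over the data-dependent $\theta$ used inside \apxspanner and \lazyspanner, so I would condition on the coverage event holding for fresh FQI samples in each call and union bound over the polynomially many calls.
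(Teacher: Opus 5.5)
Your skeleton matches the paper's: induction over $h$, exactness of \FQI on $\cX\ind{h}$, \lazyspanner for Items~1--2, and \apxspanner with \FQI and \veceval as oracles for Item~3. Your reachable-pairs argument for exact \FQI is also a valid alternative to the Bellman-residual decomposition of \cref{lem:fqi}. However, the quantitative steps that make up most of this lemma's proof are missing or incorrect.

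\emph{Item~1 and the failure budget.} \cref{lem:notspanner_guarantee} only guarantees that each fixed $u\in\cS_h$ lies in the span of $\nspan$ samples with probability at least $1/2$. Your claim that each added direction is spanned by $\ncover$ fresh samples with probability $1-\delta/(dH^2)$ is not delivered by \textsc{ComputeOutlierDirection}. Its exchangeability bound $d/(n+1)$ decays only polynomially in $n$, and at the chosen $\ndir,\nspan$ it gives only $1-O(1/d)$. The missing idea is the block amplification of \cref{lem:uniform-amplify-subspace}. Since $\ncover=\lceil\log_2(H^2d/\delta')\rceil\cdot\nspan$, split the samples into independent blocks of size $\nspan$; each block captures a fixed basis vector of $\cS_h$ with probability at least $1/2$, and you then union bound over a basis. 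This actually yields failure probability $\delta'/H^2$, and your final paragraph needs exactly this stronger level. Union-bounding per-call coverage failures over the polynomially many \FQI calls per layer (each with fresh samples) is affordable at level $\delta'$, but not at the $\delta/H^2$ stated in Item~1. Your Step~4 also omits these coverage failures altogether; the paper charges $\delta/(4H)+(h-1)\delta/H^2+\delta/(4H)$ per layer.

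\emph{Item~3.} \FQI is not an $\veps_{\rob}$-approximate maximizer. By \cref{lem:fqi_as_apx}, its gap for unit $\theta$ is $2\P[\x_1\notin\cX\ind{h}]\le 2\veps_{\texttt{conc}}$ with $\veps_{\texttt{conc}}=(h-1)\veps/H$. This is up to order $\veps$, far above $\veps_{\rob}=\veps/(192dH^2)$. \cref{prop:spanner} separates the accuracy input $\veps_{\rob}$ from the oracle error $\veps'\le 2\veps_{\texttt{conc}}+\veps_{\rob}$, and gives error $6d(\veps_{\rob}+2\veps')\le 18d\veps_{\rob}+24d\veps_{\texttt{conc}}\le \veps/(10H^2)+24d\veps\le 25d\veps$. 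So the $25d\veps$ is driven by the state-concentration loss, not by an $O(d\veps_{\rob})$ error as you assert. Moreover, your Step~3 folds the coverage failure probability $\delta/H^2$ into the optimization gap. That would add a term $24d\delta/H^2$ to the spanner error, which is not controlled by $\veps$ since $\delta$ and $\veps$ are unrelated. That probability belongs in the confidence budget: you must condition on the coverage event, as your last paragraph does but Step~3 does not.
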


The proof (in \cref{sec:spanner_guarantee_main}) proceeds by induction over layers. Below, we sketch how \lazyspanner achieves the coverage (\cref{item:thm_coverage}) and state concentration (\cref{item:thm_concentration}) properties at a single layer $h$, assuming they hold for layers $1,\dots,h-1$.

\paragraph{Proof sketch for the \lazyspanner guarantee (\cref{lem:notspanner_guarantee}).}
Fix a layer $h$ and suppose coverage policies $\Psi_{1:h-1}$ for previous layers are available.
\lazyspanner iterates over an orthonormal basis $\{\theta\ind{t,j}\}$ of $(\cS\ind{t}_h)^\perp$, where $\cS\ind{t}_h$ is the subspace of feature directions discovered so far at iteration $t$ (initialized to $\emptyset$), and uses \textsc{FQI} with proxy rewards $r_h(\cdot,\cdot) = \pm\phi_h(\cdot,\cdot)^\top \theta\ind{t,j}$ (and $r_{1:h-1} \equiv 0$) to find policies that maximize the features' absolute projection outside the current subspace.
A crucial property, enabled by deterministic transitions and the coverage of $\Psi_{1:h-1}$, is that the resulting policies $\pihat$ satisfy $\pm\phi_h^{\pihat}(x)^\top \theta\ind{t,j} \ge \pm\phi_h^{\pi}(x)^\top \theta\ind{t,j}$ for all $\pi \in \Pi$ and all $x \in \cX\ind{h}$ \emph{simultaneously}---not just in expectation (\cref{cor:fqi}).
The mechanism is as follows: with deterministic transitions, the regression targets in \textsc{FQI} (\cref{alg:fqi}) are noiseless; the minimum-norm least-squares estimator $\wh\theta_\ell$ then satisfies the normal equations $\Sigma_\ell(\wh\theta_\ell - \tilde\theta_\ell)=0$ (\cref{lem:noiseless-regression-orthogonality}), where $\tilde\theta_\ell$ is the parameter of the Bellman backup of \textsc{FQI}'s current value function estimate at layer $\ell+1$ (which exists by linear Bellman completeness) and $\Sigma_\ell = \sum_{(x,a,y) \in \cD_\ell} \phi_\ell(x,a)\phi_\ell(x,a)^\top$ is the empirical covariance built from the \textsc{FQI} dataset $\cD_\ell$ (which consists of rollouts from $\Psi_\ell$). This means the Bellman residual is orthogonal to the column space of $\Sigma_\ell$.
Because $\Psi_{1:h-1}$ covers $\cS_{1:h-1}$, the column space of $\Sigma_\ell$ contains all feature vectors $\phi_\ell^\pi(x)$ for $x \in \cX\ind{h}$, and thus the Bellman residual vanishes exactly on these states.

This pointwise guarantee is what makes the outlier detection test valid:
if $\pihat$ achieves the pointwise maximum of $\pm\phi_h^\pi(x)^\top \theta\ind{t,j}$ over $\pi$ and produces features outside $\cS\ind{t}_h$ with probability below $\veps/(4Hd)$, then \emph{no} policy can produce features outside $\cS\ind{t}_h$ with much higher probability.
Once the test fails for all basis directions of $(\cS\ind{t}_h)^\perp$, the algorithm can safely terminate: $\cS\ind{t}_h$ contains the span of all reachable features.

When an outlier \emph{is} detected, \textsc{ComputeOutlierDirection} identifies a persistent direction $v$ from the conditional distribution of features outside $\cS\ind{t}_h$.
The key tool is a finite-sample exchangeability result (\cref{thm:span}): among $n+1$ i.i.d.\ samples from any distribution over $\reals^d$, the probability that the last sample falls outside the span of the first $n$ is at most $d/(n+1)$.
Setting $n \gg d$ ensures that $v$ lies in the span of future rollouts with high probability.
As the subspace grows from $\cS\ind{t}_h$ to $\cS\ind{t+1}_h = \Span(\cS\ind{t}_h \cup \{v\})$, the coverage guarantee extends by induction:
any $u \in \cS\ind{t+1}_h$ decomposes into a $\cS\ind{t}_h$-component (covered by the induction hypothesis) and a $v$-component (covered by the new policy $\pihat$); a union bound shows that the failure probability increases by at most $1/(2d)$ per expansion.
Since $\dim(\cS_h) \le d$, the outer loop terminates after at most $d$ expansions, and the total failure probability remains below $1/2$.

\begin{algorithm}[!htbp]
  \caption{$\textsc{PolicyOpt}_{\texttt{fqi}}(\Psi_{1:H-1},\Gamma_{1:H},n,\lambda)$: Phase II (Policy optimization)}
  \label{alg:fqi-stochastic-appendix}
  \begin{algorithmic}[1]
    \Require{$\Psi_{1:H-1},\Gamma_{1:H},n,\lambda>0$.}
    \Statex[0] \algcommentbiglight{Estimating the parameters of the reward functions at every layer}
    \For{$\ell = 1, \ldots, H$}
    \State Set $\cD_\ell\gets \emptyset$.
    \For{$\pi\in\Gamma_\ell$}
    \For{$n$ times}
    \State Sample $(\x_1, \a_1,\br_1,\dots,\x_\ell,\a_\ell, \bm{r}_\ell) \sim \P^{\pi}$.  \hfill \algcommentlight{Sample from policy $\pi$ at layer $\ell$}
    \State Update $\cD_{\ell}\gets \cD_{\ell}\cup \{(\x_\ell,\a_\ell, \bm{r}_\ell)\}$.
    \EndFor
    \EndFor
    \State Set $\Sigma_\ell \gets \sum_{(x,a,r)\in \cD_\ell} \phi_\ell(x,a)\phi_\ell(x,a)^\top$.
    \State Set $\wh w_\ell \gets \left(\lambda I + \Sigma_\ell\right)^{-1} \cdot \sum_{(x,a,r)\in \cD_{\ell}} \phi_\ell(x, a) \cdot r$.
    \State Define deterministic proxy reward: $\wh r_\ell(\cdot,\cdot) \coloneqq \phi_\ell(\cdot,\cdot)^\top \wh w_\ell$.
    \EndFor
    \Statex[0] \algcommentbiglight{Planning with deterministic FQI}
    \State Call deterministic FQI with proxy rewards: $\pihat \gets \textsc{FQI}(H, \wh r_{1:H}, \Psi_{1:H-1}, n)$. \hfill \algcommentlight{\cref{alg:fqi}}
    \State \textbf{Return:} $\pihat$.
  \end{algorithmic}
\end{algorithm}

\paragraph{Phase II: Policy optimization with stochastic rewards.}
Given the coverage policies $\Psi_{1:H}$ and spanner policies $\Gamma_{1:H}$ computed in Phase~I, we now turn to exploitation.
As discussed in \cref{sec:framework}, directly applying PSDP to stochastic rewards requires bounded $Q$-function parameters, which may not hold since Bellman backups can amplify parameter norms arbitrarily. Our approach bypasses this by estimating reward parameters first and reducing to deterministic planning.
Phase II (\cref{alg:fqi-stochastic-appendix}) implements this strategy: it estimates the reward parameters $w^\star_h$ via regression on rollouts from the spanner policies $\Gamma_{1:H}$, then constructs deterministic proxy rewards and optimizes them using \textsc{FQI}.

The coverage property of $\Psi_{1:H-1}$ ensures that \textsc{FQI} can solve the value function regression problems exactly (in closed form), while the spanner property of $\Gamma_{1:H}$ ensures accurate reward parameter estimation.
Together, these properties guarantee that \cref{alg:fqi-stochastic-appendix} outputs a near-optimal policy for the original stochastic reward MDP; see \cref{sec:policyopt-fqi} for the formal statement and proof.

Combining Phase I (\cref{alg:cover}) and Phase II (\cref{alg:fqi-stochastic-appendix}), we obtain the following end-to-end guarantee, showing that the complexity is independent of the size of $\cA$ (the proof is in \cref{sec:mainproof}).

\begin{theorem}[End-to-end suboptimality guarantee]
  \label{thm:main}
  Fix $\veps, \delta \in (0,1)$ and horizon $H \in \mathbb{N}$. Suppose \cref{ass:features,ass:bellman_complete} hold.
  Define $L \coloneqq \max\{1, \log(dH/(\delta\veps))\}$.
  Consider the \arxiv{following }procedure:
  \begin{enumerate}[leftmargin=*,itemsep=0.5pt]
    \item Run \cref{alg:cover} with inputs $(\veps', \delta/4)$ where $\veps' \coloneqq \veps/(c_0 d^2 H^2 L)$ for a sufficiently large universal constant $c_0$, obtaining outputs $(\cS_{1:H}, \Psi_{1:H}, \Gamma_{1:H})$.
    \item Run $\textsc{PolicyOpt}_{\texttt{fqi}}(\Psi_{1:H-1}, \Gamma_{1:H}, n, 1)$ (\cref{alg:fqi-stochastic-appendix}) with \neurips{$n \coloneqq \max\left\{\ncover,\, c_2 \cdot \frac{d^4 H^2 L^2}{\veps^2}\right\}$, \label{eq:n-policyopt}}\arxiv{
    \begin{equation}
      n \coloneqq \max\left\{\ncover,\, c_2 \cdot \frac{d^4 H^2 L^2}{\veps^2}\right\},
      \label{eq:n-policyopt}
    \end{equation}
    }
    where $\ncover$ is defined in \eqref{eq:parameters} instantiated with $\veps'$, and $c_2$ is a sufficiently large universal constant, obtaining output policy $\pihat$.
  \end{enumerate}
  Then, with probability at least $1-\delta$, we have \neurips{$ \E\left[V_1^\star(\x_1) - V_1^{\pihat}(\x_1)\right] \le \veps.$} \arxiv{
  \[
    \E\left[V_1^\star(\x_1) - V_1^{\pihat}(\x_1)\right] \le \veps.
  \]}
\end{theorem}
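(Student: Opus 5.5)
The plan is to combine \cref{thm:cover_guarantee} (Phase I) with a guarantee for Phase II. Phase II has two parts: estimating the reward parameters, and then planning exactly with FQI on the resulting deterministic proxy rewards. Run Phase I with $(\veps',\delta/4)$ and condition on the event $\cE_1$ of \cref{thm:cover_guarantee}, which has probability at least $1-\delta/4$. Let $\cE_2$ be the event that the ridge estimates $\wh w_\ell$ are accurate, and let $\cE_3$ be the event that the FQI datasets collected in Phase II satisfy the coverage property. On $\cE_1\cap\cE_2\cap\cE_3$ I will bound the suboptimality by a three-term decomposition. Write $\pi^\star$ for an optimal policy and $\wh r_{1:H}$ for the proxy rewards. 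Then
\begin{align*}
\E[V_1^\star(\x_1)-V_1^{\pihat}(\x_1)] &= \E\bigl[V_1^{\pi^\star}(\x_1)-V_1^{\pi^\star}(\x_1;\wh r_{1:H})\bigr] + \E\bigl[V_1^{\pi^\star}(\x_1;\wh r_{1:H})-V_1^{\pihat}(\x_1;\wh r_{1:H})\bigr]\\
&\quad + \E\bigl[V_1^{\pihat}(\x_1;\wh r_{1:H})-V_1^{\pihat}(\x_1)\bigr].
\end{align*}
The first and third terms are reward-estimation errors, and the middle term is the planning error.

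\emph{Reward-estimation terms.} Transitions are deterministic, so for any $\pi$ the difference equals $\sum_{\ell}\E^{\pi}[\phi_\ell(\x_\ell,\a_\ell)]^\top(w^\star_\ell-\wh w_\ell)$. I will bound each summand using the spanner property \eqref{eq:cover_spanner_prop}, which gives $\E^\pi[\phi_\ell]=\sum_j\beta_j\E^{\pi_j}[\phi_\ell]+e$ with $|\beta_j|\le 2$ and $\|e\|\le 25d\veps'$. This splits each summand into two pieces.
\begin{itemize}
\item The main piece is $\sum_j|\beta_j||\E^{\pi_j}[\phi_\ell]^\top(w^\star_\ell-\wh w_\ell)|$. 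Write $\bar\phi_j=\E^{\pi_j}[\phi_\ell]$. By Jensen, $n\,\bar\phi_j\bar\phi_j^\top\preceq\E[\Sigma_\ell]$. Matrix concentration then gives $\Sigma_\ell+I\succeq c\,n\,\bar\phi_j\bar\phi_j^\top-O(L)\,I$, up to lower-order terms. The standard self-normalized ridge bound gives $\|w^\star_\ell-\wh w_\ell\|_{\Sigma_\ell+I}\lesssim\sqrt{dL}+\sqrt d$. Combining these yields $|\bar\phi_j^\top(w^\star_\ell-\wh w_\ell)|\lesssim\sqrt{dL/n}$. Summing over $j\le d$ and $\ell\le H$ gives $O(Hd^{1.5}\sqrt{L/n})$, which is at most $\veps/4$ once $n\ge c_2d^4H^2L^2/\veps^2$. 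This is generous; a cruder bound such as $d^2\sqrt{L/n}$ per layer also fits.
\item The residual piece is $\|e\|\,\|w^\star_\ell-\wh w_\ell\|$. This needs $\|\wh w_\ell\|\lesssim\poly(d,n)$, which holds with $\lambda=1$ since $\|\wh w_\ell\|\le\sum|r|\le n d$. Then the choice $\veps'=\veps/(c_0d^2H^2L)$ must absorb this polynomial. This is the step I most need to check; if the crude norm bound is too lossy, I will instead project $e$ onto directions that are well covered.
\end{itemize}

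\emph{Planning term.} Here I invoke the exact-FQI guarantee (\cref{cor:fqi}), applied with horizon $H$ and the deterministic linear rewards $\wh r$. Linear Bellman completeness plus linear rewards make the backups linear. I need to double-check this, since $\cB_h$ involves boundedness; rescaling $\wh r$ by its sup-norm should handle it. With $n\ge\ncover$, coverage \eqref{eq:thm_highcoverage} holds for all layers outside probability $H\cdot\delta/H^2$. On that event the minimum-norm regressions are exact, so $\pihat$ is pointwise optimal for $\wh r$ on every $x\in\cX\ind{H}$.

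\emph{Concentration set and final assembly.} Since $\P[\x_1\notin\cX\ind{H}]\le\veps'$, the middle term is at most $\veps'$ times a bound on $|V(\cdot;\wh r)|$, which is $\le H\max\|\wh w\|$. Alternatively, on $\cX\ind{H}$ both policies' features lie in the $\cS_\ell$, so I can compare there and pay only $2H\veps'\cdot\poly$ off the set. I would also redo the reward-estimation bounds on $\cX\ind{H}$ to avoid the large $\|\wh w\|$ factor, using that values under the true rewards lie in $[0,H]$. Finally, a union bound over $\cE_1,\cE_2,\cE_3$ gives total failure probability at most $\delta$, and the three terms sum to at most $\veps$.

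I expect the main obstacle to be the interplay between the additive spanner error $25d\veps'$ and the potentially large proxy-reward parameters. Handling it cleanly requires either a polynomial bound on $\|\wh w_\ell\|$ or restricting all comparisons to $\cX\ind{H}$ and the subspaces $\cS_\ell$.
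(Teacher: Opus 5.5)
Your decomposition is the one behind \cref{lem:policy-opt}: reward-estimation error for the comparator and for $\pihat$, plus exact FQI planning on $\cX\ind{H}$ via \cref{lem:fqi}. However, the step you flagged does fail as written, and it breaks two terms, not one. The bound $\|\wh w_\ell\|\le nd$ cannot be absorbed by $\veps'$, however large $c_0$ is. Since $n\ge\ncover\ge\nspan$, and \eqref{eq:parameters} instantiated at $\veps'$ gives $\nspan\ge 8Hd\,\ndir/\veps'\ge 4096\,Hd^3/\veps'$, you get $\veps' n d\ge 4096\,Hd^4$: shrinking $\veps'$ inflates $n$ proportionally. This affects two terms. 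The first is the spanner residual $\sum_\ell 25d\veps'\|w^\star_\ell-\wh w_\ell\|$. The second is the off-$\cX\ind{H}$ part of the planning term, which is at most $2H\veps'\max_\ell\|\wh w_\ell\|$. With the crude norm bound, your upper bounds on both are of order $\poly(d,H)$ rather than $O(\veps)$. Your fallbacks do not avoid this. The planning comparison off $\cX\ind{H}$ inherently involves the proxy values $\wh r$. Restricting the reward-estimation terms to $\cX\ind{H}$, where the spanner property no longer applies directly, reintroduces a correction $\E^\pi[\phi_\ell(\x_\ell,\a_\ell)\mathbb{I}\{\x_1\notin\cX\ind{H}\}]^\top(w^\star_\ell-\wh w_\ell)$ of the same size $\veps'\|w^\star_\ell-\wh w_\ell\|$.

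The missing ingredient is already in your proposal. With $\lambda=1$ we have $\Sigma_\ell+I\succeq I$, so the self-normalized bound $\|w^\star_\ell-\wh w_\ell\|_{\Sigma_\ell+I}\le\zeta=O(\sqrt{dL})$ also gives $\|w^\star_\ell-\wh w_\ell\|_2\le\zeta$. Hence $\|\wh w_\ell\|\le\sqrt d+\zeta=O(\sqrt{dL})$, independent of $n$ up to logarithms; this is \cref{cor:ridge-norm}, the constant $C$ in the paper. The residual then totals $O(Hd^{3/2}\sqrt{L}\,\veps')$ and the off-set planning term $O(H\sqrt{dL}\,\veps')$, both at most $\veps/8$ for $\veps'=\veps/(c_0d^2H^2L)$. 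With this fix your argument closes.

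Your main-piece argument is a legitimate alternative to the paper's. The paper applies Cauchy--Schwarz to the empirical errors, $\sum_{(x,a,r)\in\cD_\ell}|\phi_\ell(x,a)^\top(w^\star_\ell-\wh w_\ell)|\le\sqrt{dn}\,\zeta$. It then passes to population means by uniform convergence over the ball of radius $C$ (\cref{lem:uniform-convergence}), which itself needs the Euclidean norm bound. Your matrix-Chernoff route, made precise as $\Sigma_\ell+\Lambda I\succeq\tfrac12(n\E^{\pi_j}[\phi_\ell\phi_\ell^\top]+\Lambda I)$ with $\Lambda=O(L)$, gives $\bar\phi_j^\top(\Sigma_\ell+I)^{-1}\bar\phi_j\le 2\Lambda/n$. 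The resulting per-direction error is $O(L\sqrt{d/n})$ rather than $\sqrt{dL/n}$, which the $L^2$ in $n$ still absorbs.

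Two minor corrections. The planning step should cite \cref{lem:fqi} (general linear rewards) rather than \cref{cor:fqi}. Linearity of the backup of $\Qhat_{\ell+1}$ with an unbounded parameter follows from positive homogeneity of $\max$, i.e.\ by rescaling $\wh\theta_{\ell+1}$ (not $\wh r$) into $\cB_{\ell+1}$.
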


\clearpage
\neurips{
\begin{ack}
Alexander Rakhlin acknowledges support from ARO through award W911NF-21-1-0328, Simons Foundation, and the NSF through awards DMS-2031883 and PHY-2019786, the DARPA AIQ program, and AFOSR FA9550-25-1-0375. Nneka Okolo acknowledges support from the Institute for Data, Systems, and Society (IDSS) via the Norbert Wiener Fellowship.
\end{ack}
}
\arxiv{
\section*{Acknowledgments}
Alexander Rakhlin acknowledges support from ARO through award W911NF-21-1-0328, Simons Foundation, and the NSF through awards DMS-2031883 and PHY-2019786, the DARPA AIQ program, and AFOSR FA9550-25-1-0375. Nneka Okolo acknowledges support from the Institute for Data, Systems, and Society (IDSS) via the Norbert Wiener Fellowship.
}
\neurips{\bibliographystyle{plainnat}}
\arxiv{\bibliographystyle{plainnat}}
\clearpage
\appendix
\section*{Appendix}\label{supp_material}
\addcontentsline{toc}{section}{Appendix}

\startcontents[appendix]
\printcontents[appendix]{}{-1}{\setcounter{tocdepth}{2}}
\vspace{1em}
 
\clearpage
\part{Additional Related Work}
\label{part:addrelated}
\section{Related Works}
\label{sec:related}
\paragraph{Tabular and linear MDPs.}
The simplest RL setting is the \emph{tabular} case, where both the state and action spaces are finite and small enough that every state-action pair can be visited multiple times.
Here, computationally efficient and near-optimal algorithms have been known for decades, including classical methods such as $Q$-learning \citep{watkins1992q} and value iteration \citep{bertsekas1996neuro}, as well as modern optimistic exploration algorithms such as UCBVI \citep{azar2017minimax}, $Q$-learning-UCB \citep{jin2018q}, and variants achieving optimal sample complexity \citep{zhang2020almost}.

The \emph{linear MDP} setting \citep{jin2020provably,yang2019sample} extends the tabular case to large state and action spaces by assuming that the transition dynamics and rewards are linear in known feature maps $\phi_h(x,a) \in \reals^d$.
Specifically, there exist an unknown signed measure $\mu_h$ and an unknown vector $\theta_h \in \reals^d$ such that for all bounded measurable functions $f: \cX \to \reals$ and all $(x,a) \in \cX \times \cA$, $
\mathbb{E}[f(\x_{h+1}) \mid \x_h=x,\a_h=a] = \langle \phi_h(x,a), \mu_h(f) \rangle$, where $\mu_h(f) \coloneqq \int f(x')  d\mu_h(x')$, and $r_h(x,a) = \langle \phi_h(x,a), \theta_h \rangle,$ with $\|\theta_h\|\leq \sqrt{d}$.
This structure allows efficient exploration via optimistic bonuses, leading to computationally efficient algorithms such as LSVI-UCB \citep{jin2020provably}, with recent refinements achieving optimal or near-optimal rates \citep{he2023nearly,zanette2020frequentist}.

\paragraph{Statistically efficient approaches.}
Linear Bellman completeness strictly generalizes linear MDPs but does not impose any structure on the transition function---it only requires that Bellman backups of linear value functions remain linear \citep{zanette2020learning}.
This makes the setting significantly more challenging computationally, since techniques that work for linear MDPs (e.g., adding quadratic exploration bonuses) no longer apply: quadratic functions do not remain linear under Bellman backups in this setting.

Early work established that linear Bellman completeness is \emph{statistically} tractable.
\citet{munos2005error,munos2008finite} showed that with sufficiently exploratory data, fitted $Q$-iteration succeeds under Bellman completeness.
\citet{zanette2020learning} proposed ELEANOR, an algorithm that achieves polynomial sample complexity in the online setting by maintaining a version space and performing global optimistic planning.
Similarly, \citet{jin2021bellman} proposed GOLF using Bellman eluder dimension, and \citet{du2021bilinear} studied bilinear classes.
However, all these algorithms are \emph{computationally intractable}: they require solving nonconvex optimization problems to implement optimistic planning over complex version spaces, even when the value function class is linear.

\paragraph{Reachability and constant actions.}
The first computationally efficient algorithm for linear Bellman completeness was FRANCIS \citep{zanette2020provably}, which works under a strong \emph{reachability} assumption that for every direction in feature space, there exists a policy whose expected features have non-trivial projection onto that direction.
The question of computationally efficient learning without reachability remained open until very recently.
\citet{golowich2024linear} made a breakthrough by introducing a \emph{local optimism} approach that constructs exploration bonuses preserving Bellman linearity.
This was the first polynomial-time algorithm for the general setting.
However, their approach has a critical limitation: the sample and computational complexity scale as $\mathrm{poly}(d, H, 1/\veps)^{O(|\cA|)}$, exponential in the number of actions.
As they note, this restricts their method to settings where $|\cA|$ is a small constant (e.g., $|\cA| = 2$ or $3$). Additionally, their algorithm requires a bounded $Q$-function parameter assumption: that the parameter $\theta$ satisfying $Q_h(\cdot,\cdot) = \phi_h(\cdot,\cdot)^\top \theta$ has $\|\theta\|_2 \le R$ for some polynomial $R$ in $d$. As shown by \citet{wu2024computationally}, this assumption is restrictive and limits generality.

\paragraph{Deterministic dynamics setting.}
The special case of deterministic transition dynamics has been studied as a stepping stone toward the general problem, and is practically relevant for applications such as game environments and robotic systems with known kinematics.
\citet{wen2016efficient} provided a polynomial-time algorithm when both transitions and rewards are deterministic. For large action spaces, we develop an approach based on exact identification of relevant feature directions (see \cref{sec:main_guarantee}).
\citet{du2020agnostic} extended this to stochastic rewards under a \emph{positive suboptimality gap} assumption, meaning that suboptimal policies have value strictly bounded away from optimal.
Most recently, \citet{wu2024computationally} studied the same setting as ours: deterministic transitions with stochastic rewards and initial states, without assuming a suboptimality gap.
Their algorithm uses a randomization-based approach called ``null space randomization,'' which adds exploration noise only in the null space of previously collected data.
However, their algorithm requires solving constrained regression problems that rely on a \emph{linear optimization oracle} over the feature space, specifically the ability to compute $\sup_{(x,a)\in \cX \times \cA} \phi_h(x,a)^\top \theta$ for any $\theta$.
Our work removes this oracle assumption for finite action spaces, achieving end-to-end computational efficiency using only standard regression and value function evaluation. For infinite action spaces, we require only an $\argmax$ oracle over actions (i.e., $\argmax_{a\in \cA} \phi_h(x, a)^\top \theta$ for a given $x$ and $\theta$), which optimizes only over actions rather than state-action pairs. This is a strictly weaker requirement and is the standard oracle assumption in computationally efficient algorithms for linear MDPs \citep{jin2020provably,yang2019sample}.

\paragraph{Policy covers and spanning arguments.}
Our algorithm is inspired by the policy cover framework \citep{du2019provably,misra2020kinematic}, which constructs a small set of policies that collectively ``cover'' the state space.
This approach has been successful in related problems such as representation learning in RL \citep{mhammedi2023representation,mhammedi2023efficient,golowich2023exploring} and learning in POMDPs \citep{golowich2022learning}.
A key challenge in applying policy covers to linear Bellman completeness is that coverage errors compound across layers: if the policy cover at layer $h$ fails to span some relevant feature directions, these gaps propagate and amplify at subsequent layers, causing the cover to miss increasingly more of the feature space as the horizon grows.
Our contribution is showing how deterministic dynamics help address this challenge. We develop machinery (\lazyspanner) that exactly identifies all relevant feature directions at each layer, avoiding the error compounding entirely (see \cref{sec:main_guarantee}).

\paragraph{Exploration via randomization.}
Randomization has proven to be a powerful alternative to bonus-based exploration in RL.
A prominent example is \emph{Randomized Least-Squares Value Iteration} (RLSVI) \citep{osband2016generalization}, which adds Gaussian noise to least-squares estimates and has been shown to achieve near-optimal worst-case regret for linear MDPs \citep{agrawal2021improved,zanette2020frequentist}.
\citet{ishfaq2021randomized} developed randomization algorithms for general function approximation under bounded eluder dimension and Bellman completeness.
In preference-based RL, randomization has led to the first computationally efficient algorithm with near-optimal regret for linear MDPs \citep{wu2024making}.

However, these randomization approaches face challenges when applied to linear Bellman completeness:
many inject noise larger than the estimation error to ensure optimism, which can cause exponential growth of parameter values over the horizon.
Truncating these values works for low-rank MDPs but is problematic under linear Bellman completeness, since truncation may break the linearity property.
Our work and the concurrent work of \citet{wu2024computationally} address this by restricting noise to the null space of collected data, though we differ in our approach to ensuring computational efficiency.

\paragraph{Broader structural conditions.}
Linear Bellman completeness is captured by several broader structural conditions studied in the RL literature, including \emph{Bellman rank} \citep{jiang2017contextual}, \emph{witness rank} \citep{sun2019model}, \emph{Bellman eluder dimension} \citep{jin2021bellman}, the \emph{decision-estimation coefficient} \citep{foster2021statistical}, and \emph{bilinear classes} \citep{du2021bilinear}.
These frameworks provide different lenses for understanding when RL is statistically tractable.
While statistically efficient algorithms exist for these settings, computationally efficient algorithms remain unknown in general.

\paragraph{Offline setting} In the offline setting, \citet{golowich2024role} recently established a computationally efficient algorithm for linear Bellman completeness, leveraging different techniques suitable for the batch setting.
 
\clearpage
\part{Proofs}
\label{part:proofs}
\section{Choice of Parameters for $\textsc{ComputeSpanner}_{\texttt{fqi}}$ (\cref{alg:cover})}
\label{sec:parameters}
For a sufficiently large universal constant $c_1>0$, we set 
\begin{equation}
  \begin{aligned}
    \veps_{\rob} &\coloneqq \frac{\veps}{192 d H^2}, \\
    N_{\texttt{iter}} &\coloneqq d + \left\lceil\frac{d}{2}\log_2\frac{100d}{\veps_{\rob}^2}\right\rceil, \\
    \delta' &\coloneqq \frac{\delta}{8 H d^2 N_{\texttt{iter}}}, \\
    \nveceval &\coloneqq c_1 H^4 d^2 \veps^{-2} \log(1/\delta'), \\
    \ntest &\coloneqq \frac{128 H^2 d^2\log(4/\delta')}{\veps^2}, \\
    \ndir &\coloneqq 512 d^2 \log(256d/\delta'), \\
    \mboost &\coloneqq 2048 d^2 \log(4/\delta'), \\
    \nrej &\coloneqq \frac{8 H d \log \left(\left(\ndir + \ndir^2 \mboost\right)/\delta'\right)}{\veps}, \\
    \nspan &\coloneqq \left\lceil \frac{16Hd \sqrt{\log (4d) \ndir}} {\veps} + \frac{32 Hd \log(4d)}{\veps} + \frac{8Hd\ndir}{\veps}\right\rceil, \\
    \ncover = \nfqi &\coloneqq \left\lceil  \log_2(H^2 d/\delta')  \right\rceil \cdot \nspan.
  \end{aligned}
  \label{eq:parameters}
\end{equation}

\section{Guarantee of \lazyspanner}
\label{sec:lazyspanner}
\begin{lemma}[Discovery of a new direction] \label{lem:notspanner}
  Consider a call to \lazyspanner with inputs $h$, $\Psi_{1:h-1}$, $\veps$, and $\delta \in (0,1)$, and let $\delta'$ be as in \eqref{eq:parameters}. Suppose \lazyspanner reaches step $i\geq 1$ in iteration $t\geq 1$, and let $\pihat\ind{t,i,\pm}$ be the policies returned by the \textsc{FQI} calls on \cref{line:fqi_call} of \cref{alg:notspanner}.
  Then, conditioned on $\pihat\ind{t,i, \pm}$, with probability at least $1-\delta'$ (over the randomness of the $\ntest$ test samples used to compute $\nout^\pm$ in \cref{alg:notspanner}):
  \arxiv{
  \begin{align}
    \left|\frac{\nout^-}{\ntest} - \P\left[\phi_h^{\pihat\ind{t,i,-}}(\x_1)\notin \cS\ind{t}_h \right] \right| \leq \sqrt{\frac{2\log(4/\delta')}{\ntest}} \quad \text{and} \quad \left|\frac{\nout^+}{\ntest} - \P\left[\phi_h^{\pihat\ind{t,i,+}}(\x_1)\notin \cS\ind{t}_h \right] \right| \leq \sqrt{\frac{2\log(4/\delta')}{\ntest}}. \label{eq:notspanner}
  \end{align}
  }
    \neurips{
  \begin{equation}
  \label{eq:notspanner}
  \begin{gathered}
    \left|\frac{\nout^-}{\ntest} - \P\left[\phi_h^{\pihat\ind{t,i,-}}(\x_1)\notin \cS\ind{t}_h \right] \right| \leq \sqrt{\frac{2\log(4/\delta')}{\ntest}} \\
    \text{and} \qquad \qquad \qquad \qquad \qquad \qquad \qquad \qquad \qquad \qquad \qquad \qquad \qquad \qquad \qquad \qquad \\
     \left|\frac{\nout^+}{\ntest} - \P\left[\phi_h^{\pihat\ind{t,i,+}}(\x_1)\notin \cS\ind{t}_h \right] \right| \leq \sqrt{\frac{2\log(4/\delta')}{\ntest}}.
  \end{gathered}
  \end{equation}
  }
\end{lemma}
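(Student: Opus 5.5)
This is a direct application of Hoeffding's inequality to the indicator random variables produced in the test loop, combined with a union bound over the two policies. The first step is to fix (condition on) the policies $\pihat\ind{t,i,-}$ and $\pihat\ind{t,i,+}$ returned by the \textsc{FQI} calls on \cref{line:fqi_call}, together with the current subspace $\cS\ind{t}_h$. Both are determined before the $\ntest$ test rollouts are drawn. The test rollouts use fresh, independent samples, so conditioned on these quantities, the $\ntest$ trajectories $(\x^-_1,\a^-_1,\dots,\x^-_h,\a^-_h)$ are i.i.d.\ draws from $\P^{\pihat\ind{t,i,-}}$, and likewise for the $+$ trajectories.

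Next, I would observe that transitions are deterministic (\cref{ass:bellman_complete}) and the policies are deterministic. Hence each layer-$h$ pair $(\x^\pm_h,\a^\pm_h)$ is a deterministic function of the initial state, and $\phi_h(\x^\pm_h,\a^\pm_h)=\phi^{\pihat\ind{t,i,\pm}}_h(\x^\pm_1)$. Consequently, the indicators
\[
Z^\pm_k \coloneqq \mathbb{I}\bigl\{\phi_h(\x^{\pm}_{h},\a^{\pm}_{h})\notin \cS\ind{t}_h\bigr\},\qquad k\in[\ntest],
\]
are i.i.d.\ Bernoulli with mean $p^\pm\coloneqq\P[\phi_h^{\pihat\ind{t,i,\pm}}(\x_1)\notin \cS\ind{t}_h]$, and $\nout^\pm=\sum_k Z^\pm_k$. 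Measurability of the event $\{\phi\notin\cS\}$ is unproblematic because $\cS\ind{t}_h$ is a closed subspace.

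Hoeffding's inequality for $[0,1]$-valued variables then gives
\[
\P\bigl[|\nout^\pm/\ntest-p^\pm|>\sqrt{2\log(4/\delta')/\ntest}\bigr]\le 2\exp(-4\log(4/\delta'))\le \delta'/2.
\]
This is in fact much smaller than needed: the form $\sqrt{\log(4/\delta')/(2\ntest)}$ would already give $\delta'/2$ per sign, and the stated deviation is larger. A union bound over the two signs $\pm$ yields total failure probability at most $\delta'$, which is exactly \eqref{eq:notspanner}.

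There is no real obstacle here. The only point needing care is the conditioning: the test samples must be independent of the randomness used inside \textsc{FQI}, and the rollouts from the two policies within one round may be drawn jointly or separately without affecting either marginal statement.
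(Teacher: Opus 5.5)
Your proposal is correct and matches the paper's proof, which simply applies Hoeffding's inequality to the i.i.d.\ Bernoulli indicators $\mathbb{I}\{\phi_h^{\pihat\ind{t,i,\pm}}(\x_1)\notin \cS\ind{t}_h\}$. Your conditioning on $\cS\ind{t}_h$, the computation $2\exp(-4\log(4/\delta'))\le\delta'/2$, and the union bound over the two signs fill in details the paper leaves implicit.
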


\begin{proof}
  The result follows by applying Hoeffding's inequality to the random variables $\mathbb{I}\{\phi_h^{\pihat\ind{t,i,\pm}}(\x_1)\notin \cS\ind{t}_h\}$, which are i.i.d.\ Bernoulli random variables with mean $\P\left[\phi_h^{\pihat\ind{t,i,\pm}}(\x_1)\notin \cS\ind{t}_h \right]$.
\end{proof}

\begin{lemma}[Computing outlier direction] \label{lem:compute_outlier}
  Consider a call to \lazyspanner with inputs $h$, $\Psi_{1:h-1}$, $\veps$, and $\delta\in (0,1)$, and let $\delta'$ be as in \eqref{eq:parameters}. Suppose that \lazyspanner reaches step $i\geq 1$ of iteration $t\geq 1$, and let $\pihat\ind{t,i}$ be the policy computed on \cref{line:first_outlier,line:second_outlier} of \cref{alg:notspanner}. Furthermore, assume that the if condition on \cref{line:outlier} of the algorithm is true, and let $v\ind{t,i}$ be the output of \textsc{ComputeOutlierDirection} in \cref{line:computeoutlierdirection}.
  Then, conditioned on $\pihat\ind{t,i}$ and $\nout$, with probability at least $1-2\delta'$ (over the randomness of $v\ind{t,i}$), the following holds for i.i.d.\ random variables $\x_1, \x_1\ind{1}, \dots, \x_1\ind{\ndir}$:
  \begin{align}
     & \P\left[v\ind{t,i} \in \Span\{\phi_h^{\pihat\ind{t,i}}(\x\ind{1}_1)^\perp, \dots, \phi_h^{\pihat\ind{t,i}}(\x\ind{\ndir}_1)^\perp\} \mid \phi_h^{\pihat\ind{t,i}}(\x\ind{m}_1) \notin \cS\ind{t}_{h}, \forall m \in[\ndir] \right] \nn \\
     & \quad \ge  \mathbb{I}\left\{p \geq \frac{\veps}{8 Hd}\right\} \cdot\left(1 - \frac{d}{\ndir+1} - 2\sqrt{\frac{\log(2\ndir/\delta')}{2\mboost}} - \sqrt{\frac{\log(2/\delta')}{2\ndir}}\right),
    \label{eq:confidence_compute_outlier}
  \end{align}
  where $p = \P\left[\phi_h^{\pihat\ind{t,i}}(\x_1) \notin \cS\ind{t}_h\right]$ and $\phi_h^{\pihat\ind{t,i}}(\cdot)^\perp =\mathrm{Proj}_{(\cS\ind{t}_h)^\perp}(\phi_h^{\pihat\ind{t,i}}(\cdot))$ denotes the orthogonal projection of $\phi_h^{\pihat\ind{t,i}}(\cdot)$ onto $(\cS\ind{t}_h)^\perp$ (note that $\cS\ind{t}_h$ is a subspace).
\end{lemma}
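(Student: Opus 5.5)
The plan is to reduce the statement to the abstract guarantee for \textsc{ComputeOutlierDirection} (\cref{lem:compute_outlier_direction}). That lemma is stated for exact i.i.d.\ samples from a distribution over $\reals^d$, so the work is to show that, with high probability, rejection sampling supplies exactly such samples from the right conditional law. Throughout, condition on $\pihat\ind{t,i}$ and $\nout$. These determine $\cS\ind{t}_h$ and the law $P\ind{t,i}$ of $\phi_h^{\pihat\ind{t,i}}(\x_1)^\perp$. Two observations make the setup clean:
\begin{itemize}
\item Since $\cS\ind{t}_h$ is a subspace, $\phi_h^{\pihat\ind{t,i}}(x)^\perp \neq 0$ if and only if $\phi_h^{\pihat\ind{t,i}}(x)\notin\cS\ind{t}_h$.
\item Hence the target conditional law $P^\star$ of $P\ind{t,i}$ given $\reals^d\setminus\{0\}$ is the law of $\phi^\perp$ given $\phi\notin\cS\ind{t}_h$.
\end{itemize}
Fresh draws $\x_1\ind{1},\dots,\x_1\ind{\ndir}$, conditioned on the event $\{\phi_h^{\pihat\ind{t,i}}(\x_1\ind{m})\notin\cS\ind{t}_h,\ \forall m\}$, are therefore i.i.d.\ from $P^\star$. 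This is exactly the ``fresh samples'' distribution appearing in \cref{lem:compute_outlier_direction}.

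First I dispose of the trivial case. If $p<\veps/(8Hd)$, the indicator makes the right-hand side of \eqref{eq:confidence_compute_outlier} zero, so there is nothing to prove. Assume $p\ge \veps/(8Hd)$ from now on.

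Next I analyze \cref{alg:rejection-sampling-budgeted}. Each request for a sample from $Q\ind{t,i}$ draws up to $\nrej$ rollouts and returns the first nonzero projected feature. It fails only if all $\nrej$ draws land in $\cS\ind{t}_h$, which happens with probability $(1-p)^{\nrej}\le e^{-p\nrej}$. With $p\ge\veps/(8Hd)$ and the choice of $\nrej$ in \eqref{eq:parameters}, this is at most $\delta'/(\ndir+\ndir^2\mboost)$. \textsc{ComputeOutlierDirection} consumes at most $\ndir+\ndir^2\mboost$ samples (candidate generation plus the boosting/validation rounds). A union bound then shows that, with probability at least $1-\delta'$, every request succeeds. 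I will formalize this as a coupling: run the ideal procedure fed with i.i.d.\ samples from $P^\star$ alongside the actual one. A successful rejection-sampling call returns a sample distributed exactly as $P^\star$, independently of the other calls, so on the success event the two procedures output the same $v\ind{t,i}$.

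Then I apply \cref{lem:compute_outlier_direction} to the ideal procedure with distribution $P^\star$. With probability at least $1-\delta'$ it yields an output $v$ satisfying
\[
\P_{\bm y_{1:\ndir}\sim P^\star}\bigl[v\in\Span\{\bm y_1,\dots,\bm y_{\ndir}\}\bigr]\ \ge\ 1-\frac{d}{\ndir+1}-2\sqrt{\frac{\log(2\ndir/\delta')}{2\mboost}}-\sqrt{\frac{\log(2/\delta')}{2\ndir}}.
\]
The three error terms come from the following sources:
\begin{itemize}
\item the exchangeability bound \cref{thm:span} (the $d/(\ndir+1)$ term);
\item Hoeffding estimation of each candidate's span-membership probability from $\mboost$ tests, union-bounded over $\ndir$ candidates;
\item concentration of the average over candidates, so that the selected candidate is nearly as good as the average.
\end{itemize}
A union bound over the rejection-sampling failure event and this event gives total failure probability $2\delta'$. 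Translating the display back via the identification of $P^\star$ in the first paragraph gives \eqref{eq:confidence_compute_outlier}.

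The main obstacle is the coupling step: ensuring that rejection-sampled outputs are \emph{exactly} (not approximately) distributed as $P^\star$ on the success event, and that the sample-count union bound is consistent with the parameter $\nrej$. Everything else is a direct invocation of earlier results.
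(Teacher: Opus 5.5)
Your proposal is correct and follows the paper's proof. You apply \cref{lem:compute_outlier_direction} to the conditional law of the projected features given $\phi_h^{\pihat\ind{t,i}}(\x_1)\notin\cS\ind{t}_h$, and transfer it to the rejection-sampled implementation; your coupling is exactly the content of \cref{lem:transfer-rejection-Rd}. You then use the choice of $\nrej$ to get $(\ndir+\mboost\ndir^2)(1-p)^{\nrej}\le\delta'$ when $p\ge\veps/(8Hd)$, and note the claim is vacuous otherwise.

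One bookkeeping caveat, which the paper's own proof shares: \cref{lem:compute_outlier_direction} at confidence $\delta'$ gives $\log(4\ndir/\delta')$ and $\log(4/\delta')$, not the stated $\log(2\ndir/\delta')$ and $\log(2/\delta')$. To match the statement literally you must invoke it at confidence $2\delta'$ (total failure $3\delta'$) or adjust those constants. Either fix is harmless given the slack in $\ndir$, $\mboost$, and $\delta'$.
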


\begin{proof}
  By \cref{lem:compute_outlier_direction} (the generic guarantee of \textsc{ComputeOutlierDirection}) and \cref{lem:transfer-rejection-Rd} (which accounts for the approximate sampling from $\P^{\pihat\ind{t,i}}[\cdot \mid \phi_h^{\pihat\ind{t,i}}(\x_1) \notin \cS\ind{t}_h]$ via rejection sampling), we get that with probability at least $1 - \delta' - (\ndir+\mboost \ndir^2) \cdot (1- p)^{\nrej}$ ($p$ is as in the lemma's statement),
  \begin{align}
     & \P\left[v\ind{t,i} \in \Span\{\phi_h^{\pihat\ind{t,i}}(\x\ind{1}_1)^\perp, \dots, \phi_h^{\pihat\ind{t,i}}(\x\ind{\ndir}_1)^\perp\} \mid \phi_h^{\pihat\ind{t,i}}(\x\ind{m}_1) \notin \cS\ind{t}_{h}, \forall m \in[\ndir] \right] \nn \\
     & \quad \ge  1 - \frac{d}{\ndir+1} - 2\sqrt{\frac{\log(2\ndir/\delta')}{2\mboost}} - \sqrt{\frac{\log(2/\delta')}{2\ndir}}.
  \end{align}
  When $p \geq \frac{\veps}{8 Hd}$, the choice of $\nrej$ in \eqref{eq:parameters} ensures that $(\ndir+\mboost \ndir^2) (1- p)^{\nrej} \le \delta'$, which implies \eqref{eq:confidence_compute_outlier} with the desired confidence level. When $p < \frac{\veps}{8 Hd}$, \eqref{eq:confidence_compute_outlier} holds trivially with probability 1.
\end{proof}

\begin{lemma}[Guarantee of \cref{alg:notspanner}]
  \label{lem:notspanner_guarantee}
  Consider a call to \lazyspanner with inputs $h$, $\Psi_{1:h-1}$, and $\veps, \delta$. Suppose the following:
  \begin{itemize}[leftmargin=*]
    \item \cref{ass:features,ass:bellman_complete} hold and $\cS_{1:h-1}$ are subspaces of $\reals^d$.
    \item For all $\ell\in[h-1]$, $u \in \cS_{\ell}$, and i.i.d.~$\x_1, \x_1\ind{1},\dots,\x_1\ind{\nspan}$ ($\nspan$ as in \eqref{eq:parameters}) initial states
          \begin{align}
            \label{eq:highcoverage_new}
            \P[u \in \Span\left\{\phi^\pi_\ell(\x_{1}\ind{m}): m\in[\nspan], \pi \in \Psi_{\ell}\right\}] \geq 1/2.
          \end{align}
    \item $\P[\x_1\in \cX\ind{h}]\geq 1- (h-1) \cdot \veps/H$, where $\cX\ind{h} \coloneqq \bigcap_{\ell\in[h-1]}\{x\in \cX \mid \Span\{\phi^\pi_\ell(x):\pi \in \Pi \} \subseteq \cS_\ell \}$.
  \end{itemize}
  Then, with probability at least $1-\delta$, \lazyspanner returns $(\cS\ind{t}_h,\Psi\ind{t}_h)$ such that:
  \begin{align}
    \P\left[ \Span \left\{ \phi^{\pi}_h(\x_1) : \pi \in \Pi\right\} \subseteq \cS_{h}\ind{t},\   \x_1 \in \cX\ind{h}  \right]  \geq  1-\frac{h\veps}{H}, \label{eq:lowsacrifice_nospanner}
  \end{align}
  and for all $u \in \cS_{h}\ind{t}$, and $\x_1, \x_1\ind{1},\dots,\x_1\ind{\nspan}$ i.i.d., we have
  \begin{align}
    \label{eq:highcoverage_nospanner}
    \P[u \in \Span\left\{\phi^\pi_h(\x_{1}\ind{m}): m \in [\nspan],\  \pi \in  \Psi\ind{t}_h \right\}] \geq \frac{1}{2}.
  \end{align}
\end{lemma}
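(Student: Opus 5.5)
We argue by maintaining two invariants along the outer loop of \lazyspanner and checking that they give \eqref{eq:lowsacrifice_nospanner} at termination. Let $\cE$ be the good event on which, for every pair $(t,i)$ reached, the Hoeffding bound of \cref{lem:notspanner} holds and, whenever the test on \cref{line:outlier} fires, the conclusion of \cref{lem:compute_outlier} holds. There are at most $(d+1)d$ pairs, each failing with probability at most $3\delta'$. By the choice of $\delta'$ in \eqref{eq:parameters}, a union bound gives $\P[\cE]\ge 1-\delta$. We work on $\cE$ throughout.

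\textbf{Pointwise optimality of FQI.} We invoke the FQI guarantee (\cref{cor:fqi}), whose hypotheses are exactly the coverage condition \eqref{eq:highcoverage_new} for $\ell\le h-1$ with $\nfqi=\lceil\log_2(H^2d/\delta')\rceil\nspan$ samples (boosting the $1/2$ to $1-\delta'/(H^2d)$ by independent blocks), together with the concentration on $\cX\ind{h}$. It gives $\pm\phi_h^{\pihat\ind{t,i,\pm}}(x)^\top\theta\ind{t,i}\ge \pm\phi_h^{\pi}(x)^\top\theta\ind{t,i}$ for all $\pi$ and all $x\in\cX\ind{h}$. Consequently, if some $\pi$ and some $x\in\cX\ind{h}$ satisfy $\phi^\pi_h(x)^\top\theta\ind{t,i}\ne0$, then at least one of $\pihat\ind{t,i,\pm}$ also has a feature at $x$ with nonzero $\theta\ind{t,i}$-component, hence a feature outside $\cS\ind{t}_h$. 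Writing $B_t=\{x\in\cX\ind{h}:\exists\pi,\ \phi_h^\pi(x)\notin\cS\ind{t}_h\}$ and covering $B_t$ by the $K\ind{t}$ orthonormal directions, some $i$ and some sign capture mass at least $\P[B_t]/(2K\ind{t})$.

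\textbf{Termination and \eqref{eq:lowsacrifice_nospanner}.} Suppose the algorithm returns at step $t$ because every test failed. Then each $\nout^\pm/\ntest\le \veps/(4Hd)$, and the Hoeffding deviation $\sqrt{2\log(4/\delta')/\ntest}=\veps/(8Hd)$. So each true outlier probability is at most $3\veps/(8Hd)$, and $\P[B_t]\le 2K\ind{t}\cdot 3\veps/(8Hd)\le \veps/H$. Combining this with $\P[\x_1\notin\cX\ind{h}]\le(h-1)\veps/H$ yields \eqref{eq:lowsacrifice_nospanner}. Each non-returning iteration strictly increases $\dim\cS\ind{t}_h$, since $v\ind{t,i}\in(\cS\ind{t}_h)^\perp\setminus\{0\}$. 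Hence we return within $d+1$ iterations, and the final return at $K\ind{t}=0$ is trivial.

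\textbf{Coverage \eqref{eq:highcoverage_nospanner} (main obstacle).} By induction on $t$ we prove the stronger claim that for all $u\in\cS\ind{t}_h$, the failure probability is at most $(t-1)/(2d)$. This holds trivially for $\cS\ind{1}_h=\{0\}$. For the step, write $u=u_0+\alpha v$ with $u_0\in\cS\ind{t}_h$ and $v=v\ind{t,i}$ added by $\pihat=\pihat\ind{t,i}$.

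When the test fires, the empirical rate exceeds $\veps/(4Hd)$, so $p\ge\veps/(8Hd)$ and \cref{lem:compute_outlier} applies nontrivially. Among $\nspan$ fresh initial states, a Chernoff bound with the choice of $\nspan$ ensures that with probability at least $1-1/(8d)$, at least $\ndir$ of them are outliers. Conditioned on this, \eqref{eq:confidence_compute_outlier} says $v$ lies in the span of their projections $\phi_h^{\pihat}(\cdot)^\perp$ except with probability at most $1/(8d)$, using the parameter choices for $\ndir$ and $\mboost$. These projections differ from the raw features $\phi_h^{\pihat}$ by elements of $\cS\ind{t}_h$. On the same samples, the induction hypothesis places $u_0$ and a basis of $\cS\ind{t}_h$ in the span of the $\Psi\ind{t}_h$-features. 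The delicate point is that coverage of the whole subspace $\cS\ind{t}_h$, not only of $u_0$, is needed to lift the projected span back to the raw span. I would handle this by applying the induction hypothesis to each of the at most $d$ basis vectors of $\cS\ind{t}_h$, or by strengthening the invariant to cover the whole subspace, and absorbing the cost into the $1/(4d)$ budget. Then $u$ lies in the span with added failure at most $1/(2d)$. Summing over at most $d$ expansions keeps the total failure below $1/2$, which gives \eqref{eq:highcoverage_nospanner}.
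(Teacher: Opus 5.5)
Your route is the paper's: a per-step good event combined by a sequential union bound, and pointwise optimality of $\pihat\ind{t,i,\pm}$ on $\cX\ind{h}$ via \cref{cor:fqi} after boosting \eqref{eq:highcoverage_new} to $\nfqi$ samples. It also uses the same termination bound $2K\ind{t}\cdot 3\veps/(8Hd)\le\veps/H$ through the orthonormal basis of $(\cS\ind{t}_h)^\perp$, and the same coverage induction with budget $(t-1)/(2d)$; your Chernoff-plus-conditional-span step is what the paper packages as \cref{lem:boost-conditional-span}. The lifting issue you flag is real, and the paper's write-up skips it. The paper applies the per-vector hypothesis only to the fixed component $w$. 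But writing $v\ind{t,i}=\sum_m c_m\,\phi_h^{\pihat\ind{t,i}}(\x_1\ind{m})^\perp$ leaves the random vector $\sum_m c_m\,\mathrm{proj}_{\cS\ind{t}_h}\phi_h^{\pihat\ind{t,i}}(\x_1\ind{m})\in\cS\ind{t}_h$, which no fixed-vector guarantee controls.

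Of your two remedies, only the second works. A union bound over a basis of $\cS\ind{t}_h$ multiplies the inherited failure probability by $\dim\cS\ind{t}_h$ at each expansion. The recursion $f_{t+1}\le(t-1)f_t+\tfrac{1}{2d}$ then does not stay within the $1/2$ budget. Instead, take as the invariant
$\P[\cS\ind{t}_h\subseteq\Span\{\phi_h^\pi(\x_1\ind{m}):m\in[\nspan],\pi\in\Psi\ind{t}_h\}]\ge 1-\frac{t-1}{2d}$.
Intersect this event with the event of \cref{lem:boost-conditional-span}. On the intersection, $v\ind{t,i}$, and hence all of $\cS\ind{t+1}_h$, lies in the raw span. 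So the failure grows by at most $1/(2d)$ per expansion with no extra budget, and \eqref{eq:highcoverage_nospanner} follows as a special case.

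Two bookkeeping points remain:
\begin{enumerate}
\item Your event $\cE$ counts only $3\delta'$ per step, but the termination argument also uses the \textsc{FQI} success event. You must include it; this costs another $O(\delta')$ per step, still within $\delta$.
\item The claim $v\ind{t,i}\neq 0$ does not follow from \cref{lem:compute_outlier}, whose conclusion holds vacuously at $v=0$. A zero sample returned by a failed rejection-sampling call gets $\hat p_i=1$ in \textsc{ComputeOutlierDirection} and may be selected. For strict dimension growth you therefore need the event that all rejection-sampling calls succeed. By the proof of \cref{lem:transfer-rejection-Rd} and the choice of $\nrej$, this event fails with probability at most $\delta'$.
\end{enumerate}
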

\begin{proof}
  Suppose that \lazyspanner reaches step $i\geq 1$ of iteration $t\geq 1$, and let $\pihat\ind{t,i, \pm}$ be the policies returned by \textsc{FQI} on \cref{line:fqi_call} of \cref{alg:notspanner}. Let $\nout^{\pm}$ be as in \cref{alg:notspanner} and $(\ntest,\delta')$ be as in \eqref{eq:parameters}. We have the following:
  \begin{itemize}[leftmargin=*]
    \item
          From \eqref{eq:highcoverage_new}, the definition of $\nfqi$ in \eqref{eq:parameters}, and \cref{lem:uniform-amplify-subspace},
          we obtain the following amplified coverage statement: for each fixed $\ell\in[h-1]$,
          if $\x_1,\x_1\ind{1},\dots,\x_1\ind{\nfqi}$ are i.i.d., then
          \begin{align}
            \P\left[\cS_\ell \subseteq \Span\left\{\phi^\pi_\ell(\x_{1}\ind{m}): m\in[\nfqi], \pi \in \Psi_{\ell}\right\}\right] \ge 1-\frac{\delta'}{H^2}.
            \label{eq:reinforced_coverage}
          \end{align}
    \item
          Fix a step $(t,i)$ and recall that $\theta\ind{t,i}\in(\cS_h\ind{t})^\perp$ is the $i$-th basis vector chosen in that step.
          Let $\cE^{\texttt{fqi}}$ be the event that for all $x\in \cX\ind{h}$,
          \begin{align}
            \phi_h^{\pihat\ind{t,i,-}}(x)^\top \theta\ind{t,i}
            \le \min_{\pi \in \Pi} \phi_h^\pi(x)^\top \theta\ind{t,i}
            \quad \text{ and } \quad
            \max_{\pi \in \Pi} \phi_h^\pi(x)^\top \theta\ind{t,i}
            \le \phi_h^{\pihat\ind{t,i,+}}(x)^\top \theta\ind{t,i}.
            \label{eq:corfqi_new}
          \end{align}
          By \cref{cor:fqi} (applied to rewards $r_h(\cdot,\cdot)=\pm\phi_h(\cdot,\cdot)^\top \theta\ind{t,i}$ and $r_{1:h-1}\equiv 0$) and the coverage condition \eqref{eq:reinforced_coverage}, we have
          \begin{align}
            \P\big[\cE^{\texttt{fqi}}\big]\ge 1-\delta'.\label{eq:secondbullet}
          \end{align}
    \item
          Fix a step $(t,i)$ and condition on the realized outputs $\pihat\ind{t,i,-}$ and $\pihat\ind{t,i,+}$ of the two \textsc{FQI} calls on \cref{line:fqi_call}.
          Let $\cE^{\texttt{newdir}}$ be the event that
          \begin{equation}
\label{eq:notspanner_new}
\begin{gathered}
            \left|\frac{\nout^-}{\ntest} - \P\left[\phi_h^{\pihat\ind{t,i,-}}(\x_1)\notin \cS\ind{t}_h \right] \right|
            \le \sqrt{\frac{2\log(4/\delta')}{\ntest}}\\ 
            \text{and}      \qquad \qquad \qquad \qquad \qquad \qquad \qquad \qquad \qquad \qquad \qquad \qquad \qquad \qquad \qquad \qquad \\
            \left|\frac{\nout^+}{\ntest} - \P\left[\phi_h^{\pihat\ind{t,i,+}}(\x_1)\notin \cS\ind{t}_h \right] \right|
            \le \sqrt{\frac{2\log(4/\delta')}{\ntest}}.
          \end{gathered}
          \end{equation}
        
          By \cref{lem:notspanner}, for every fixed realization of $\pihat\ind{t,i,\pm}$,
          \begin{align}
            \P\big[\cE^{\texttt{newdir}} \mid \pihat\ind{t,i,-},\pihat\ind{t,i,+}\big]\ge 1-\delta'. \label{eq:thirdbullet}
          \end{align}
    \item
          Fix a step $(t,i)$ and condition on the realized policy $\pihat\ind{t,i}$ selected on \cref{line:first_outlier,line:second_outlier} (which is a deterministic function of $\pihat\ind{t,i,\pm}$ and of the $\ntest$ test trajectories used to compute $\nout^\pm$).
          If the condition on \cref{line:outlier} of \cref{alg:notspanner} is \emph{not} satisfied (i.e., the outlier branch does not execute), define $\cE^{\texttt{outlier}}$ to be the sure event.
          Otherwise, let $\cE^{\texttt{outlier}}$ be the event that
          \begin{align}
             & \P\left[v\ind{t,i} \in \Span\left\{\phi_h^{\pihat\ind{t,i}}(\x\ind{m}_1)^\perp: m \in [\ndir] \right\} \mid \phi_h^{\pihat\ind{t,i}}(\x\ind{m}_1) \notin \cS\ind{t}_{h}, \forall m \in[\ndir] \right] \nn \\
             & \quad  \ge  \mathbb{I}\left\{\P\left[\phi_h^{\pihat\ind{t,i}}(\x_1) \notin \cS\ind{t}_h\right] \ge \frac{\veps}{8Hd}\right\}
            \cdot \left(1-\frac{1}{4d}\right),
            \label{eq:camera}
          \end{align}
          where $v\ind{t,i}$ is the output of \textsc{ComputeOutlierDirection} in \cref{alg:notspanner} at step $(t,i)$.
          By \cref{lem:compute_outlier}, for every fixed realization of $\pihat\ind{t,i}$ and $\nout$,
          \begin{align}
            \P\big[\cE^{\texttt{outlier}} \mid \pihat\ind{t,i},\nout\big]\ge 1-2\delta'. \label{eq:fourthbullet}
          \end{align}
  \end{itemize}

  \paragraphi{Local consolidation within a fixed step $(t,i)$} Define the step success event
  \[
    \cE\ind{t,i}\coloneqq \cE^{\texttt{fqi}} \cap \cE^{\texttt{newdir}} \cap \cE^{\texttt{outlier}}.
  \]
  By a union bound,
  \[
    \P[(\cE\ind{t,i})^c]
    \le \P[(\cE^{\texttt{fqi}})^c] + \P[(\cE^{\texttt{newdir}})^c] + \P[(\cE^{\texttt{outlier}})^c].
  \]

  We bound the three terms on the right-hand side using the conditional guarantees stated above.
  First, by \eqref{eq:secondbullet}, we have $\P[(\cE^{\texttt{fqi}})^c]\le \delta'$.
  Second, by \eqref{eq:thirdbullet}, for every realization of $\pihat\ind{t,i,\pm}$ we have
  $\P[(\cE^{\texttt{newdir}})^c \mid \pihat\ind{t,i,\pm}] \le \delta'$.
  Third, by \eqref{eq:fourthbullet}, for every realization of $\pihat\ind{t,i}$ and $\nout$, we have
  $\P[(\cE^{\texttt{outlier}})^c \mid \pihat\ind{t,i},\nout] \le 2\delta'$ (and this bound holds trivially when the outlier branch does not execute, since $\cE^{\texttt{outlier}}$ is then the sure event).
  By the tower property,
  \begin{align}
    \P[(\cE^{\texttt{newdir}})^c]
    = \E\left[\P[(\cE^{\texttt{newdir}})^c \mid \pihat\ind{t,i,\pm}]\right]
   & \le \delta' \nn
   \shortintertext{and}
    \P[(\cE^{\texttt{outlier}})^c]
    = \E\left[\P[(\cE^{\texttt{outlier}})^c \mid \pihat\ind{t,i},\nout]\right]
    &\le 2\delta'.\nn 
  \end{align}
  Combining the bounds $\P[(\cE\ind{t,i})^c] \le \delta' + \delta' + 2\delta' = 4\delta'$, and hence
  \begin{align}
    \P[\cE\ind{t,i}] & \ge 1-4\delta'.
  \end{align}

  \paragraphi{Global consolidation across all steps of \cref{alg:notspanner}}
  We now combine the (local) success $\cE\ind{t,i}$ events across the entire execution of \cref{alg:notspanner}.
  In this context, a ``step'' means a pair $(t,i)$ corresponding to the $i$-th direction tested in outer iteration $t$.
  Recall that $t\in\{1,\dots,d+1\}$ and $K\ind{t}=d-\dim(\cS_h\ind{t})\le d$, hence the total number of potential steps is
  \begin{align}
    T \coloneqq \sum_{t=1}^{d+1} K\ind{t} \le (d+1)d \le 2d^2.
    \label{eq:T_bound}
  \end{align}
  Order the potential steps in the order they would be executed, and let $\tau\in[T]$ be the (random) number of steps actually executed before return; then $\tau$ is a stopping time for the natural filtration of the algorithm.
  Let $\cE$ be the event that $\cE^{(t,i)}$ holds for every step $(t,i)$ executed by \cref{alg:notspanner}.
  Since the bounds \eqref{eq:secondbullet}, \eqref{eq:thirdbullet}, and \eqref{eq:fourthbullet} are conditional on the algorithm state before each step, the step-level failure bound $\P[(\cE^{(t,i)})^c] \le 4\delta'$ holds conditioned on the prior history.
  By \cref{lem:sequnionbound-stopping},
  \[
    \P[\cE] \ge 1-4\delta' T.
  \]
  With $\delta'=\delta/(8 H d^2 N_{\texttt{iter}})$ (from \eqref{eq:parameters}) and $T\le 2d^2$, we have
  \[
    4\delta'T \le 4 \cdot \frac{\delta}{8 H d^2 N_{\texttt{iter}}} \cdot 2d^2 = \frac{\delta}{H N_{\texttt{iter}}} \le \frac{\delta}{H}.
  \]

  Moving forward, let $t_\star \in [d+1]$ denote the (random) iteration at which the algorithm returns, and condition on the event $\cE$.

  \paragraph{Proving \eqref{eq:highcoverage_nospanner} via induction.} We show by induction that for all $t\in [t_\star]$, $u \in \cS\ind{t}_h$, and i.i.d.~initial states $\x_1, \x_1\ind{1},\dots,\x_1\ind{\nspan}$, we have
  \begin{align}
    \P[u \in \Span\left\{ \phi^\pi_h(\x\ind{m}_1) : m \in [\nspan], \pi \in   \Psi\ind{t}_h \right\}]	\geq 1-\frac{(t-1)}{2d}.
  \end{align}

  \paragraphi{Base case} The base case $t=1$ holds trivially since $\cS\ind{1}_h = \Psi\ind{1}_h = \emptyset$ (see \cref{alg:notspanner}).

  \paragraphi{General case} Suppose the claim holds for $t\in [t_\star-1]$. We show it holds for $t+1$. Since $t < t_\star$, the algorithm does not return at iteration $t$, so case II (the if condition on \cref{line:outlier} is false for all $i \in [K\ind{t}]$) cannot hold.
  Therefore, the if condition on \cref{line:outlier} is satisfied for some $i\in [K\ind{t}]$ (recall that $K\ind{t} = d - \mathrm{dim}(\cS_h\ind{t})$). Fix this $i$. From \eqref{eq:notspanner_new}, the definition of $\pihat\ind{t,i}$ in \cref{line:first_outlier,line:second_outlier}, and the fact that the condition on \cref{line:outlier} is satisfied, we have that
  \begin{align}
    \P\left[\phi_h^{\pihat\ind{t,i}}(\x_1)\notin \cS\ind{t}_h \right] & \geq \frac{\veps}{4Hd} - \sqrt{\frac{2 \log (4/\delta')}{\ntest}}, \nn \\
                                                                      & \geq \frac{\veps}{8 Hd},  \label{eq:nature}
  \end{align}
  where the last inequality follows from the choice of $\ntest$ in \eqref{eq:parameters}.
  By \eqref{eq:nature}, \eqref{eq:camera}, \cref{lem:boost-conditional-span}, and the choice of $\nspan$ in \eqref{eq:parameters}, we have that
  \begin{align}
    \P\left[ v\ind{t,i} \in \Span\left\{\phi_h^{\pihat\ind{t,i}}(\x\ind{m}_1)^\perp: m \in [\nspan] \right\}  \right] \geq 1 - \frac{1}{2d}. \label{eq:sensor}
  \end{align}
  Now, fix $u \in \cS\ind{t+1}_h$. Because $\cS\ind{t+1}_h =\Span\{ \cS\ind{t}_h \cup\{ v\ind{t,i} \} \},$ we have that either $u \in \cS\ind{t}_h$ or $u$ has a non-zero component along $v\ind{t,i}$. In the former case, the induction hypothesis applies. In the latter case, we can write $u = v + w$, where $v \in \Span\{v\ind{t,i}\}$ and $w \in \cS\ind{t}_h$. By the induction hypothesis, we have that
  \begin{align}
    \P[w\in \Span\left\{ \phi^\pi_h(\x\ind{m}_1) : m \in [\nspan], \pi \in   \Psi\ind{t}_h \right\}]	\geq 1 -\frac{t-1}{2d}. \label{eq:breath}
  \end{align}
  Now, because $\Psi\ind{t+1}_h = \Psi\ind{t}_h \cup \{\pihat\ind{t,i}\}$, we have by \eqref{eq:sensor}, \eqref{eq:breath}, and a union bound that
  \begin{align}
    \P[u \in \Span\left\{ \phi^\pi_h(\x\ind{m}_1) : m \in [\nspan], \pi \in  \Psi\ind{t+1}_h \right\}]	\geq 1 - \frac{t}{2d}.
  \end{align}
  This completes the induction.

  \paragraph{Proving \eqref{eq:lowsacrifice_nospanner}.} It remains to prove that if the algorithm returns at iteration $t$, then
  \begin{align}
    \P\left[\Span\left\{ \phi^{\pi}_h(\x_1) : \pi \in \Pi\right\} \subseteq \cS\ind{t}_{h}, \  \x_1 \in \cX\ind{h}  \right]\geq  1-\frac{h\veps}{H}. \label{eq:lowsacrifice_new}
  \end{align}
  When the algorithm returns at iteration $t$, we either have that I) $\mathrm{dim}(\cS\ind{t}_h) = d$; or II) for all $i\in [K\ind{t}]$, the if condition on \cref{line:outlier} is not satisfied. In case I), \eqref{eq:lowsacrifice_new} holds trivially. We now consider the second case. The fact that the if condition on \cref{line:outlier} is not satisfied for all $i\in [K\ind{t}]$ implies
  \begin{align}
    \frac{\veps}{4 H d}  \geq 	\frac{\nout}{\ntest} \geq \frac{\nout^-}{\ntest}   \vee \frac{\nout^+}{\ntest}, \label{eq:inequalities}
  \end{align}
  where the right-most inequality follows from the definition of $\nout$ in \cref{alg:notspanner}.

  Combining \eqref{eq:inequalities} with \eqref{eq:notspanner_new}, we get that for all $i\in [K\ind{t}]$,
  \begin{align}
    \P\left[\phi_h^{\pihat\ind{t,i,-}}(\x_1)\notin \cS\ind{t}_h \right] \vee \P\left[\phi_h^{\pihat\ind{t,i,+}}(\x_1)\notin \cS\ind{t}_h \right] \leq  \frac{\veps}{4 d H} + \sqrt{\frac{2 \log (4/\delta')}{\ntest}}.
  \end{align}
  Note that because $(\theta\ind{t,j})_{j\in [K\ind{t}]}$ are an orthogonal basis of $(\cS\ind{t}_h)^\perp$, we have that for all $\pi \in \Pi$,
  \begin{align}
    \phi_h^{\pi}(x) \notin \cS\ind{t}_h  \iff   \max_{i\in[K\ind{t}]} \left| \phi_h^{\pi}(x)^\top \theta\ind{t,i} \right| \neq 0.
  \end{align}
  Now, by \eqref{eq:corfqi_new}, we have that for all $x\in \cX\ind{h}$ and $\pi \in \Pi$,
  \begin{align}
    \max_{i \in [K\ind{t}]} \left|\phi_h^{\pi}(x)^\top \theta\ind{t,i}\right| \neq 0  \implies \left( \min_{i \in [K\ind{t}]} \phi_h^{\pihat\ind{t,i,-}}(x)^\top \theta\ind{t,i}< 0  \text{ or }  \max_{i \in [K\ind{t}]} \phi_h^{\pihat\ind{t,i,+}}(x)^\top \theta\ind{t,i}>0 \right).
  \end{align}
  Therefore, we have that
  \begin{align}
     & \P\left[ \Span\{\phi_h^\pi(\x_1):\pi \in \Pi\} \not\subseteq \cS\ind{t}_h\text{ and } \x_1  \in \cX\ind{h}\right]\nn                                                                                                                                                         \\ &  = \P\left[\max_{i\in [K\ind{t}]} \sup_{\pi \in \Pi} \left|\phi_h^{\pi}(\x_1)^\top  \theta\ind{t,i} \right|\neq 0 \text{ and } \x_1 \in \cX\ind{h} \right],\nn \\
     & \leq \P\left[\max_{i\in[K\ind{t}]} \max\left(\phi_h^{\pihat\ind{t,i,+}}(\x_1)^\top \theta\ind{t,i}, -\phi_h^{\pihat\ind{t,i,-}}(\x_1)^\top \theta\ind{t,i} \right) >0 \text{ and } \x_1  \in \cX\ind{h} \right], \nn                                                      \\
     & \leq \sum_{i\in[K\ind{t}]} \P\left[\phi_h^{\pihat\ind{t,i,+}}(\x_1)^\top \theta\ind{t,i}>0 \text{ and } \x_1  \in \cX\ind{h}\right] + \sum_{i\in[K\ind{t}]} \P\left[\phi_h^{\pihat\ind{t,i,-}}(\x_1)^\top \theta\ind{t,i}<0 \text{ and } \x_1  \in \cX\ind{h}\right] ,\nn \\
     & \leq \sum_{i\in[K\ind{t}]} \P\left[\phi_h^{\pihat\ind{t,i,+}}(\x_1) \notin \cS\ind{t}_h \text{ and } \x_1  \in \cX\ind{h}\right] + \sum_{i\in[K\ind{t}]} \P\left[\phi_h^{\pihat\ind{t,i,-}}(\x_1) \notin \cS\ind{t}_h \text{ and } \x_1  \in \cX\ind{h}\right],\nn        \\
    \shortintertext{and using that $K\ind{t}\leq d$, we get}
     & \leq  \frac{\veps}{2H} + 2 d \sqrt{\frac{2 \log (4/\delta')}{\ntest}}, \nn                                                                                                                                                                                                \\
     & \leq \frac{\veps}{H}, \label{eq:notincluded}
  \end{align}
  where the last inequality follows from the choice of $\ntest$ in \eqref{eq:parameters}.
  Therefore, we have that
  \begin{align}
   & \P\left[ \Span\{\phi_h^\pi(\x_1):\pi \in \Pi\} \subseteq \cS\ind{t}_h\text{ and } \x_1  \in \cX\ind{h}\right] \nn \\& = \P[\x_1 \in \cX\ind{h}] - \P\left[ \Span\{\phi_h^\pi(\x_1):\pi \in \Pi\} \not\subseteq \cS\ind{t}_h\text{ and } \x_1  \in \cX\ind{h}\right], \nn \\
                                                                                                                  & \geq 1 - (h-1) \frac{\veps}{H} - \frac{\veps}{H},
  \end{align}
  where the last inequality follows from \eqref{eq:notincluded}. This completes the proof.
\end{proof}

\section{Guarantee of $\textsc{ComputeSpanner}_{\texttt{fqi}}$ (\cref{alg:cover})}
\label{sec:spanner_guarantee_main}

This section provides the proof of \cref{thm:cover_guarantee}, which establishes that \cref{alg:cover} computes valid policy covers with both the coverage property (for $\Psi_{1:H}$) and the approximate barycentric spanner property (for $\Gamma_{1:H}$). The proof proceeds in two steps: first, we show that \textsc{FQI} with coverage policies serves as a valid approximate linear optimization oracle (\cref{lem:fqi_as_apx}), enabling large action space algorithms to avoid the polynomial-in-$|\cA|$ cost of \textsc{OCP}; then, we verify that the oracles used by \apxspanner satisfy the preconditions of \cref{prop:spanner}, yielding the spanner guarantee (\cref{lem:robust_spanner_guarantee}). The final result follows by induction over layers.

We first show that the \FQI and \veceval{} subroutines used in \cref{alg:cover} satisfy the requirements of \cref{ass:spanner} (guarantee of \apxspanner), with a small error influenced by the state concentration property.

For subspaces $\cS_{1:h-1}\subseteq \reals^d$ and policy collections $\Psi_{1:h-1}$, define the \emph{coverage event}
\begin{align}
  \cE^{\texttt{cov}}_{h} \coloneqq \bigcap_{\ell=1}^{h-1} \left\{ \cS_\ell \subseteq \Span\left\{\phi^\pi_\ell(\x_{1}\ind{m}): m\in[\ncover], \pi \in \Psi_{\ell}\right\} \right\}, \label{eq:coverage_event}
\end{align}
where $\x_1\ind{1},\dots,\x_1\ind{\ncover}$ are i.i.d.\ samples from the initial state distribution.

\begin{lemma}[Validity of \FQI as $\apx$]
	\label{lem:fqi_as_apx}
	Let $\veps > 0$ and $\delta \in (0,1)$ be the accuracy and confidence parameters input to \cref{alg:cover}, and consider Step $h$ of the algorithm.
	Let $\cS_{1:h-1}$ and $\Psi_{1:h-1}$ be the subspaces and policy collections computed by \cref{alg:cover} prior to Step $h$.
	For $\theta \in \reals^d \setminus\{0\}$, let $\apx(\theta)$ correspond to the call $\FQI(h, r_{1:h},\Psi_{1:h-1}, \nfqi)$ with rewards $r_{1:h-1}\equiv 0$ and $r_h(\cdot,\cdot) = \theta^\top \phi_h(\cdot,\cdot)$.
	Let $\delta'$ be as in \eqref{eq:parameters}, and let $\cX\ind{h}$ be defined as in \eqref{eq:thm_Xdef} (in terms of $\cS_{1:h-1}$).
	Suppose \cref{ass:features,ass:bellman_complete} hold.
	
	Conditioned on the coverage event $\cE^{\texttt{cov}}_{h}$ (defined in \eqref{eq:coverage_event}), for each fixed $\theta \in \reals^d\setminus\{0\}$, with probability at least $1-\delta'$ over the samples used by \FQI, the policy $\pihat = \apx(\theta)$ satisfies:
	\begin{align}
		\E[\phi_h^{\pihat}(\x_1)^\top \theta] \ge \max_{\pi \in \Pi} \E[\phi_h^{\pi}(\x_1)^\top \theta] - 2 \|\theta\| \cdot \P[\x_1 \notin \cX\ind{h}].
	\end{align}
\end{lemma}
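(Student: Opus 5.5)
The plan is to reduce the expectation statement to the pointwise optimality guarantee of \textsc{FQI} (\cref{cor:fqi}), and then pay for the states outside $\cX\ind{h}$ using the boundedness of features. First, I would check that the coverage event $\cE^{\texttt{cov}}_h$ supplies exactly the coverage hypothesis \cref{cor:fqi} needs. Conditioned on $\cE^{\texttt{cov}}_h$, for each $\ell\in[h-1]$ the $\nfqi=\ncover$ rollouts from $\Psi_\ell$ span $\cS_\ell$. \cref{cor:fqi} is invoked the same way in the proof of \cref{lem:notspanner_guarantee}, with rewards $r_{1:h-1}\equiv 0$ and $r_h=\phi_h^\top\theta$. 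It then gives, with probability at least $1-\delta'$ over the \FQI samples, the simultaneous pointwise bound
\begin{align*}
\phi_h^{\pihat}(x)^\top\theta \ \ge\ \max_{\pi\in\Pi}\phi_h^{\pi}(x)^\top\theta \qquad \text{for all } x\in\cX\ind{h}.
\end{align*}
The mechanism is the one sketched in \cref{sec:main_guarantee}. With deterministic transitions the regression targets are noiseless, so the min-norm least-squares solution satisfies the normal equations. Coverage puts every $\phi^\pi_\ell(x)$ with $x\in\cX\ind{h}$ in the column space of $\Sigma_\ell$, so the Bellman residual vanishes on those states and the greedy policy is exactly optimal from each $x\in\cX\ind{h}$. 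In particular, the rescaling of $\theta$ does not matter, since no norm bound on $Q$-parameters is used.

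Second, fix an arbitrary $\pi\in\Pi$ and split on whether $\x_1\in\cX\ind{h}$:
\begin{align*}
\E[\phi_h^{\pi}(\x_1)^\top\theta]-\E[\phi_h^{\pihat}(\x_1)^\top\theta]
&= \E\bigl[(\phi_h^{\pi}(\x_1)-\phi_h^{\pihat}(\x_1))^\top\theta\,\mathbb{I}\{\x_1\in\cX\ind{h}\}\bigr]\\
&\quad+\E\bigl[(\phi_h^{\pi}(\x_1)-\phi_h^{\pihat}(\x_1))^\top\theta\,\mathbb{I}\{\x_1\notin\cX\ind{h}\}\bigr].
\end{align*}
The first term is $\le 0$ by the pointwise bound. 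For the second term, Cauchy--Schwarz together with $\|\phi_h\|\le 1$ from \cref{ass:features} bounds the integrand by $2\|\theta\|$. That term is therefore at most $2\|\theta\|\,\P[\x_1\notin\cX\ind{h}]$. Since $\pi$ is arbitrary, taking the maximum over $\pi$ gives the claim. The randomness of $\x_1$ here is fresh and independent of the \FQI samples, so conditioning on the \FQI outcome is harmless.

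The main obstacle is the first step: verifying that the hypotheses of \cref{cor:fqi} match $\cE^{\texttt{cov}}_h$ precisely. One issue is that \FQI draws its own $\nfqi$ fresh initial states rather than those appearing in $\cE^{\texttt{cov}}_h$. I would handle this by reading "conditioned on $\cE^{\texttt{cov}}_h$" as the statement that the \FQI dataset's initial states satisfy the spanning event, since both use $\ncover=\nfqi$ i.i.d.\ draws. The other issue is that the corollary's guarantee must hold for every $x\in\cX\ind{h}$, not just for sampled ones. This uses that $\cX\ind{h}$ is defined so that all reachable features at layers $\ell<h$ lie in $\cS_\ell$, which the dataset spans. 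If the corollary is phrased with a probability bound instead, the $1-\delta'$ failure probability is absorbed there.
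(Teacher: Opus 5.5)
Your proposal is correct and follows essentially the same route as the paper. Under $\cE^{\texttt{cov}}_h$, the paper invokes the \FQI guarantee (\cref{lem:fqi}, equivalently \cref{cor:fqi}) to get $\phi_h^{\pihat}(x)^\top\theta \ge \max_{\pi\in\Pi}\phi_h^{\pi}(x)^\top\theta$ for all $x\in\cX\ind{h}$. It then splits the expectation on $\{\x_1\in\cX\ind{h}\}$ and charges the complement $2\|\theta\|\,\P[\x_1\notin\cX\ind{h}]$ using $\|\phi_h\|\le 1$. The only cosmetic difference is that you bound the gap for a fixed $\pi$ before maximizing, whereas the paper passes through $\E[\max_{\pi}\phi_h^\pi(\x_1)^\top\theta]\ge\max_{\pi}\E[\phi_h^\pi(\x_1)^\top\theta]$. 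Your reading of how $\cE^{\texttt{cov}}_h$ relates to the \FQI samples is the same as the paper's.
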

\begin{proof}
    Condition on $\cE^{\texttt{cov}}_{h}$. Then for all $\ell \in [h-1]$, we have $\cS_\ell \subseteq \Span\{\phi^\pi_\ell(\x_{1}\ind{m}): m\in[\ncover], \pi \in \Psi_{\ell}\}$. This is precisely the subspace coverage condition required by \cref{lem:fqi} (the guarantee of \FQI; specifically \cref{item:span} with $n=\ncover$ samples).
    Therefore, \cref{lem:fqi} applies, and on its success event (which holds with probability at least $1-\delta'$ over the FQI samples), we have for all $x \in \cX\ind{h}$:
    $\phi_h^{\pihat}(x)^\top \theta \ge \max_{\pi \in \Pi} \phi_h^{\pi}(x)^\top \theta$.
    Condition on this success event for the remainder of the proof.
	Taking expectations over $\x_1$ and splitting based on $\{\x_1 \in \cX\ind{h}\}$:
	\begin{align}
		&\E[\phi_h^{\pihat}(\x_1)^\top \theta]\nn \\
     & = \E[\phi_h^{\pihat}(\x_1)^\top \theta \cdot \mathbb{I}\{\x_1 \in \cX\ind{h}\}] + \E[\phi_h^{\pihat}(\x_1)^\top \theta \cdot \mathbb{I}\{\x_1 \notin \cX\ind{h}\}] ,                                         \nn \\
		                                       & \ge \E\left[\left(\max_{\pi \in \Pi} \phi_h^{\pi}(\x_1)^\top \theta\right) \cdot \mathbb{I}\{\x_1 \in \cX\ind{h}\}\right] - \|\theta\| \cdot \P[\x_1 \notin \cX\ind{h}],                                                    \nn    \\
		                                       & = \E\left[\max_{\pi \in \Pi} \phi_h^{\pi}(\x_1)^\top \theta\right] - \E\left[\left(\max_{\pi \in \Pi} \phi_h^{\pi}(\x_1)^\top \theta\right) \cdot \mathbb{I}\{\x_1 \notin \cX\ind{h}\}\right] - \|\theta\| \cdot \P[\x_1 \notin \cX\ind{h}],\nn \\
		                                       & \ge \max_{\pi \in \Pi} \E\left[\phi_h^{\pi}(\x_1)^\top \theta\right] - 2 \|\theta\| \cdot \P[\x_1 \notin \cX\ind{h}],\nn 
	\end{align}
	where we used $\sup_{(x,a)}\|\phi_h(x,a)\| \le 1$ (from \cref{ass:features}).
\end{proof}

\begin{lemma}[Validity of \veceval{} as $\est$]
	\label{lem:veceval_as_est}
	Consider the setting of Step $h$ in \cref{alg:cover} with accuracy parameter $\veps>0$ and confidence parameter $\delta \in (0,1)$. Let $\delta'$ be as in \eqref{eq:parameters}.
	Let $\est(\pi)$ correspond to the call $\veceval(h, \phi_h, \pi, \nveceval)$ with $\nveceval$ as in \cref{alg:cover}.
	Then, for each fixed $\pi \in \Pi$, with probability at least $1-\delta'$ over the samples used by \veceval:
	\begin{align}
		\| \est(\pi) - \E[\phi_h^\pi(\x_1)] \| \le \veps_{\rob}.
	\end{align}
\end{lemma}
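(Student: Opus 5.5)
The plan is to treat \veceval{} as a Monte Carlo average of bounded i.i.d.\ random vectors and apply a vector-valued concentration inequality. By construction (\cref{alg:veceval}), $\est(\pi)$ rolls out $\pi$ for $\nveceval$ independent episodes up to layer $h$ and returns the empirical mean $\frac{1}{\nveceval}\sum_{m=1}^{\nveceval}\phi_h(\x_h\ind{m},\a_h\ind{m})$. Since $\pi$ is deterministic and the transitions are deterministic (\cref{ass:bellman_complete}), each summand equals $\phi_h^\pi(\x_1\ind{m})$ for i.i.d.\ initial states $\x_1\ind{m}$. Hence the summands are i.i.d.\ with mean exactly $\E[\phi_h^\pi(\x_1)]$, and by \cref{ass:features} they have norm at most $1$.

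Next I would invoke a Hilbert-space Hoeffding/Pinelis-type bound for bounded i.i.d.\ vectors. Writing $Z_m=\phi_h^\pi(\x_1\ind{m})-\E[\phi_h^\pi(\x_1)]$, we have $\|Z_m\|\le 2$. Such a bound gives, with probability at least $1-\delta'$,
\[
\Bigl\|\tfrac{1}{\nveceval}\textstyle\sum_m Z_m\Bigr\|\le c\sqrt{\log(2/\delta')/\nveceval}
\]
for a universal constant $c$. The same order follows from McDiarmid's inequality applied to the norm, combined with $\E\|\cdot\|\le 2/\sqrt{\nveceval}$ via Jensen. If one prefers to avoid dimension-free tools, a coordinatewise Hoeffding bound with a union bound over $d$ coordinates also works. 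It costs an extra factor of $\sqrt{d\log d}$, which is still absorbed by the slack in the parameter choice.

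It remains to check the parameters. Plugging in $\nveceval=c_1H^4d^2\veps^{-2}\log(1/\delta')$, the deviation is at most
\[
c\,\veps/\bigl(\sqrt{c_1}H^2d\bigr)\cdot\sqrt{\log(2/\delta')/\log(1/\delta')}.
\]
The target is $\veps_{\rob}=\veps/(192dH^2)$, so taking $c_1$ a large enough universal constant finishes the argument. Here we use $\log(2/\delta')\le 2\log(1/\delta')$ for $\delta'\le1/2$, which holds by the definition of $\delta'$.

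The only mild obstacle is this constant bookkeeping. In particular, if the coordinatewise route is used, one must confirm that the extra $\sqrt{d}$ factor fits within $H^4d^2$. It does: the resulting requirement scales as $d^3$ against $\nveceval\propto d^2$, up to logarithmic factors. That mismatch is why I would prefer the dimension-free vector concentration bound.
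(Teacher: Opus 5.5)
Your main route is correct and is essentially the paper's proof. The paper cites \cref{lem:veceval}, which is a Pinelis-type dimension-free concentration bound for the i.i.d.\ vectors $\phi_h^\pi(\x_1\ind{m})$ with $\|\phi_h\|\le 1$. It then uses $\nveceval=c_1H^4d^2\veps^{-2}\log(1/\delta')$, which makes $c\sqrt{\log(1/\delta')/\nveceval}\le\veps_{\rob}$ once $c_1\ge(192c)^2$. Your McDiarmid-plus-Jensen variant is an equally valid dimension-free substitute.

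One correction: the coordinatewise Hoeffding fallback does \emph{not} work with the stated parameters. It gives deviation $\sqrt{2d\log(2d/\delta')/\nveceval}$, so it needs $\nveceval\gtrsim d^3H^4\veps^{-2}\log(d/\delta')$. But $\veps_{\rob}^{-2}\propto d^2H^4\veps^{-2}$ already accounts for all of the $d$-dependence in $\nveceval$, so with a universal $c_1$ there is no slack to absorb the extra $\sqrt{d}$. Your sentence ``It does'' contradicts your own $d^3$ versus $d^2$ count. The dimension-free vector bound is therefore necessary here, not merely preferable.
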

\begin{proof}
	This follows directly from \cref{lem:veceval} (guarantee of \veceval) and the choice of $\nveceval$ in \eqref{eq:parameters}, which ensures the error is at most $\veps_{\rob}$ with probability $1-\delta'$.
\end{proof}
\begin{lemma}[Guarantee of \apxspanner for $\textsc{ComputeSpanner}_{\texttt{fqi}}$]
  \label{lem:robust_spanner_guarantee}
  Let $\veps > 0$ and $\delta \in (0,1)$ be the accuracy and confidence parameters input to \cref{alg:cover}, and consider the call to \apxspanner at Step $h$.
  Let $\cS_{1:h-1}$ and $\Psi_{1:h-1}$ be the subspaces and policy collections computed by \cref{alg:cover} prior to Step $h$.
  Let $\cX\ind{h}$ be defined as in \eqref{eq:thm_Xdef} (in terms of $\cS_{1:h-1}$), and let $\delta'$, $N_{\texttt{iter}}$, $\veps_{\rob}$ be as in \eqref{eq:parameters}.
  Suppose \cref{ass:features,ass:bellman_complete} hold.
  
  Conditioned on the coverage event $\cE^{\texttt{cov}}_{h}$ (defined in \eqref{eq:coverage_event}), if $\P[\x_1 \in \cX\ind{h}] \ge 1 - \veps_{\texttt{conc}}$ for some $\veps_{\texttt{conc}} \ge 0$, then with probability at least $1-\frac{\delta}{4H}$ over the samples used by \apxspanner, its output $\Gamma_h = (\pi_1,\dots,\pi_d)$ satisfies the following: for any $\pi \in \Pi$, there exist $\beta_{1},\dots,\beta_d\in[-2,2]$ such that
  \begin{align}
    \nrm*{\E[\phi_h^{\pi}(\x_1)] - \sum_{j=1}^d \beta_j \E[\phi_h^{\pi_j}(\x_1)]} \leq 24 d \veps_{\texttt{conc}} + \frac{\veps}{10 H^2}. \label{eq:local_spanner}
  \end{align}
\end{lemma}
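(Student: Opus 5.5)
The plan is to reduce the statement to the generic guarantee of \apxspanner (\cref{prop:spanner}, under \cref{ass:spanner}). We verify that, conditioned on $\cE^{\texttt{cov}}_{h}$, the two oracles built in \cref{alg:cover} satisfy that proposition's preconditions with an explicit additive error. The additive spanner error is then read off from the proposition.

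\textbf{Oracle errors.} \cref{ass:spanner} asks for an approximate linear optimization oracle with additive error $\veps_{\apx}$ for unit vectors $\theta$, and a vector estimation oracle with error $\veps_{\est}$.

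By \cref{lem:fqi_as_apx}, conditioned on $\cE^{\texttt{cov}}_{h}$, for each fixed $\theta$ the call $\apx(\theta)$ returns, with probability $1-\delta'$, a policy whose value is within $2\|\theta\|\,\P[\x_1\notin\cX\ind{h}]\le 2\|\theta\|\veps_{\texttt{conc}}$ of the maximum. Since \apxspanner only queries directions it normalizes (or we rescale, as $\apx$ is homogeneous in $\theta$), this gives $\veps_{\apx}=2\veps_{\texttt{conc}}$.

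By \cref{lem:veceval_as_est}, each $\est(\pi)$ is $\veps_{\rob}$-accurate with probability $1-\delta'$.

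\textbf{Union bound over calls.} \apxspanner runs at most $N_{\texttt{iter}}$ rounds, with $O(d)$ oracle calls per round; this is where the formula for $N_{\texttt{iter}}$ in \eqref{eq:parameters} comes from. One subtlety is that the queries $\theta$ are chosen adaptively from earlier outputs. Each per-call guarantee holds conditionally on the history, since it is "for each fixed $\theta$" and uses fresh samples. So a sequential union bound (\cref{lem:sequnionbound-stopping}) gives joint success with probability at least $1-O(dN_{\texttt{iter}})\delta'\ge 1-\delta/(4H)$, by the choice $\delta'=\delta/(8Hd^2N_{\texttt{iter}})$.

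\textbf{Reading off the bound.} On this event, \cref{prop:spanner} with $C=2$ returns $\Gamma_h$ such that every $w\in\cW=\{\E[\phi_h^\pi(\x_1)]\}$ satisfies
\[
\Bigl\|w-\sum_j\beta_j w_j\Bigr\|\le c\,d\,(\veps_{\apx}+\veps_{\est}+\veps_{\rob}),\qquad |\beta_j|\le 2,
\]
for a constant $c$ determined by that proposition. Substituting $\veps_{\apx}=2\veps_{\texttt{conc}}$ gives the first term, at most $24d\veps_{\texttt{conc}}$ if the constant in \cref{prop:spanner} is $12d$ per unit of linear-optimization error. The $\veps_{\rob}$-dependent terms are $O(d\veps_{\rob})=O(\veps/(192H^2))\le \veps/(10H^2)$ by the definition of $\veps_{\rob}$.

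\textbf{Main obstacle.} The hardest step is matching exactly the form of error that \cref{prop:spanner} tolerates. If it is stated for unit-norm $\theta$ with a purely additive error, we must make sure that the $\|\theta\|$ factor in \cref{lem:fqi_as_apx} is absorbed by normalization. We must also confirm that the constants combine to $24d$ and $\veps/(10H^2)$. I would first write out \cref{prop:spanner}'s conclusion symbolically and then substitute, adjusting the constant bookkeeping of $\veps_{\rob}$ if needed.
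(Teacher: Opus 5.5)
Your proposal is correct and matches the paper's proof. Both use \cref{lem:fqi_as_apx} and \cref{lem:veceval_as_est} to check that the oracles satisfy \cref{ass:spanner} with errors $2\veps_{\texttt{conc}}$ and $\veps_{\rob}$, take a union bound over the oracle calls, and apply \cref{prop:spanner} with $C=2$; the only difference is that the paper uses the $4N_{\texttt{iter}}$ call count stated in \cref{prop:spanner} rather than a sequential argument.

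The constants close once you use the proposition's actual form. With the single oracle error $\veps' = 2\veps_{\texttt{conc}}+\veps_{\rob}$, one gets $3Cd(\veps_{\rob}+2\veps') = 24d\veps_{\texttt{conc}}+18d\veps_{\rob}$, and $18d\veps_{\rob} = 18\veps/(192H^2)\le \veps/(10H^2)$. A uniform constant $12d$ on all three error terms, as your display suggests, would instead give $\veps/(8H^2)$ and fail.
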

\begin{proof}
  Condition on $\cE^{\texttt{cov}}_{h}$ throughout.
  We verify that the oracles $\apx$ and $\est$ satisfy the requirements of \cref{ass:spanner} (the preconditions of \cref{prop:spanner}, the guarantee of \apxspanner).
  
  By \cref{prop:spanner}, \apxspanner terminates after at most $N_{\texttt{iter}}$ iterations and makes at most $2N_{\texttt{iter}}$ calls to each of $\apx$ and $\est$.
  
  \emph{Verifying approximate linear optimization (first condition of \cref{ass:spanner}).}
  Since we condition on $\cE^{\texttt{cov}}_{h}$, the preconditions of \cref{lem:fqi_as_apx} are satisfied. Thus, for each call to $\apx$ with input $\theta \neq 0$, \cref{lem:fqi_as_apx} implies that with probability $1-\delta'$ over the samples used:
  \[ \E[\phi_h^{\pihat}(\x_1)^\top \theta] \ge \max_{\pi \in \Pi} \E[\phi_h^{\pi}(\x_1)^\top \theta] - 2 \|\theta\|\cdot  \P[\x_1 \notin \cX\ind{h}]. \]
  Given $\P[\x_1 \in \cX\ind{h}] \ge 1 - \veps_{\texttt{conc}}$, the right-hand side is at least $\max_{\pi} \E[\phi_h^{\pi}(\x_1)^\top \theta] - 2\|\theta\| \veps_{\texttt{conc}}$, so $\apx$ satisfies the approximate linear optimization condition of \cref{ass:spanner} with error $\veps_{\rob} = 2 \veps_{\texttt{conc}}$.
  
  \emph{Verifying vector estimation (second condition of \cref{ass:spanner}).}
  For each call to $\est$ with input $\pi$, \cref{lem:veceval_as_est} guarantees that with probability $1-\delta'$ over the samples used, $\|\est(\pi) - \E[\phi_h^\pi(\x_1)]\| \le \veps_{\rob}$ (as defined in \eqref{eq:parameters}), so $\est$ satisfies the vector estimation condition of \cref{ass:spanner} with error $\veps_{\texttt{est}} = \veps_{\rob}$.
  
  \emph{Union bound over oracle calls.}
  Let $\cE^{\texttt{orc}}$ denote the event that all $2N_{\texttt{iter}}$ calls to $\apx$ satisfy the first condition of \cref{ass:spanner} (approximate linear optimization) and all $2N_{\texttt{iter}}$ calls to $\est$ satisfy the second condition of \cref{ass:spanner} (vector estimation).
  Each individual oracle call succeeds with probability at least $1-\delta'$ (conditioned on $\cE^{\texttt{cov}}_{h}$). By a union bound over the $4N_{\texttt{iter}}$ total oracle calls:
  \begin{align}
    \P[\cE^{\texttt{orc}} \mid \cE^{\texttt{cov}}_{h}] \ge 1-4N_{\texttt{iter}}\delta' = 1 - \frac{\delta}{2 H d^2},
  \end{align}
  where the equality uses $\delta' = \delta/(8 H d^2 N_{\texttt{iter}})$ from \eqref{eq:parameters}.
  
  \emph{Applying \cref{prop:spanner}.}
  On the event $\cE^{\texttt{orc}}$, \cref{ass:spanner} holds with combined error $\veps' = \veps_{\rob} + \veps_{\texttt{est}} \le 2\veps_{\texttt{conc}} + \veps_{\rob}$.
  \cref{alg:cover} invokes \apxspanner with inputs $(\apx, \est, C=2, \veps_{\rob})$.
  Applying \cref{prop:spanner} with these parameters (setting the proposition's accuracy parameter to $\veps_{\rob}$) and oracle error $\veps'$, the output $\Gamma_h$ is a 2-approximate barycentric spanner with error:
  \[ 3Cd\left(\veps_{\rob} + 2\veps'\right) \le 6d \left(\veps_{\rob} + 2\left(2\veps_{\texttt{conc}} + \veps_{\rob}\right)\right) = 6d \left(3\veps_{\rob} + 4\veps_{\texttt{conc}}\right). \]
  Simplifying: $6d \cdot 3\veps_{\rob} = 18d \cdot \frac{\veps}{192 d H^2} = \frac{18 \veps}{192 H^2} \le \frac{\veps}{10 H^2}$ and $6d \cdot 4\veps_{\texttt{conc}} = 24 d \veps_{\texttt{conc}}$.
  This gives the bound in \eqref{eq:local_spanner}.
\end{proof}

\begin{proof}[Proof of \cref{thm:cover_guarantee}]
  For each $h\in[H]$, let $\cE_h^{\texttt{lazy}}$ denote the success event of the call
  \[
    (\cS_h,\Psi_h)\gets \lazyspanner\left(h,\Psi_{1:h-1},\veps,\tfrac{\delta}{4H}\right),
  \]
  within \cref{alg:cover}; namely the event on which the conclusions of \cref{lem:notspanner_guarantee} (guarantee of \lazyspanner) hold for that call.
  Let $\cE_h^{\texttt{cov}}$ denote the event that for i.i.d.\ samples $\x_1\ind{1},\dots,\x_1\ind{\ncover}$ from the initial state distribution, the following holds simultaneously for all $\ell \in [h-1]$:
  \begin{align}
    \cS_\ell \subseteq \Span\left\{\phi^\pi_\ell(\x_{1}\ind{m}): m\in[\ncover], \pi \in \Psi_{\ell}\right\}. \label{eq:cov_event_proof}
  \end{align}
  Let $\cE_h^{\texttt{rob}}$ denote the event that the conclusions of \cref{lem:robust_spanner_guarantee} (guarantee of \apxspanner) hold for the call to \apxspanner at Step $h$.
  Define $\cE_h \coloneqq \cE_h^{\texttt{lazy}} \cap \cE_h^{\texttt{cov}} \cap \cE_h^{\texttt{rob}}$.
  Let $\cE_{<h}\coloneqq \bigcap_{\ell=1}^{h-1}\cE_\ell$, with $\cE_{<1}$ being the sure event.

  For any $h \in [H]$, let $\cH_h$ denote the hypothesis that \cref{alg:cover}'s variables $(\cS_h, \Psi_h, \Gamma_h)$ satisfy:
  \begin{enumerate}[leftmargin=*]
      \item \label{item:coverage_h} \emph{Coverage at $h$:} For i.i.d.~$\x_1, \x_1\ind{1},\dots,\x_1\ind{\ncover}$, $\P[\cS_h \subseteq \Span\{\phi^\pi_h(\x_{1}\ind{m}) : m\in[\ncover], \pi\in\Psi_h\}] \ge 1-\delta/H^2$.
      \item \label{item:state_concentration_h} \emph{State concentration at $h+1$:} $\P[\x_1 \in \cX\ind{h+1}] \ge 1 - h\veps/H$, where $\cX\ind{h}$ is defined in \eqref{eq:thm_Xdef}.
      \item \label{item:spanner_h} \emph{Spanner property at $h$:} The property in \eqref{eq:cover_spanner_prop} holds for $\Gamma_h$.
  \end{enumerate}

  We proceed by induction to show that for all $h \in [H]$, the event $\cE_h$ occurs with high probability and implies $\cH_h$.
  The key to the induction is that success at previous steps ($\cH_1, \dots, \cH_{h-1}$) ensures that the inputs passed to the subroutines \lazyspanner and \apxspanner at step $h$ satisfy the conditions of \cref{lem:notspanner_guarantee} (guarantee of \lazyspanner) and \cref{lem:robust_spanner_guarantee} (guarantee of \apxspanner), respectively.
  Specifically:
  \begin{itemize}[leftmargin=*]
      \item The \emph{Coverage} properties (\cref{item:coverage_h}) established in steps $1, \dots, h-1$ ensure that the preconditions for invoking \cref{lem:notspanner_guarantee} at step $h$ are met.
      \item The \emph{State concentration} property (\cref{item:state_concentration_h}) established in step $h-1$ ensures that the preconditions for invoking \cref{lem:robust_spanner_guarantee} at step $h$ are met.
  \end{itemize}

  \paragraph{Base case ($h=1$).}
  At step $h=1$, the inputs $(\cS_0, \Psi_0)$ are empty, so the coverage requirements for \lazyspanner are vacuously satisfied, and $\cE_1^{\texttt{cov}}$ is the sure event.
  Also, $\cX\ind{1} = \cX$, so $\P[\x_1 \in \cX\ind{1}] = 1$, satisfying the state concentration requirement in \cref{lem:robust_spanner_guarantee}.
  By \cref{lem:notspanner_guarantee}, $\P[(\cE_1^{\texttt{lazy}})^c] \le \delta/(4H)$.
  Since $\cE_1^{\texttt{cov}}$ is the sure event, \cref{lem:robust_spanner_guarantee} gives $\P[(\cE_1^{\texttt{rob}})^c] = \P[(\cE_1^{\texttt{rob}})^c \mid \cE_1^{\texttt{cov}}] \le \delta/(4H)$.
  By a union bound, $\P[(\cE_1)^c] \le \delta/(2H)$.
  On the event $\cE_1$, the subroutines succeed, which directly implies $\cH_1$: \cref{item:coverage_h,item:state_concentration_h} hold by \cref{lem:notspanner_guarantee}, and \cref{item:spanner_h} holds by \cref{lem:robust_spanner_guarantee}.

  \paragraph{Inductive step.}
  Fix $h \in \{2, \dots, H\}$. Assume that on the event $\cE_{<h} = \bigcap_{\ell=1}^{h-1} \cE_\ell$, the hypotheses $\cH_1, \dots, \cH_{h-1}$ hold.
  We verify that the algorithm can successfully execute step $h$:
  \begin{itemize}[leftmargin=*]
      \item \emph{Valid inputs for \lazyspanner:} To invoke \cref{lem:notspanner_guarantee} for \lazyspanner, we require that the previous outputs $(\cS_\ell, \Psi_\ell)_{\ell<h}$ satisfy the coverage property in \cref{item:coverage_h}. This is exactly established by the inductive hypotheses $\cH_1, \dots, \cH_{h-1}$.
      \item \emph{Valid distribution for \apxspanner:} To invoke \cref{lem:robust_spanner_guarantee} for \apxspanner, we require that the state distribution is concentrated on $\cX\ind{h}$. \cref{item:state_concentration_h} of $\cH_{h-1}$ gives exactly this: $\P[\x_1 \in \cX\ind{h}] \ge 1 - (h-1)\veps/H$.
      \item \emph{Coverage event for \apxspanner:} \cref{lem:robust_spanner_guarantee} requires conditioning on $\cE_h^{\texttt{cov}}$. On $\cE_{<h}$, by \cref{item:coverage_h} of $\cH_1, \dots, \cH_{h-1}$, each layer $\ell \in [h-1]$ satisfies $\P[\text{\eqref{eq:cov_event_proof} holds for layer } \ell] \ge 1-\delta/H^2$. By a union bound over the $(h-1)$ layers:
      \begin{align}
        \P[(\cE_h^{\texttt{cov}})^c \mid \cE_{<h}] \le (h-1)\delta/H^2. \label{eq:cov_bound}
      \end{align}
  \end{itemize}
  
  Since the requirements are met (on $\cE_{<h}$), the subroutine guarantees apply:
  \begin{itemize}[leftmargin=*]
    \item By \cref{lem:notspanner_guarantee}: $\P[(\cE_h^{\texttt{lazy}})^c \mid \cE_{<h}] \le \delta/(4H)$.
    \item By \eqref{eq:cov_bound}: $\P[(\cE_h^{\texttt{cov}})^c \mid \cE_{<h}] \le (h-1)\delta/H^2$.
    \item By \cref{lem:robust_spanner_guarantee}: $\P[(\cE_h^{\texttt{rob}})^c \mid \cE_h^{\texttt{cov}} \cap \cE_{<h}] \le \delta/(4H)$.
  \end{itemize}
  By a union bound:
  \begin{align}
    \P\left[(\cE_h)^c \mid \cE_{<h}\right] &\le \P[(\cE_h^{\texttt{lazy}})^c \mid \cE_{<h}] + \P[(\cE_h^{\texttt{cov}})^c \mid \cE_{<h}] + \P[(\cE_h^{\texttt{rob}})^c \cap \cE_h^{\texttt{cov}} \mid \cE_{<h}] \notag \\
    &\le \P[(\cE_h^{\texttt{lazy}})^c \mid \cE_{<h}] + \P[(\cE_h^{\texttt{cov}})^c \mid \cE_{<h}] + \P[(\cE_h^{\texttt{rob}})^c \mid \cE_h^{\texttt{cov}} \cap \cE_{<h}] \notag \\
    &\le \frac{\delta}{4H} + \frac{(h-1)\delta}{H^2} + \frac{\delta}{4H}. \label{eq:per_step_failure}
  \end{align}
  
  Conditioned on $\cE_h$ occurring, we establish $\cH_h$:
  \begin{itemize}[leftmargin=*]
      \item \emph{Coverage at $h$:} \cref{lem:notspanner_guarantee} guarantees that for any $u \in \cS_h$, $\P[u \in \Span\{\phi^\pi_h(\x_{1}\ind{m}) : m\in[\nspan], \pi\in\Psi_h\}] \ge 1/2$.
      Since $\ncover = \nspan \cdot \lceil \log_2(H^2 d/\delta') \rceil$, \cref{lem:uniform-amplify-subspace} implies that $\P[\cS_h \subseteq \Span\{\phi^\pi_h(\x_{1}\ind{m}) : m\in[\ncover], \pi\in\Psi_h\}] \ge 1-\delta/H^2$, establishing \cref{item:coverage_h}.
      \item \emph{State concentration at $h+1$:} \cref{lem:notspanner_guarantee} ensures that
      \[ \P\left[ \Span \left\{ \phi^{\pi}_h(\x_1) : \pi \in \Pi\right\} \subseteq \cS_{h},\   \x_1 \in \cX\ind{h}  \right]  \geq  1-\frac{h\veps}{H}. \]
      By definition, $\cX\ind{h+1} = \cX\ind{h} \cap \{x : \Span \left\{ \phi^{\pi}_h(x) : \pi \in \Pi\right\} \subseteq \cS_h\}$.
      Thus, the left-hand side equals $\P[\x_1 \in \cX\ind{h+1}]$, establishing \cref{item:state_concentration_h}.
      \item \emph{Spanner property at $h$:} By \cref{lem:robust_spanner_guarantee}, the output satisfies the property with error at most $24 d \veps_{\texttt{conc}} + \frac{\veps}{10 H^2}$, where $\veps_{\texttt{conc}} = (h-1)\veps/H \le \veps$. Thus, the total error is bounded by $24 d \veps + \veps \le 25 d \veps$, establishing \cref{item:spanner_h}.
  \end{itemize}

  \paragraph{Global success probability.}
  Summing \eqref{eq:per_step_failure} over all $h \in [H]$:
  \begin{align}
    \P\left[\left(\bigcap_{h=1}^H \cE_h\right)^c\right]
    &\le \sum_{h=1}^H \P[(\cE_h)^c \mid \cE_{<h}] \notag \\
    &\le \sum_{h=1}^H \left(\frac{\delta}{4H} + \frac{(h-1)\delta}{H^2} + \frac{\delta}{4H}\right) \notag \\
    &= H \cdot \frac{\delta}{2H} + \frac{\delta}{H^2} \sum_{h=1}^H (h-1) \notag \\
    &= \frac{\delta}{2} + \frac{\delta}{H^2} \cdot \frac{H(H-1)}{2} \notag \\
    &= \frac{\delta}{2} + \frac{(H-1)\delta}{2H} < \delta.
    \label{eq:cover_global_prob}
  \end{align}
  Thus $\P[\bigcap_{h=1}^H \cE_h] \ge 1-\delta$.
  
  On this event, $\cH_h$ holds for all $h \in [H]$, and \cref{item:thm_coverage,item:thm_concentration,item:thm_spanner} of the lemma follow from \cref{item:coverage_h,item:state_concentration_h,item:spanner_h} of $\cH_h$ respectively.
\end{proof}
\section{Guarantee of \cref{alg:fqi-stochastic-appendix} ($\textsc{PolicyOpt}_{\texttt{fqi}}$)}
\label{sec:policyopt-fqi}
\begin{lemma}[Guarantee of $\textsc{PolicyOpt}_{\texttt{fqi}}$]
	\label{lem:policy-opt}
	Fix $\delta \in (0,1)$ and let $\veps_{\texttt{span}}>0$ be given. Consider a call to $\textsc{PolicyOpt}_{\texttt{fqi}}(\Psi_{1:H-1}, \Gamma_{1:H}, n, \lambda)$ (\cref{alg:fqi-stochastic-appendix}).
    Further, let $\cS_{1:H}\subseteq \reals^d$ be a collection of subspaces and define $
      \cX\ind{H} \coloneqq \bigcap_{\ell \in[H-1]}\left\{x\in \cX \mid  \Span \left\{ \phi^{\pi}_\ell(x) : \pi \in \Pi\right\} \subseteq \cS_{\ell}\right\}.$
    Suppose:
    \begin{enumerate}[leftmargin=*]
        \item \cref{ass:features,ass:bellman_complete} hold.
        \item The collection of policies $\Psi_{1:H-1}$ satisfies the condition: for all $\ell\in[H-1]$ and i.i.d.~$\x_1, \x_1\ind{1},\dots,\x_1\ind{n}$:
          \begin{align}
            \P[\cS_\ell \subseteq \Span\left\{\phi^\pi_\ell(\x_{1}\ind{m}): m\in[n], \pi \in \Psi_{\ell}\right\}] \geq 1-\frac{\delta}{H^2}.
            \label{eq:policyopt-psi-cov}
          \end{align}
        \item \label{itemLspanners} The collection $\Gamma_{1:H}$ satisfies: for all $\ell\in[H]$ and $\rho\in\Pi$, there exists a coefficient map
    	$\beta_{\ell}: \Gamma_\ell \to [-2,2]$ such that
    	\begin{align}
    		\left\|
    		\E^{\rho}[\phi_\ell(\x_\ell,\a_\ell)]
    		-
    		\sum_{\pi\in\Gamma_\ell} \beta_{\ell}(\pi)\cdot  \E^{\pi}[\phi_\ell(\x_\ell,\a_\ell)]
    		\right\|
    		\le
    		\veps_{\texttt{span}}.
    		\label{eq:fqi-sto-mean-span}
    	\end{align}
        \item The collection of policies $\Psi_{1:H-1}$ and $\Gamma_{1:H}$ satisfy $|\Psi_h|\leq d$ for all $h\in[H-1]$ and $|\Gamma_h|\leq d$ for all $h\in[H]$.
        \item We have $\P[\x_1 \in \cX\ind{H}] \ge 1 - \veps_{\texttt{conc}}$ for some $\veps_{\texttt{conc}} \ge 0$.
    \end{enumerate}

	Then, with probability at least $1-3\delta$, for every $\pi\in\Pi$, the output policy $\pihat_{1:H}$ satisfies:
	\begin{align}
		\E\left[V_1^\pi(\x_1)-V_1^{\pihat}(\x_1)\right]
		\le
		\sum_{\ell=1}^H \left( 2 \sqrt{\frac{d}{n}} \cdot \zeta(\delta) + 2d \veps_{\texttt{stat}} + \left(\sqrt{d}+C\right)\veps_{\texttt{span}} \right)
        + H \cdot C \cdot \veps_{\texttt{conc}},
		\label{eq:fqi-sto-final}
	\end{align}
    where $\zeta(\delta) \coloneqq \sqrt{\lambda d} + \sqrt{2\log(H/\delta) + d\log(1 + n/\lambda)}$, $C \coloneqq \sqrt{d} + \lambda^{-1/2}\zeta(\delta)$, and
    \begin{align}
        \veps_{\texttt{stat}} \coloneqq C\cdot \sqrt{\frac{8 \left(d \log(1+2nC) + \log(2Hd/\delta)\right)}{n}} + \frac{2}{n}.
        \label{eq:stat-error}
    \end{align}
\end{lemma}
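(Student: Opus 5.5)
The proof splits into a reward-estimation step and a planning step, joined by a simulation-lemma style decomposition. Write $\hat r_\ell=\phi_\ell^\top\wh w_\ell$ and $r^\star_\ell=\phi_\ell^\top w^\star_\ell$ (the mean reward, by \cref{ass:features}). For any comparator $\pi$,
\begin{align*}
\E[V_1^\pi(\x_1)-V_1^{\pihat}(\x_1)] &= \E[V_1^\pi(\x_1;r^\star)-V_1^\pi(\x_1;\hat r)] + \E[V_1^\pi(\x_1;\hat r)-V_1^{\pihat}(\x_1;\hat r)] \\
&\quad + \E[V_1^{\pihat}(\x_1;\hat r)-V_1^{\pihat}(\x_1;r^\star)].
\end{align*}
The first and third terms equal $\sum_\ell \E^{\rho}[\phi_\ell(\x_\ell,\a_\ell)]^\top(w^\star_\ell-\wh w_\ell)$ with $\rho\in\{\pi,\pihat\}$. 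The middle term is a planning error for deterministic linear rewards.

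\textbf{Reward estimation.} Ridge regression on $\cD_\ell$ gives a self-normalized bound
\[
\|\wh w_\ell-w^\star_\ell\|_{\lambda I+\Sigma_\ell}\le \zeta(\delta)
\]
for every $\ell$, with probability $1-\delta$. This follows from the standard Abbasi-Yadkori argument, with rewards in $[0,1]$, $\|w^\star_\ell\|\le\sqrt d$, and a union bound over $H$ layers. In particular $\|\wh w_\ell\|\le C$.

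For a fixed $\rho$, I use \eqref{eq:fqi-sto-mean-span} to write $\E^\rho[\phi_\ell]=\sum_{\pi'\in\Gamma_\ell}\beta(\pi')\E^{\pi'}[\phi_\ell]+e$ with $\|e\|\le\veps_{\texttt{span}}$. The residual contributes at most $\veps_{\texttt{span}}\|w^\star_\ell-\wh w_\ell\|\le(\sqrt d+C)\veps_{\texttt{span}}$. For each spanner policy $\pi'$, I replace $\E^{\pi'}[\phi_\ell]$ by the empirical mean $\bar\phi_{\pi'}$ of its $n$ samples in $\cD_\ell$. The deviation $|(\E^{\pi'}[\phi_\ell]-\bar\phi_{\pi'})^\top(w^\star_\ell-\wh w_\ell)|$ is controlled uniformly over the random vector $\wh w_\ell$ in the ball of radius $C$ by an $\epsilon$-net and Hoeffding argument. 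This is exactly where $\veps_{\texttt{stat}}$ comes from (the $\log(1+2nC)$ is the covering number and $2/n$ the discretization); summing over $|\Gamma_\ell|\le d$ policies with $|\beta|\le2$ gives $2d\veps_{\texttt{stat}}$.

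For the empirical part, Cauchy--Schwarz gives
\[
|\bar\phi_{\pi'}^\top(w^\star_\ell-\wh w_\ell)|\le\|\bar\phi_{\pi'}\|_{(\lambda I+\Sigma_\ell)^{-1}}\,\zeta(\delta).
\]
Since $n\bar\phi_{\pi'}\bar\phi_{\pi'}^\top\preceq\Sigma_\ell$ by Jensen, $\|\bar\phi_{\pi'}\|^2_{(\lambda I+\Sigma_\ell)^{-1}}\le 1/n$. Summing with the $\beta$'s through $\sum_{\pi'}|\beta(\pi')|/\sqrt n$ gives the $2\sqrt{d/n}\,\zeta(\delta)$ term; some care with Cauchy--Schwarz over $\Gamma_\ell$ is needed to get $\sqrt d$ rather than $d$. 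The two comparator terms ($\rho=\pi$ and $\rho=\pihat$) together produce the per-layer bracket. Since $\pihat$ is data-dependent, the $\rho=\pihat$ term must be handled the same way as $\rho=\pi$: the spanner decomposition holds for every $\rho\in\Pi$, and the concentration bounds are uniform in $\wh w$, not in $\rho$, so this is fine.

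\textbf{Planning.} For the middle term I invoke the FQI guarantee \cref{lem:fqi} (the deterministic version, as in \cref{cor:fqi}) with rewards $\hat r_{1:H}$. Under the coverage condition \eqref{eq:policyopt-psi-cov}, with probability $1-\delta$ (a union bound over layers), for all $x\in\cX\ind{H}$ the policy $\pihat$ is pointwise optimal: $V_1^{\pihat}(x;\hat r)\ge V_1^\pi(x;\hat r)$. Off $\cX\ind{H}$, each reward satisfies $|\hat r_\ell|\le\|\wh w_\ell\|\le C$, so the gap is at most $HC$; weighted by $\P[\x_1\notin\cX\ind{H}]\le\veps_{\texttt{conc}}$ this gives $HC\veps_{\texttt{conc}}$.

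\textbf{Main obstacle.} The step I expect to be hardest is the planning step: FQI must remain exact with linear rewards $\hat r$ whose value-function parameters may be unbounded. This relies on noiseless min-norm least squares and Bellman completeness applied to value functions with reward added (the sum of a linear reward and a backup stays linear). I will take that from \cref{lem:fqi} and check only that its hypotheses match \eqref{eq:policyopt-psi-cov}. The three failure events (ridge, uniform concentration, FQI) each have probability at most $\delta$, giving $1-3\delta$.
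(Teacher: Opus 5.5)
Your proposal follows the paper's proof step for step. The paper uses the same pieces: the decomposition into two reward-error terms plus a planning term, the ridge ellipsoid bound with $\|\wh w_\ell\|\le C$, the spanner reduction to $\Gamma_\ell$, and the covering-number uniform convergence over the radius-$C$ ball that yields $\veps_{\texttt{stat}}$. It then applies Cauchy--Schwarz against $\|w_\ell-\wh w_\ell\|_{\lambda I+\Sigma_\ell}$; the $\sqrt d$ you flag comes from doing this jointly over all $|\Gamma_\ell|n$ samples, i.e.\ $\sum_{\pi'}n\,\bar\phi_{\pi'}\bar\phi_{\pi'}^\top\preceq\Sigma_\ell$, rather than per policy. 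Finally it invokes \cref{lem:fqi} and splits on $\cX\ind{H}$ for the planning term.

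One small point is shared with the paper. Bounding the $\rho=\pi$ and $\rho=\pihat$ terms separately gives twice the per-layer bracket, not ``together the per-layer bracket'', so the sum in \eqref{eq:fqi-sto-final} should carry a factor $2$. This is harmless for \cref{thm:main}.
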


\begin{proof}
    We write $V_1^\pi(\cdot;r^\star_{1:H})$ for the value under the MDP's rewards and $V_1^\pi(\cdot; \wh r_{1:H})$ for the value under the deterministic proxy rewards $\wh r_{1:H}$ in \cref{alg:fqi-stochastic-appendix} (see \cref{sec:prelims} for the notation).
    By linearity of expectation, for any $\pi \in \Pi$,
    \begin{align}
        \E[V_1^\pi(\x_1;r^\star_{1:H}) - V_1^\pi(\x_1;\wh r_{1:H})]
        = \sum_{\ell=1}^H \E^\pi[\br_\ell - \wh r_\ell(\x_\ell,\a_\ell)].
        \label{eq:policyopt-decomp}
    \end{align}

    \paragraph{Reward estimation bound.}
    The algorithm computes $\wh w_\ell = (\lambda I + \Sigma_\ell)^{-1} \sum_{(x,a,r) \in \cD_\ell} \phi_\ell(x,a) \cdot r$, where
    \[
        \Sigma_\ell = \sum_{(x,a) \in \cD_\ell} \phi_\ell(x,a) \phi_\ell(x,a)^\top.
    \]
    Further, let $V_\ell = \lambda I + \Sigma_\ell$.
    By \cref{ass:features} (reward linearity), for any $\ell \in [H]$ and $(x,a)\in \cX \times \cA$, there exists $w_\ell \in \reals^d$ such that
    $\E[\bm{r}_\ell\mid \x_\ell =x, \a_\ell =a] = \phi_\ell(x,a)^\top w_\ell$.
  We now apply \cref{lem:ridge-ellipsoid} (ridge regression guarantee) by identifying the sequence $(A_k, X_k)$ with the samples $(\phi_\ell(x,a), r)$ in $\cD_\ell$ (ordered arbitrarily), and $\theta_\star = w_\ell$.
Let $\eta_\ell = \bm{r}_\ell - \E[\bm{r}_\ell \mid \x_\ell, \a_\ell] = \bm{r}_\ell - \phi_\ell(\x_\ell, \a_\ell)^\top w_\ell$.
Since $\bm{r}_\ell \in [0,1]$ (by \cref{ass:features}) and $\E[\bm{r}_\ell \mid \x_\ell, \a_\ell] \in [0,1]$, we have $\eta_\ell \in [-1,1]$ almost surely. By Hoeffding's lemma, $\eta_\ell$ is $1$-sub-Gaussian, satisfying the requirement of \cref{lem:ridge-ellipsoid}.
Let $\cE_\ell^{\texttt{ridge}}$ be the event that
\begin{align}
    \|\wh w_\ell - w_\ell\|_{V_\ell}
    &\le \sqrt{\lambda} \|w_\ell\| + \sqrt{2\log(H/\delta) + d\log(1 + n|\Gamma_\ell|/(\lambda d))} \nn\\
    &\le \zeta(\delta),
    \label{eq:ellipsoid-bound-final}
\end{align}
where the last inequality follows from $|\Gamma_\ell| \le d$ (so $n|\Gamma_\ell|/(\lambda d) \le n/\lambda$) and $\|w_\ell\| \le \sqrt{d}$ (by \cref{ass:features}).
By \cref{lem:ridge-ellipsoid},
\[
  \P[\cE_\ell^{\texttt{ridge}}]\ge 1-\frac{\delta}{H}.
\]

On the same event $\cE_\ell^{\texttt{ridge}}$, by \cref{cor:ridge-norm} (which is a direct consequence of \cref{lem:ridge-ellipsoid}), we also have
\[
    \|\wh w_\ell\| \le \|w_\ell\| + \frac{1}{\sqrt{\lambda}}\zeta(\delta) \le \sqrt{d} + \lambda^{-1/2}\zeta(\delta) = C.
\]
    \paragraph{Empirical and concentration bounds along $\Gamma_\ell$.}
Fix $\ell\in[H]$ and $\pi'\in\Gamma_\ell$.
Let $\cE_{\ell,\pi'}^{\texttt{stat}}$ be the event that for every $\theta\in\Theta_C \coloneqq \{\theta\in\reals^d:\|\theta\|\le C\}$,
\[
    \left| \E^{\pi'}[\phi_\ell(\x_\ell,\a_\ell)^\top \theta] - \frac{1}{n}\sum_{m=1}^n \phi_\ell(x_\ell\ind{m,\pi'},a_\ell\ind{m,\pi'})^\top \theta \right|
    \le \veps_{\texttt{stat}},
\]
where the expectation is under the distribution of $(\x_\ell,\a_\ell)$ induced by $\pi'$ at layer $\ell$.
By \cref{lem:uniform-convergence},
\[
    \P\big[\cE_{\ell,\pi'}^{\texttt{stat}}\big]\ge 1-\frac{\delta}{Hd}.
\]
Define $\cE_\ell^{\texttt{stat}} \coloneqq \bigcap_{\pi'\in\Gamma_\ell} \cE_{\ell,\pi'}^{\texttt{stat}}$. Since $|\Gamma_\ell|\le d$, a union bound yields
\[
    \P[\cE_\ell^{\texttt{stat}}]\ge 1-\frac{\delta}{H}.
\]

\paragraph{A uniform success event.}
Define
\[
    \cE^{\texttt{good}} \coloneqq \bigcap_{\ell=1}^H \left(\cE_\ell^{\texttt{ridge}}\cap \cE_\ell^{\texttt{stat}}\right).
\]
By the bounds on $\P[\cE_\ell^{\texttt{ridge}}]$ and $\P[\cE_\ell^{\texttt{stat}}]$ and a union bound over $\ell\in[H]$, we have
\[
    \P[\cE^{\texttt{good}}]\ge 1-2\delta.
\]
Condition on $\cE^{\texttt{good}}$ for the remainder of the proof. In particular, for every $\ell\in[H]$ we have
\[
    \|\wh w_\ell\|\le C
    \qquad\text{and}\qquad
    \|w_\ell-\wh w_\ell\|_{V_\ell}\le \zeta(\delta),
\]
where $\zeta$ is as in the lemma's statement.

\paragraph{Empirical bound along $\Gamma_\ell$.}
For each $\pi'\in\Gamma_\ell$, let $\{(x_\ell\ind{m,\pi'}, a_\ell\ind{m, \pi'})\}_{m=1}^n$ denote the independent samples in $\cD_\ell$ collected from $\pi'$ at layer $\ell$.
Using Cauchy-Schwarz, we have
\begin{align}
    \sum_{\pi' \in \Gamma_\ell} \sum_{m=1}^n \left|\phi_\ell(x_\ell\ind{m,\pi'}, a_\ell\ind{m, \pi'})^\top (w_\ell-\wh w_\ell)\right|
    &\le \sqrt{|\Gamma_\ell| n} \cdot \sqrt{\sum_{(x,a,r)\in \cD_\ell} \left(\phi_\ell(x,a)^\top (w_\ell-\wh w_\ell)\right)^2}, \nn\\
    &\le \sqrt{|\Gamma_\ell| n}\cdot \|w_\ell-\wh w_\ell\|_{V_\ell}
    \le \sqrt{d n}\cdot \zeta(\delta),
    \label{eq:empirical-term}
\end{align}
where the last step uses $|\Gamma_\ell|\le d$ and the bound $\|w_\ell-\wh w_\ell\|_{V_\ell}\le \zeta(\delta)$ on $\cE^{\texttt{good}}$.

\paragraph{Bounding $\E^\rho[\phi_\ell^\top (w_\ell - \wh w_\ell)]$ for arbitrary $\rho$.}
Fix an arbitrary policy $\rho\in\Pi$.
Using \cref{itemLspanners} in the lemma's statement and linearity, we have
\begin{align}
   & \left| \E^\rho[\phi_\ell(\x_\ell,\a_\ell)]^\top (w_\ell - \wh w_\ell) \right| \nn\\
    &\quad \le \left| \sum_{\pi'\in\Gamma_\ell} \beta_\ell(\pi') \E^{\pi'}[\phi_\ell(\x_\ell,\a_\ell)]^\top (w_\ell - \wh w_\ell) \right|
    + \veps_{\texttt{span}}\|w_\ell - \wh w_\ell\|, \nn\\
    &\quad \le \sum_{\pi'\in\Gamma_\ell} |\beta_\ell(\pi')|
    \left(
    \frac{1}{n}\sum_{m=1}^n \left|\phi_\ell(x_\ell\ind{m,\pi'},a_\ell\ind{m,\pi'})^\top (w_\ell - \wh w_\ell)\right|
    + \veps_{\texttt{stat}}
    \right)
    + \veps_{\texttt{span}}\|w_\ell - \wh w_\ell\| ,\nn\\
    &\quad \le 2\cdot \frac{1}{n}\sum_{\pi'\in\Gamma_\ell}\sum_{m=1}^n \left|\phi_\ell(x_\ell\ind{m,\pi'},a_\ell\ind{m,\pi'})^\top (w_\ell - \wh w_\ell)\right|
    + 2d\,\veps_{\texttt{stat}}
    + \veps_{\texttt{span}}\|w_\ell - \wh w_\ell\| ,\nn\\
    &\quad \le 2\sqrt{\frac{d}{n}}\cdot \zeta(\delta) + 2d\,\veps_{\texttt{stat}} + \veps_{\texttt{span}}(\|w_\ell\|+\|\wh w_\ell\|), \nn\\
    &\quad \le 2\sqrt{\frac{d}{n}}\cdot \zeta(\delta) + 2d\,\veps_{\texttt{stat}} + \left(\sqrt{d}+C\right)\veps_{\texttt{span}},
    \label{eq:phiDelta-bound}
\end{align}
where we used \eqref{eq:empirical-term}, $|\beta_\ell(\pi')|\le 2$, $|\Gamma_\ell|\le d$, and $\|w_\ell\|+\|\wh w_\ell\|\le \sqrt{d} + C$ on $\cE^{\texttt{good}}$.

\paragraph{From feature error to reward error.}
By \cref{ass:features} (reward linearity), we have $\E[\br_\ell\mid \x_\ell,\a_\ell]=\phi_\ell(\x_\ell,\a_\ell)^\top w_\ell$ and $\wh r_\ell(\x_\ell,\a_\ell)=\phi_\ell(\x_\ell,\a_\ell)^\top \wh w_\ell$, hence for any $\rho\in\Pi$,
\[
    \E^\rho[\br_\ell - \wh r_\ell(\x_\ell,\a_\ell)]
    = \E^\rho[\phi_\ell(\x_\ell,\a_\ell)]^\top (w_\ell-\wh w_\ell).
\]
Therefore, \eqref{eq:phiDelta-bound} bounds $\left|\E^\rho[\br_\ell-\wh r_\ell(\x_\ell,\a_\ell)]\right|$ for every $\rho\in\Pi$.

\paragraph{Optimization guarantee.}
The algorithm calls $\textsc{FQI}(H, \wh r_{1:H}, \Psi_{1:H-1}, n)$ with deterministic rewards $\wh r_{1:H}$.
Let $\cE^{\texttt{fqi}}$ be the event that for all $x\in \cX\ind{H}$,
\[
    V_1^{\pihat}(x;\wh r_{1:H}) \ge V_1^\pi(x;\wh r_{1:H}).
\]
Since $\Psi_{1:H}$ satisfies \eqref{eq:policyopt-psi-cov}, by \cref{lem:fqi} we have
\[
    \P[\cE^{\texttt{fqi}}]\ge 1-\delta.
\]
Condition on $\cE^{\texttt{fqi}}$ for the remainder of this paragraph. Splitting on the event $\x_1 \in \cX\ind{H}$, we obtain
\begin{align}
    \E[V_1^\pi(\x_1;\wh r_{1:H}) - V_1^{\pihat}(\x_1;\wh r_{1:H})]
    &\le \E\left[(V_1^\pi(\x_1;\wh r_{1:H}) - V_1^{\pihat}(\x_1;\wh r_{1:H}))\cdot \mathbb{I}\{\x_1 \in \cX\ind{H}\}\right] \nn\\
    &\quad + \E\left[(V_1^\pi(\x_1;\wh r_{1:H}) - V_1^{\pihat}(\x_1;\wh r_{1:H}))\cdot \mathbb{I}\{\x_1 \notin \cX\ind{H}\}\right] \nn\\
    &\le 0 + \E\left[\sum_{\ell=1}^H |\wh r_\ell(\x_\ell,\a_\ell)|\cdot \mathbb{I}\{\x_1 \notin \cX\ind{H}\}\right] \nn\\
    &\le H\cdot C \cdot \P[\x_1 \notin \cX\ind{H}]
    \le H\cdot C \cdot \veps_{\texttt{conc}},
    \label{eq:opt-conc}
\end{align}
where we used $|\wh r_\ell(x,a)|=|\phi_\ell(x,a)^\top \wh w_\ell|\le \|\wh w_\ell\|\le C$ on $\cE^{\texttt{good}}$.

\paragraph{Conclusion.}
Using \eqref{eq:policyopt-decomp} for $\pi$ and for $\pihat$, together with \eqref{eq:phiDelta-bound} (applied to $\rho=\pi$ and $\rho=\pihat$) and \eqref{eq:opt-conc}, we obtain
\begin{align}
    \E[V_1^\pi(\x_1) - V_1^{\pihat}(\x_1)]
    &\le
    \sum_{\ell=1}^H \left( 2 \sqrt{\frac{d}{n}} \cdot \zeta(\delta) + 2d \veps_{\texttt{stat}} + \left(\sqrt{d}+C\right)\veps_{\texttt{span}} \right)
    + H\cdot C \cdot \veps_{\texttt{conc}}.
\end{align}
Finally, by a union bound,
\[
    \P[\cE^{\texttt{good}}\cap \cE^{\texttt{fqi}}]\ge 1-3\delta.
\]
This completes the proof.
\end{proof} %
\section{Proof of \cref{thm:main} (Main Guarantee)} 
\label{sec:mainproof}
\begin{proof}
  We verify that the outputs of \cref{alg:cover} satisfy the preconditions of \cref{lem:policy-opt} (guarantee of $\textsc{PolicyOpt}_{\texttt{fqi}}$), and then combine the guarantees.

  \paragraph{Step 1: Applying \cref{thm:cover_guarantee}.}
  Let $\cE^{\texttt{cov}}$ denote the event (over the randomness in \cref{alg:cover}) that the outputs $(\cS_{1:H}, \Psi_{1:H}, \Gamma_{1:H})$ satisfy:
  \begin{itemize}[leftmargin=*]
    \item \emph{Coverage (distributional):} For all $h \in [H]$, the conditional probability (over $n$ fresh i.i.d.\ samples from the initial distribution) satisfies
    \[
      \P\left[\cS_h \subseteq \Span\left\{\phi_h^\pi(\x_1\ind{m}) : m \in [n], \pi \in \Psi_h\right\} \,\Big|\, \text{outputs of \cref{alg:cover}}\right] \ge 1 - \frac{\delta}{4H^2}.
    \]
    \item \emph{State concentration:} $\P[\x_1 \in \cX^{(H)}] \ge 1 - \veps'$.
    \item \emph{Spanner property:} For all $h \in [H]$ and $\pi \in \Pi$, there exist $\beta_1, \ldots, \beta_d \in [-2,2]$ such that
    \[
      \left\| \E^\pi[\phi_h(\x_h, \a_h)] - \sum_{j=1}^d \beta_j \E^{\pi_j}[\phi_h(\x_h, \a_h)] \right\| \le 25 d \veps'.
    \]
    \item \emph{Cardinality bounds:} For all $h \in [H]$, $|\Psi_h| \le d$ and $|\Gamma_h| \le d$.
  \end{itemize}
  
  We now show $\P[\cE^{\texttt{cov}}] \ge 1 - \delta/4$.
  Let $\cE^{\texttt{thm}}$ denote the event that the outputs of \cref{alg:cover} satisfy all three conclusions of \cref{thm:cover_guarantee} simultaneously for all $h \in [H]$: concentration, spanner, and for each $h$ the conditional inclusion probability (over fresh samples, given the outputs) is $\ge 1-\delta/(4H^2)$ for $\ncover$ i.i.d.\ samples.
  By \cref{thm:cover_guarantee}, $\P[\cE^{\texttt{thm}}] \ge 1 - \delta/4$.
  We now argue that $\cE^{\texttt{thm}} \subseteq \cE^{\texttt{cov}}$.
  The concentration and spanner properties in $\cE^{\texttt{cov}}$ match those in $\cE^{\texttt{thm}}$.
  For coverage: on $\cE^{\texttt{thm}}$, the conditional inclusion probability is $\ge 1-\delta/(4H^2)$ for $\ncover$ samples; to upgrade to $n$ samples, draw $n$ i.i.d.\ samples and consider the first $\ncover$; if inclusion holds for the first $\ncover$, it holds for all $n$ (since the span can only grow); hence the conditional probability for $n$ samples is at least that for $\ncover$ samples.
  The cardinality bounds hold deterministically by construction: \cref{alg:cover} constructs $\Gamma_h$ via \apxspanner which returns $d$ policies, and $\Psi_h$ is built iteratively with at most $d$ policies added per layer.
  Thus $\cE^{\texttt{thm}} \subseteq \cE^{\texttt{cov}}$, so $\P[\cE^{\texttt{cov}}] \ge \P[\cE^{\texttt{thm}}] \ge 1 - \delta/4$.

  \paragraph{Step 2: Applying \cref{lem:policy-opt}.}
  On $\cE^{\texttt{cov}}$, all preconditions of \cref{lem:policy-opt} are satisfied with sample size $n$, $\veps_{\texttt{span}} = 25 d \veps' = 25\veps/(c_0 d H^2 L)$, and $\veps_{\texttt{conc}} = \veps' = \veps/(c_0 d^2 H^2 L)$.
  Define (matching the notation in \cref{lem:policy-opt} with $\lambda = 1$ and $\delta$ replaced by $\delta/4$):
  \[
    \zeta \coloneqq \sqrt{d} + \sqrt{2\log(4H/\delta) + d\log(1 + n)}, \quad C \coloneqq \sqrt{d} + \zeta,
  \]
  and let $\veps_{\texttt{stat}}$ be as in \eqref{eq:stat-error} with $\delta$ replaced by $\delta/4$.
  
  Let $\cE^{\texttt{opt}}$ denote the event that the output policy $\pihat$ of $\textsc{PolicyOpt}_{\texttt{fqi}}$ satisfies: for all $\pi \in \Pi$,
  \begin{align}
    \E\left[V_1^\pi(\x_1) - V_1^{\pihat}(\x_1)\right]
    &\le H \left( 2\sqrt{\frac{d}{n}} \cdot \zeta + 2d \veps_{\texttt{stat}} + (\sqrt{d}+C) \cdot 25 d \veps' \right) + H \cdot C \cdot \veps'. \label{eq:main-bound-intermediate}
  \end{align}
  By \cref{lem:policy-opt} (with $\lambda = 1$), $\P[\cE^{\texttt{opt}} \mid \cE^{\texttt{cov}}] \ge 1 - 3\delta/4$.

  \paragraph{Step 3: Bounding the suboptimality.}
  Since \cref{alg:cover} is called with accuracy parameter $\veps'$, the parameter $\ncover$ from \eqref{eq:parameters} is instantiated with $\veps'$ in place of $\veps$.
  From \eqref{eq:n-policyopt}, we have $n \ge \ncover$ and $n \ge c_2 d^4 H^2 \veps^{-2} L^2$.

  Recall from Step 2 that $\zeta = \sqrt{d} + \sqrt{2\log(4H/\delta) + d\log(1+n)}$ and $C = \sqrt{d} + \zeta$.
  Since $n \ge c_2 d^4 H^2 \veps^{-2} L^2$, we have
  \[
    \log(1+n) = O(\log(d) + \log(H) + \log(1/\veps) + \log(L)) = O(L).
  \]
  For the last equality: if $\log(dH/(\delta\veps)) \ge 1$, then $L = \log(dH/(\delta\veps)) \ge \log(d), \log(H), \log(1/\veps)$ (since $dH/(\delta\veps) \ge d, H, 1/\veps$ respectively, using $dH/\delta \ge 1$), and $\log(L) \le L$ since $L \ge 1$.
  If $\log(dH/(\delta\veps)) < 1$, then $L = 1$ and all the logarithms $\log(d), \log(H), \log(1/\veps), \log(L)$ are $O(1)$, so the bound holds trivially.
  Throughout the remainder of this step, we use that $\log(1+n) = O(L)$, hence $\log(n) = O(L)$ and $\log(1+2nC) = O(L)$.
  From the above, $\zeta = O(\sqrt{dL})$, so $C = O(\sqrt{dL})$.

  We bound each term in \eqref{eq:main-bound-intermediate}:
  \begin{itemize}[leftmargin=*]
    \item \emph{Ridge regression term:} $H \cdot 2\sqrt{d/n} \cdot \zeta$.
    
    With $n \ge c_2 d^4 H^2 \veps^{-2} L^2$, we have $\sqrt{d/n} \le \veps/(c_2^{1/2} d^{3/2} H L)$.
    Thus this term is at most
    \[
      H \cdot \frac{\veps}{c_2^{1/2} d^{3/2} H L} \cdot O(\sqrt{dL}) = O\left(\frac{\veps}{c_2^{1/2} d \sqrt{L}}\right) \le \frac{\veps}{8}
    \]
    for $c_2$ sufficiently large (using $d \ge 1$ and $L \ge 1$).
    
    \item \emph{Statistical error term:} $H \cdot 2d \veps_{\texttt{stat}}$.
    
    From \eqref{eq:stat-error}, $\veps_{\texttt{stat}} = C \cdot \sqrt{\frac{8(d\log(1+2nC) + \log(2Hd/\delta))}{n}} + \frac{2}{n}$.
    Since $C = O(\sqrt{dL})$, we have $\log(C) = O(\log(d) + \log(L)) = O(L)$ (using $\log(d) \le L$ and $\log(L) \le L$ since $L \ge 1$).
    Thus $\log(1+2nC) = O(\log(n) + \log(C)) = O(L)$.
    Also, $\log(2Hd/\delta) = O(L)$ since $L \ge \log(dH/(\delta\veps))$ and $\veps \le 1$.
    The numerator is thus $d\log(1+2nC) + \log(2Hd/\delta) = O(dL)$.
    Thus
    \[
      \veps_{\texttt{stat}} = O\left(C \cdot \sqrt{\frac{dL}{n}}\right) = O\left(\sqrt{dL} \cdot \sqrt{\frac{dL}{c_2 d^4 H^2 \veps^{-2} L^2}}\right) = O\left(\frac{\veps}{c_2^{1/2} d H}\right).
    \]
    Thus this term is at most
    \[
      H \cdot 2d \cdot O\left(\frac{\veps}{c_2^{1/2} d H}\right) = O\left(\frac{\veps}{c_2^{1/2}}\right) \le \frac{\veps}{8},
    \]
    for $c_2$ sufficiently large.
    
    \item \emph{Span error term:} $H \cdot (\sqrt{d}+C) \cdot 25 d \veps'$.
    
    With $\veps' = \veps/(c_0 d^2 H^2 L)$ and $\sqrt{d}+C = O(\sqrt{dL})$, this is
    \[
      O\left(H \cdot \sqrt{dL} \cdot d \cdot \frac{\veps}{c_0 d^2 H^2 L}\right) = O\left(\frac{\veps\sqrt{dL}}{c_0 d H L}\right) = O\left(\frac{\veps}{c_0 \sqrt{d} H \sqrt{L}}\right) \le \frac{\veps}{8}
    \]
    for $c_0$ sufficiently large (using $d, H, L \ge 1$).
    
    \item \emph{Concentration error term:} $H \cdot C \cdot \veps'$.
    
    This is $O\left(\frac{\sqrt{dL} \cdot \veps}{c_0 d^2 H L}\right) = O\left(\frac{\veps}{c_0 d^{3/2} H \sqrt{L}}\right) \le \frac{\veps}{8}$ for $c_0$ sufficiently large (using $d, H, L \ge 1$).
  \end{itemize}
  Summing all four contributions yields $\E[V_1^\pi(\x_1) - V_1^{\pihat}(\x_1)] \le \veps$ for all $\pi \in \Pi$.

  \paragraph{Step 4: Combining the events.}
  By the chain rule for conditional probabilities,
  \[
    \P[\cE^{\texttt{cov}} \cap \cE^{\texttt{opt}}] = \P[\cE^{\texttt{cov}}] \cdot \P[\cE^{\texttt{opt}} \mid \cE^{\texttt{cov}}] \ge (1 - \delta/4)(1 - 3\delta/4) \ge 1 - \delta.
  \]
  On the event $\cE^{\texttt{cov}} \cap \cE^{\texttt{opt}}$, the bound \eqref{eq:main-bound-intermediate} holds, and the analysis in Step 3 gives the stated suboptimality guarantee.
\end{proof} \clearpage
\part{Auxiliary Results}
\label{part:auxiliary}

\section{Guarantee of \textsc{ComputeOutlierDirection}}

\begin{algorithm}[ht]
  \caption{$\textsc{ComputeOutlierDirection}(P, n, m)$.}
  \label{alg:getdirection}
  \begin{algorithmic}[1]
    \Require{$P \text{ (distribution over $\reals^d$)},n, m$.}
    \State Draw $n$ samples $U = \{\u_1, \dots,\u_n\}$ from $P$.
    \For{$i = 1, \dots, n$}
    \State Initialize counter $S_i \gets 0$.
    \For{$j = 1, \dots, m$}
    \State Draw a fresh set of $n$ samples $W\ind{j} = \{\w\ind{j}_1, \dots, \w\ind{j}_n\}$ from $P$.
    \If{$\u_i \in \Span (W\ind{j})$}
    \State $S_i \gets S_i + 1$.
    \EndIf
    \EndFor
    \State Compute the empirical estimate: $\hat{p}_i = S_i/m$.
    \EndFor
    \State Select index $i^{\star} = \argmax_{i \in \{1, \dots, n\}} \hat{p}_i$.
    \State \textbf{Return:} $v = \u_{i^{\star}}$.
  \end{algorithmic}
\end{algorithm}

\begin{lemma}
  \label{lem:compute_outlier_direction}
  Let $P$ be a distribution over $\mathbb{R}^d$ and consider a call to $\textsc{ComputeOutlierDirection}(P, n, m)$ for some given $n, m \in \mathbb{N}$. Let $\w_1, \dots, \w_n$ be a set of i.i.d.~samples from $P$. Then, for any $\delta\in(0,1)$, with probability at least $1-\delta$, the output vector $v$ of $\textsc{ComputeOutlierDirection}$ satisfies:
  \begin{align}
    \P[v \in \Span\{\w_1, \dots, \w_n\} \mid v] \ge  1 - \frac{d}{n+1} - 2\sqrt{\frac{\log(4n/\delta)}{2m}} - \sqrt{\frac{\log(4/\delta)}{2n}}.
  \end{align}
  Further, the algorithm samples from $P$ a total of $K=n+m n^2$ points.
\end{lemma}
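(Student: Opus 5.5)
Write $q(u) \coloneqq \P_{W}[u \in \Span(W)]$ for a fixed $u\in\reals^d$, where $W=\{\w_1,\dots,\w_n\}$ is a fresh i.i.d.\ sample of size $n$ from $P$. The goal is to show $q(v)$ is large. The argument has three parts: a population statement about a typical sample $\u_i$, a concentration statement transferring it to the realized samples $U$, and a uniform estimation statement for the empirical scores $\hat p_i$. The sample count $K=n+mn^2$ is immediate: $n$ draws for $U$, plus $n$ draws for each of the $m$ sets $W\ind{j}$ for each of the $n$ indices.

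\textbf{Population step.} Let $\u\sim P$ be independent of $W$. The exchangeability result (\cref{thm:span}) with $n+1$ i.i.d.\ samples gives
\[
\E_{\u}[q(\u)] = \P[\u\in\Span(W)] \ge 1-\frac{d}{n+1}.
\]
Now $q(\u_1),\dots,q(\u_n)$ are i.i.d.\ $[0,1]$-valued with this mean. By Hoeffding, with probability at least $1-\delta/4$,
\[
\max_i q(\u_i) \;\ge\; \frac1n\sum_i q(\u_i) \;\ge\; 1-\frac{d}{n+1}-\sqrt{\frac{\log(4/\delta)}{2n}}.
\]
Using the average is simpler than arguing about the maximum directly.

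\textbf{Estimation step.} Condition on $U$. For each $i$, the counts $S_i$ are sums of $m$ i.i.d.\ Bernoulli$(q(\u_i))$ variables, since the $W\ind{j}$ are fresh. Hoeffding plus a union bound over $i\in[n]$ give, with probability at least $1-\delta/2$,
\[
|\hat p_i - q(\u_i)| \le \sqrt{\frac{\log(4n/\delta)}{2m}} \quad \text{for all } i.
\]
Then, with $i^\dagger=\argmax_i q(\u_i)$,
\[
q(v)=q(\u_{i^\star}) \ge \hat p_{i^\star}-\sqrt{\tfrac{\log(4n/\delta)}{2m}} \ge \hat p_{i^\dagger}-\sqrt{\tfrac{\log(4n/\delta)}{2m}} \ge q(\u_{i^\dagger})-2\sqrt{\tfrac{\log(4n/\delta)}{2m}}.
\]
This accounts for the factor of $2$ in the bound.

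\textbf{Combining.} A union bound over the two events gives failure probability at most $3\delta/4\le\delta$ and yields the claimed inequality. Note that $\P[v\in\Span\{\w_1,\dots,\w_n\}\mid v]=q(v)$ because the $\w$'s in the statement are fresh and independent of the algorithm's randomness.

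\textbf{Main obstacle.} The delicate point is conditioning. The set $U$ is random, so $q(\u_i)$ must be treated as conditional quantities: Hoeffding is applied to the $\hat p_i$ given $U$, and the population bound is applied over $U$ itself. Measurability of $u\mapsto q(u)$ is routine and is taken for granted.
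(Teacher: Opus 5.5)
Your proof is correct and follows the paper's argument. You use the exchangeability bound of \cref{thm:span} for the mean of $q(\u_i)$, one Hoeffding bound over the $n$ draws in $U$, and a second Hoeffding bound with a union bound over $i\in[n]$ for the scores $\hat p_i$ conditional on $U$; the only difference is cosmetic, since you route through $i^\dagger=\argmax_i q(\u_i)$ where the paper compares $\max_i \hat p_i$ to the average of the $\hat p_i$.
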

\begin{proof}
  Let $\w_1, \dots, \w_n$ be i.i.d.~samples from $P$ and let $\u_1, \dots, \u_n$ be as \cref{alg:getdirection}. Let $\bm{S}_n$ denote the random subspace $\Span\{\w_1,\dots,\w_n\}$. For any fixed vector $x \in \mathbb{R}^d$, we define the ``spannability'' probability:
  \[
    p(x) = \P[x \in \bm{S}_n].
  \]

  \paragraph{Step 1: Lower bound on the population mean.} Fix $i\in [n]$ and consider the expectation of $p(\u_i)$:
  \[
    \mu \coloneqq \mathbb{E}[p(\u_i)] =  \mathbb{E}[\P[ \u_i \in \bm{S}_n \mid \u_i]] = \P[\u_i \in \bm{S}_n].
  \]
  Since $\u_i, \w_1, \dots, \w_n$ are i.i.d.~samples from $P$, by \cref{thm:span}, we have
  \[
    \mathbb{P}[\u_i \notin \bm{S}_n] \le \frac{d}{n+1}.
  \]
  Therefore, the expected success probability is lower bounded by:
  \begin{align}
    \mu \ge 1 - \frac{d}{n+1}.  \label{eq:mu}
  \end{align}
  \paragraph{Step 2: Concentration of the empirical average.} Fix $\delta \in(0,1)$. By Hoeffding's Inequality, there is an event $\cE$ of measure at least $1-\delta/2$ such that under $\cE$, we have that
  \begin{align}
    \frac{1}{n} \sum_{i=1}^n p(\u_i) \geq \mu - \sqrt{\frac{\log(4/\delta)}{2n}}. \label{eq:hoeffding1}
  \end{align}
  Now, consider the quantity $\hat{p}_i$ in \cref{alg:getdirection}. By Hoeffding's inequality, there are events $\cE_1,\dots, \cE_n$ such that for each $i$, $\P[\cE_i] \ge 1-\frac{\delta}{2 n}$ and under $\cE_i$, we have that
  \begin{align}
    p(\u_i) + \sqrt{\frac{\log(4n/\delta)}{2m}}	\geq \hat{p}_i \ge p(\u_i) - \sqrt{\frac{\log(4n/\delta)}{2m}}. \label{eq:hoeffding2}
  \end{align}
  Let $i^\star$ be as in \cref{alg:getdirection}. By combining \eqref{eq:hoeffding1} and \eqref{eq:hoeffding2} and using the union bound (so that \eqref{eq:hoeffding1} and \eqref{eq:hoeffding2}, for all $i$, hold simultaneously), we have that with probability at least $1-\delta$,
  \begin{align}
    \P[v \in \Span\{\w_1, \dots, \w_n\} \mid v] & = p(\u_{i^\star}), \nn                                                                          \\                                    & \geq \max_{i\in [n]} \hat{p}_i - \sqrt{\frac{\log(4n/\delta)}{2m}},  \quad (\text{by \eqref{eq:hoeffding2} and the definition of $i^\star$})\nn             \\
                                                & \geq \frac{1}{n} \sum_{i=1}^n \hat{p}_i - \sqrt{\frac{\log(4n/\delta)}{2m}}, \nn                \\                                 & \ge \frac{1}{n} \sum_{i=1}^n p(\u_i) - 2\sqrt{\frac{\log(4n/\delta)}{2m}},  \nn\\  & \geq \mu - 2 \sqrt{\frac{\log(4n/\delta)}{2m}} - \sqrt{\frac{\log(4/\delta)}{2n}},  \quad \text{(by \eqref{eq:hoeffding1})}\nn \\
                                                & \geq 1 - \frac{d}{n+1} - 2\sqrt{\frac{\log(4n/\delta)}{2m}} - \sqrt{\frac{\log(4/\delta)}{2n}},
  \end{align}
  where the last inequality follows from \eqref{eq:mu}.
\end{proof}

\begin{lemma}
  \label{thm:span}
  Let $P$ be an arbitrary probability distribution over $\mathbb{R}^d$. Let $\bm{u}_1, \dots, \bm{u}_{n+1}$ be i.i.d.~random vectors drawn from $P$. Then, the expected probability that the $(n+1)$-th sample falls outside the span of the first $n$ samples is bounded by:
  \[
    \P\left[ \bm{u}_{n+1} \notin \operatorname{Span}(\bm{u}_1, \dots, \bm{u}_n) \right] \le \frac{d}{n+1}.
  \]
\end{lemma}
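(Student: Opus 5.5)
The plan is an exchangeability argument over the $n+1$ samples. For each $k\in[n+1]$, let $E_k$ be the event that $\bm{u}_k \notin \operatorname{Span}\{\bm{u}_j : j\neq k\}$. The samples are i.i.d., so the joint law of $(\bm{u}_1,\dots,\bm{u}_{n+1})$ is invariant under permutations. Hence $\P[E_k]$ does not depend on $k$, and in particular equals the quantity of interest $\P[E_{n+1}]$. Therefore
\begin{align*}
\P[E_{n+1}] = \frac{1}{n+1}\sum_{k=1}^{n+1}\P[E_k] = \frac{1}{n+1}\,\E\left[\sum_{k=1}^{n+1}\mathbb{I}\{E_k\}\right],
\end{align*}
and it suffices to show that, deterministically, at most $d$ of the events $E_k$ can hold at once.

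For the deterministic claim, fix vectors $u_1,\dots,u_{n+1}\in\reals^d$ and let $K=\{k : u_k\notin \operatorname{Span}\{u_j:j\neq k\}\}$. I will show that $\{u_k\}_{k\in K}$ is linearly independent, which gives $|K|\le d$. Suppose $\sum_{k\in K} c_k u_k = 0$ with some $c_{k_0}\neq 0$. Then $u_{k_0} = -c_{k_0}^{-1}\sum_{k\in K\setminus\{k_0\}} c_k u_k$, which lies in $\operatorname{Span}\{u_j:j\neq k_0\}$. This contradicts $k_0\in K$. Note also that $u_k\neq 0$ for every $k\in K$, since the zero vector always lies in any span, so the independence statement is meaningful.

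Combining the two steps, $\sum_k \mathbb{I}\{E_k\}\le d$ pointwise, so $\P[E_{n+1}]\le d/(n+1)$.

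The only delicate point is measurability of the events $E_k$, which is needed for the probabilities to make sense. This is mild: membership in a span can be written as a rank condition, namely $\operatorname{rank}[u_j : j\neq k] = \operatorname{rank}[u_j : j\in[n+1]]$, and rank is a Borel function of the matrix entries. I do not expect any step to be a real obstacle; the key idea is just the symmetrization followed by the linear-independence count.
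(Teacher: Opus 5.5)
Your proof is correct and follows the paper's argument: symmetrize over the $n+1$ exchangeable samples, then use the deterministic fact that the ``critical'' samples (those outside the span of the others) are linearly independent, so at most $d$ of them exist. Your index-based definition of $E_k$, which removes the index $k$ rather than the vector $u_k$ from a set, is slightly cleaner than the paper's set notation when $P$ has atoms. The measurability remark is a harmless addition the paper omits.
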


\begin{proof}
  Let $S = \{\u_1, \dots, \u_{n+1}\}$ be the set of $n+1$ i.i.d. samples. We utilize a symmetry (exchangeability) argument based on the concept of ``critical'' samples.

  \paragraph{Critical samples.}
  For a fixed set of samples $S$, we say that the sample $\u_i \in S$ is critical with respect to $S$ if it does not belong to the span of the other samples:
  \[
    \u_i \notin \operatorname{Span}(S \setminus \{\u_i\}).
  \]
  If $\u_i$ is critical, it implies that removing $\u_i$ strictly decreases the dimension of the subspace spanned by the set. That is:
  \[
    \dim(\operatorname{Span}(S \setminus \{\u_i\})) < \dim(\operatorname{Span}(S)).
  \]

  \paragraph{Bounding the number of critical samples.}
  Let $C \subseteq S$ be the set of all critical samples in $S$. We claim that the vectors in $C$ are linearly independent.
  \begin{itemize}[leftmargin=*]
    \item Suppose for the sake of contradiction that the vectors in $C$ are linearly dependent. Then, there exists a non-trivial linear combination:
          \[
            \sum_{\u_j \in C} \alpha_j \u_j = 0,
          \]
          where not all $\alpha_j$ are zero.
    \item Let $\u_k \in C$ be a vector with a non-zero coefficient $\alpha_k \neq 0$. We can rewrite the equation as:
          \[
            \u_k = - \sum_{\u_j \in C, j \neq k} \frac{\alpha_j}{\alpha_k} \u_j.
          \]
    \item This equation expresses $\u_k$ as a linear combination of other vectors in $C$. Since $C \subseteq S$, the other vectors are in $S \setminus \{\u_k\}$. Thus, $\u_k \in \operatorname{Span}(S \setminus \{\u_k\})$.
    \item This contradicts the definition of $\u_k$ being a critical sample.
  \end{itemize}
  Therefore, the set of critical samples $C$ must be linearly independent. Since the vectors lie in $\mathbb{R}^d$, the size of any linearly independent set is bounded by the dimension $d$. Consequently, the number of critical samples is at most $d$:
  \[
    |C| \le d.
  \]

  \paragraph{Applying exchangeability.}
  Define the indicator variable $\bm{I}_i$ for the event that $\u_i$ is critical in the set of $n+1$ samples:
  \[
    \bm{I}_i = \mathbb{I}\{\u_i \notin \operatorname{Span}(\{\u_1, \dots, \u_{n+1}\} \setminus \{\u_i\})\}.
  \]
  The quantity we wish to bound is the probability that the specific sample $\u_{n+1}$ is not in the span of the previous $n$. By the definition above, this is exactly the probability that $\u_{n+1}$ is critical:
  \[
    \P[\u_{n+1} \notin \operatorname{Span}(\u_1, \dots, \u_n)] = \P[\bm{I}_{n+1} = 1].	\]
  Since the samples $\u_1, \dots, \u_{n+1}$ are i.i.d.~the joint distribution is invariant under permutations. Thus, the probability of being critical is identical for every index $i$:
  \[
    \P[\bm{I}_1 = 1] = \P[\bm{I}_2 = 1] = \dots = \P[\bm{I}_{n+1} = 1].
  \]
  Let this common probability be $p_{\texttt{fail}}$. The expected number of critical samples is the sum of these probabilities:
  \begin{align}
    \mathbb{E}[\text{number of critical samples}] & = \mathbb{E}\left[\sum_{i=1}^{n+1} \bm{I}_i \right], \nonumber      \\
                                                  & =	\sum_{i=1}^{n+1} \P[\bm{I}_i = 1] = (n+1)\cdot p_{\texttt{fail}}.
  \end{align}
  However, we established deterministically that the number of critical samples is never greater than $d$. Therefore,
  \[
    (n+1)\cdot p_{\texttt{fail}} \le d \implies p_{\texttt{fail}} \le \frac{d}{n+1}.
  \]
  Substituting $p_{\texttt{fail}} = \P[\u_{n+1} \notin \operatorname{Span}(\u_1, \dots, \u_n)]$, we obtain:
  \[
    \P[\u_{n+1} \notin \operatorname{Span}(\u_1, \dots, \u_n)] \le \frac{d}{n+1}.
  \]
\end{proof}

\section{Guarantee of \textsc{FQI} with Deterministic Rewards and Transitions}
\label{sec:fqi-deterministic}

\begin{algorithm}[ht]
  \caption{$\textsc{FQI}(h, r_{1:h},\Psi_{1:h-1},
      n)$:
    Fitted $Q$-iteration \citep{munos2008finite}.}
  \label{alg:fqi}
  \begin{algorithmic}[1]
    \Require{$h, r_{1:h},\Psi_{1:h-1},
        n$.}
    \State Define $\Qhat_{h}(\cdot,\cdot)= r_h(\cdot,\cdot)$ and set $\pihat_h(\cdot)\in \argmax_{a\in \cA} \Qhat_{h}(\cdot, a)$.
    \For{$\ell = h-1, \ldots, 1$}
    \State Set $\cD_\ell\gets \emptyset$.
    \For{$\pi\in\Psi_\ell$}
    \For{$n$ times}
    \State Sample $(\x_1, \a_1, \dots,\x_\ell,\a_\ell, \x_{\ell+1}) \sim \P^{\pi}$.
    \State Set $\bm{y}_\ell \gets   r_\ell(\x_\ell, \a_\ell) + \max_{a\in \cA} \Qhat_{\ell+1}(\x_{\ell+1},a)$.
    \State Update $\cD_{\ell}\gets \cD_{\ell}\cup \{(\x_\ell,\a_\ell, \bm{y}_\ell)\}$.
    \EndFor
    \EndFor
    \State Set $\Sigma_\ell \gets \sum_{(x,a,y)\in \cD_\ell} \phi_\ell(x,a)\phi_\ell(x,a)^\top$.
    \State Set $\wh\theta_\ell \gets \Sigma_{\ell}^{\dagger} \cdot \sum_{(x,a,y)\in \cD_{\ell}}
      \phi_\ell(x, a) \cdot y$.  \label{line:define-w-hat}
    \State Define $\Qhat_\ell(\cdot,\cdot) \coloneqq \phi_\ell(\cdot,\cdot)^\top \wh\theta_\ell$ and set $\pihat_\ell(\cdot) \coloneqq  \argmax_{a \in
        \A} \Qhat_\ell(\cdot,a)$.
    \EndFor
    \State \textbf{Return:} $\pihat =\pihat_{1:h}$.
  \end{algorithmic}
\end{algorithm}

\begin{lemma}
  \label{lem:fqi}
  Fix $\delta \in(0,1)$. Consider a call to \textsc{FQI} (\cref{alg:fqi}) with input $h \geq 1$, rewards $r_{1:h}$, collection of policies $\Psi_{1:h-1}$, and $n \in \mathbb{N}$. Further, let $\cS_{1:h-1}\subseteq \reals^d$ be a collection of subspaces and define $\cX\ind{\ell} \coloneqq \bigcap_{j \in[\ell-1]}\{x\in \cX \mid \Span\{\phi^{\pi}_j(x) : \pi \in \Pi\} \subseteq \cS_{j}\}$ for $\ell \in [h]$. Suppose the following:
  \begin{enumerate}[leftmargin=*]
    \item \cref{ass:features,ass:bellman_complete} hold.
    \item The rewards $r_{1:h}$ are such that for all $\ell \in [h]$, there exists a vector $w_\ell \in \mathbb{R}^d$ such that $r_\ell(\cdot, \cdot)= w_\ell^\top \phi_\ell(\cdot,\cdot)$.
    \item  The collections of policies $\Psi_{1:h-1}$ satisfy: for all $\ell\in[h-1]$ and i.i.d.~$\x_1, \x_1\ind{1},\dots,\x_1\ind{n}$: \label{item:span}
          \begin{align}
            \P[\cS_\ell \subseteq \Span\left\{\phi^\pi_\ell(\x_{1}\ind{m}): m\in[n], \pi \in \Psi_{\ell}\right\}] \geq 1-\frac{\delta}{H^2}.
          \end{align}
  \end{enumerate}
  Then, with probability at least $1-\delta$, the output policies $\pihat_{1:h}$ of \textsc{FQI} satisfy,
  \begin{align}
    \forall x\in \cX\ind{h},\quad 	\max_{\pi \in \Pi} V^{\pi}_1(x;r_{1:h})\leq V^{\pihat}_1(x;r_{1:h}).\label{eq:fqi}
  \end{align}
\end{lemma}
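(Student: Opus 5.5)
We prove \eqref{eq:fqi} by backward induction on $\ell=h,h-1,\dots,1$, working on a single good event. Let $\cE$ be the event that for every $\ell\in[h-1]$, $\cS_\ell\subseteq\Span\{\phi^\pi_\ell(\x_1\ind{m}): m\in[n],\pi\in\Psi_\ell\}$, where the $\x_1\ind{m}$ are the (independent) initial states used to build $\cD_\ell$. By \cref{item:span} and a union bound over $\ell\le h-1\le H$, $\P[\cE]\ge 1-\delta/H\ge 1-\delta$. Because transitions and the sampling policies are deterministic, each dataset point of $\cD_\ell$ has feature exactly $\phi^\pi_\ell(\x_1\ind{m})$. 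Hence on $\cE$ the column space of $\Sigma_\ell$ contains $\cS_\ell$.

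The inductive claim is the following. For each $\ell\in[h]$ there is a parameter $\theta^\star_\ell$ with $\phi_\ell(x,a)^\top\theta^\star_\ell = r_\ell(x,a)+\max_{\pi}V^\pi_{\ell+1}(x'_{x,a};r_{1:h})$, where $x'_{x,a}$ is the deterministic successor (for $\ell=h$ the claim reads $\theta^\star_h=w_h$). We will show that $\Qhat_\ell(x_\ell,a)=Q^\star_\ell(x_\ell,a;r_{1:h})$ for every $a$ and every state $x_\ell$ reached at layer $\ell$ from some $x\in\cX\ind{\ell}$ by some policy. The base case $\ell=h$ is immediate since $\Qhat_h=r_h$.

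For the step from $\ell+1$ to $\ell$, first use the induction hypothesis to write $\max_a\Qhat_{\ell+1}(\cdot,a)$ as $\max_a\phi_{\ell+1}(\cdot,a)^\top\wh\theta_{\ell+1}$. This is the max of a linear function, so linear Bellman completeness (\cref{def:lbc}, after rescaling $\wh\theta_{\ell+1}$ into $\cB_{\ell+1}$ and using homogeneity of the max) gives a vector $\tilde\theta_\ell$ with $\phi_\ell(x,a)^\top\tilde\theta_\ell = r_\ell(x,a)+\max_{a'}\Qhat_{\ell+1}(x'_{x,a},a')$ for all $(x,a)$; reward linearity contributes $w_\ell$. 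The regression targets are therefore noiseless: $y=\phi_\ell^\top\tilde\theta_\ell$ exactly. The minimum-norm least-squares solution then satisfies $\Sigma_\ell(\wh\theta_\ell-\tilde\theta_\ell)=0$, as in \cref{lem:noiseless-regression-orthogonality}. So $\wh\theta_\ell-\tilde\theta_\ell$ is orthogonal to the column space of $\Sigma_\ell$, which contains $\cS_\ell$. For $x\in\cX\ind{\ell+1}$ every reachable feature $\phi^\pi_\ell(x)$ lies in $\cS_\ell$, so $\Qhat_\ell$ and $\phi_\ell^\top\tilde\theta_\ell$ agree on all reachable $(x_\ell,a)$.

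Finally we check that $\phi_\ell^\top\tilde\theta_\ell$ equals $Q^\star_\ell$ at those pairs. This uses the induction hypothesis at layer $\ell+1$: the successor of a reachable pair is itself reachable from the same $x$, and $x\in\cX\ind{h}\subseteq\cX\ind{\ell+2}$, so the needed containment holds. Greedy actions with respect to $\Qhat_\ell$ are therefore optimal along every trajectory from $x\in\cX\ind{h}$. Unrolling from $\ell=1$ gives $V^{\pihat}_1(x;r_{1:h})=\max_\pi V^\pi_1(x;r_{1:h})$.

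The main obstacle is bookkeeping the sets of states on which exactness is required. The agreement at layer $\ell$ only needs features of pairs reachable from $x\in\cX\ind{h}$, and since $\cX\ind{h}\subseteq\cX\ind{\ell+1}$, all those features lie in $\cS_\ell$. So I would state the induction with the single fixed set $\cX\ind{h}$ and the hypothesis ``$\Qhat_\ell=Q^\star_\ell$ on all pairs $(x_\ell,a)$ with $x_\ell$ reachable at layer $\ell$ from $\cX\ind{h}$''. One subtlety here: the action $a$ is arbitrary, but $(x_\ell,a)$ is still reachable by a policy playing $a$ at $x_\ell$, so its feature also lies in $\cS_\ell$. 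A second, minor point is that \cref{def:lbc} maps only $\cB_{\ell+1}\to\cB_\ell$. Since $\max_a\phi^\top(c\theta)=c\max_a\phi^\top\theta$ for $c>0$, the backup is positively homogeneous, so bounded $\phi$ lets us rescale any $\wh\theta_{\ell+1}$ into $\cB_{\ell+1}$ and back; no norm bound on $\wh\theta$ is needed.
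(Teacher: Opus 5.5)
Your proof is correct, but it follows a different route from the paper's.

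\textbf{The paper's route.} The paper never identifies $\Qhat_\ell$ with an optimal $Q$-function. It fixes $x\in\cX\ind{h}$ and a comparator $\pi$, then combines the Bellman-residual decomposition of \cref{cor:decomp-diff} (with $\x_1=x$) and linear Bellman completeness to obtain
\[
V_1^\pi(x;r_{1:h})-V_1^{\pihat}(x;r_{1:h})\le \sum_{\ell=1}^{h-1}\iprod{\tilde\theta_\ell-\wh\theta_\ell}{\phi^\pi_\ell(x)-\phi^{\pihat}_\ell(x)}.
\]
It then shows each term is zero using your ingredients: noiseless targets, the normal equations $\Sigma_\ell(\wh\theta_\ell-\tilde\theta_\ell)=0$, and $\phi^\pi_\ell(x)-\phi^{\pihat}_\ell(x)\in\cS_\ell\subseteq\mathrm{Col}(\Sigma_\ell)$. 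Each layer is handled separately, so the paper needs no induction, no reachable-set bookkeeping, and no dynamic-programming principle for $\max_\pi V^\pi$. The residual only has to vanish along the two trajectories of $\pi$ and $\pihat$.

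\textbf{Your route.} Your backward induction proves the stronger statement that $\Qhat_\ell=Q^\star_\ell$ at every action of every state reachable from $\cX\ind{h}$. This costs you three things:
\begin{itemize}
\item the identity $\max_{a'}Q^\star_{\ell+1}(x',a')=\max_\pi V^\pi_{\ell+1}(x')$, valid here because transitions are deterministic and policies Markov;
\item vanishing residuals at all reachable pairs, which you correctly obtain by switching the policy to play $a$ at $x_\ell$;
\item the right choice of starting set.
\end{itemize}
On the last point, keep the formulation from your final paragraph. With $\cX\ind{\ell}$ in place of $\cX\ind{h}$ the induction does not close: $\cX\ind{\ell}\supseteq\cX\ind{\ell+1}$, and layer-$\ell$ features from states in $\cX\ind{\ell}\setminus\cX\ind{\ell+1}$ need not lie in $\cS_\ell$.

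\textbf{What each buys.} In return, your argument is self-contained and yields the equality $V_1^{\pihat}(x;r_{1:h})=\max_\pi V_1^\pi(x;r_{1:h})$. It also gives the interpretable fact that FQI recovers $Q^\star$ on the reachable set. Finally, it makes explicit the rescaling into $\cB_{\ell+1}$ that the paper uses implicitly. The global linearity of $Q^\star_\ell$ (your $\theta^\star_\ell$) is true but never used, so you can drop it.
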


\begin{corollary}
  \label{cor:fqi}
  Fix $\delta \in (0,1)$. Consider the setting of \cref{lem:fqi} with $r_{1:h-1}\equiv 0$ and $r_h(\cdot, \cdot) = w_h^\top \phi_h(\cdot,\cdot)$ for some $w_h\in \mathbb{R}^d$. Then, with probability at least $1-\delta$, the output policies $\pihat_{1:h}$ of \textsc{FQI} satisfy,
  \begin{align}
    \forall x\in \cX\ind{h}, \quad \max_{\pi \in \Pi} \phi_h^\pi(x)^\top w_h \leq \phi_h^{\pihat}(x)^\top w_h. \label{eq:corfqi}
  \end{align}
\end{corollary}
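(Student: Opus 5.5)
The corollary is a direct specialization of \cref{lem:fqi}. I will apply \cref{lem:fqi} with rewards $r_{1:h-1}\equiv 0$ and $r_h(\cdot,\cdot)=w_h^\top\phi_h(\cdot,\cdot)$. Then I will translate the value-function inequality \eqref{eq:fqi} into the feature inequality \eqref{eq:corfqi}.

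\emph{Checking the hypotheses.} Linearity of the rewards holds with $w_\ell=0$ for $\ell<h$ and $w_h$ at layer $h$. The assumption conditions and the coverage condition (\cref{item:span}) are inherited verbatim from ``the setting of \cref{lem:fqi}''. So on an event of probability at least $1-\delta$ we obtain, for all $x\in\cX\ind{h}$,
\[
\max_{\pi\in\Pi}V_1^\pi(x;r_{1:h})\le V_1^{\pihat}(x;r_{1:h}).
\]

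\emph{Identifying the value function.} Transitions are deterministic (\cref{ass:bellman_complete}) and policies are deterministic. Hence the trajectory from $\x_1=x$ under $\pi$ is unique, and the value is a sum of rewards along that trajectory. Since $r_{1:h-1}\equiv0$, and since $Q_1^\pi(\cdot;r_{1:h})$ sums only up to layer $h$ (the rewards are defined only for $\ell\le h$, so we treat $r_{h+1:H}\equiv 0$), we get
\[
V_1^\pi(x;r_{1:h})=r_h(x_h,a_h)=w_h^\top\phi_h^\pi(x).
\]
Here $(x_h,a_h)$ is the state-action pair reached at layer $h$. This identity holds for every $\pi\in\Pi$, including $\pihat$. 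The only subtlety is that $\pihat$ is defined on layers $1{:}h$. Its behaviour after layer $h$ is irrelevant because the rewards vanish there, so $\phi^{\pihat}_h(x)$ is well defined.

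Substituting this identity into the inequality above gives \eqref{eq:corfqi} on the same event. No step is a genuine obstacle; the only care needed is the bookkeeping that $V_1(\cdot;r_{1:h})$ ignores layers beyond $h$, so the value collapses to a single linear functional of $\phi_h^\pi(x)$.
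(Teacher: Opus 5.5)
Your proposal is correct and is the intended argument: the paper gives no separate proof of \cref{cor:fqi}, treating it as the specialization of \cref{lem:fqi} to $r_{1:h-1}\equiv 0$, $r_h=w_h^\top\phi_h$, where deterministic transitions reduce $V_1^\pi(x;r_{1:h})$ to $w_h^\top\phi_h^\pi(x)$ for every $\pi$, including $\pihat$. Your convention that rewards beyond layer $h$ are zero matches the proof of \cref{lem:fqi}, which uses $\cT_h^r[\Qhat_{h+1}]-\Qhat_h\equiv 0$.
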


\begin{proof}[Proof of \cref{lem:fqi}]
  Let $\cT_h^r[Q](x,a) \coloneqq r_h(x,a) + \E\left[\max_{a'\in \cA}Q(\x_{h+1}, a')  \mid \x_h =x, \a_h =a\right]$ be the Bellman operator under rewards $r_{1:H}$. Fix $x\in \cX\ind{h}$ and $\pi \in \Pi$. By \cref{cor:decomp-diff}, we have
  \begin{align}
  V_1^{\pi}(x;r_{1:h}) - V_1^{\pihat}(x;r_{1:h}) & \leq \sum_{\ell=1}^h \E^{\pi} \left[ \left(\cT_\ell^r[\Qhat_{\ell+1}] - \Qhat_{\ell}\right)(\x_\ell,\a_\ell) \mid \x_1 = x\right] \nn \\
  & \quad  -  \sum_{\ell=1}^h \E^{\pihat} \left[ \left(\cT_\ell^r[\Qhat_{\ell+1}] - \Qhat_{\ell}\right)(\x_\ell,\a_\ell) \mid \x_1 =x\right], \nn \\
  & \leq \sum_{\ell=1}^{h-1} \E^{\pi} \left[ \left(\cT_\ell^r[\Qhat_{\ell+1}] - \Qhat_{\ell}\right)(\x_\ell,\a_\ell) \mid \x_1 = x\right] \nn \\
  & \quad  -  \sum_{\ell=1}^{h-1} \E^{\pihat} \left[ \left(\cT_\ell^r[\Qhat_{\ell+1}] - \Qhat_{\ell}\right)(\x_\ell,\a_\ell) \mid \x_1 =x\right], \label{eq:fqi-decomp}
  \end{align}
  where the last step follows from the fact that for $\ell=h$, \(\cT_h^r[\Qhat_{h+1}]-\Qhat_h\equiv 0\).
  Using \cref{ass:bellman_complete} (linear Bellman completeness), we have that there exists $\tilde\theta_\ell \in \reals^d$ such that for all $(x',a)\in \cX\times\cA$:
  \begin{align}
  r_\ell(x', a) + \E\left[ \max_{a'\in \cA} \Qhat_{\ell+1}(\x_{\ell+1},a')\mid \x_\ell =x',\a_\ell =a  \right] = \phi_{\ell}(x',a)^\top \tilde\theta_\ell. \nn 
  \end{align} 
  Using this, the definition of the regression targets $\bm{y}_\ell$ in \cref{alg:fqi}, and the deterministic assumption (which implies that $\bm{y}_\ell  = \E[\bm{y}_\ell \mid \x_\ell, \a_\ell]$), we have 
  \begin{align}
    \bm{y}_\ell 
    & = r_\ell(\x_\ell, \a_\ell) + \E\left[ \max_{a'\in \cA} \Qhat_{\ell+1}(\x_{\ell+1},a')\mid \x_\ell, \a_\ell\right] = \phi_\ell(\x_\ell, \a_\ell)^\top \tilde\theta_\ell.
  \end{align}
Thus, since $\Qhat_\ell(\cdot,\cdot) = \phi_\ell(\cdot,\cdot)^\top \wh\theta_\ell$, the Bellman residual is linear:
  \begin{align}
    (\cT_\ell^r[\Qhat_{\ell+1}] - \Qhat_{\ell})(\cdot, \cdot) = \iprod{\tilde\theta_\ell - \wh\theta_\ell}{\phi_\ell(\cdot, \cdot)}.
  \end{align}
  Substituting this into \eqref{eq:fqi-decomp}, we get
  \begin{align}
     V_1^{\pi}(x) - V_1^{\pihat}(x) & \leq \sum_{\ell=1}^{h-1} \iprod{\tilde\theta_\ell - \wh\theta_\ell}{\phi^\pi_\ell(x) - \phi^{\pihat}_\ell(x)}. \label{eq:fqi-linear}
  \end{align}
  Note that by definition of $\cX\ind{h}$ and the fact that $x\in \cX\ind{h}$ and $\cS_\ell$ is a subspace, we have that
  \begin{align}
    \phi^{\pi}_\ell(x) - \phi^{\pihat}_\ell(x) \in \cS_\ell, \quad \forall \ell\in[h-1]. \label{eq:setincl-fqi}
  \end{align}

  Now, fix $\ell\in[h-1]$. Consider the dataset $\cD_\ell$ constructed in \cref{alg:fqi}. For every $\pi' \in \Psi_\ell$, $\cD_\ell$ contains $n$ i.i.d.~tuples $(\x_\ell,\a_\ell,\bm{y}_\ell)$, where \((\x_\ell,\a_\ell)\) are generated by executing $\pi'$ up to time $\ell$. Define $u_{\ell}(x,\pi)\coloneqq \phi^{\pi}_\ell(x)-\phi^{\pihat}_\ell(x)$.
   By \eqref{eq:setincl-fqi} we have $u_{\ell}(x,\pi)\in\cS_\ell$. 
   Applying \cref{item:span}, we have that with probability at least $1-\delta/H$,
  \begin{align}
    \cE_\ell \coloneqq \left\{ \cS_\ell \subseteq \Span\{\phi_\ell(x',a):(x',a,y)\in \cD_\ell\} \right\}
  \end{align}
  holds. Condition on $\cE_\ell$ for the remaining argument. Since $u_{\ell}(x,\pi) \in \cS_\ell$, we have
  \begin{align}
    u_{\ell}(x,\pi)\in \Span\{\phi_\ell(x',a):(x',a,y)\in \cD_\ell\}.\label{eq:linearcomb-fqi}
  \end{align}
  On the other hand, by \cref{lem:noiseless-regression-orthogonality} applied to the minimum-norm least-squares estimator $\wh\theta_\ell$ with targets $\bm{y}_\ell$ (which are linear in $\phi_\ell$ as shown above), we have
  \begin{align}
    \Sigma_\ell(\wh\theta_\ell-\tilde\theta_\ell)= 0, \label{eq:regression-fqi}
  \end{align}
  where $\Sigma_\ell = \sum_{(x',a,y)\in \cD_\ell} \phi_\ell(x',a) \phi_\ell(x',a)^\top$.
  Since $\mathrm{Col}(\Sigma_\ell)=\Span\{\phi_\ell(x',a):(x',a,y)\in\cD_\ell\}$, \eqref{eq:regression-fqi} implies that $\wh\theta_\ell-\tilde\theta_\ell$ is orthogonal to $\Span\{\phi_\ell(x',a):(x',a,y)\in\cD_\ell\}$, and so by \eqref{eq:linearcomb-fqi}, we have
  \[
    \iprod{\wh\theta_\ell-\tilde\theta_\ell}{\phi^{\pi}_\ell(x)-\phi^{\pihat}_\ell(x)}=0.
  \]
  By a union bound over $\ell\in[h-1]$, with probability at least $1-\delta$,
  $\cE_\ell$ holds for all $\ell\in[h-1]$. Further, under $\bigcap_{\ell=1}^{h-1} \cE_\ell$, we have
  \begin{align}
    \forall \ell\in[h-1], \quad |\iprod{\tilde\theta_\ell - \wh\theta_\ell}{\phi^{\pi}_\ell(x) - \phi^{\pihat}_\ell(x)}| =0. \label{eq:plug-fqi}
  \end{align}
  Plugging this into \eqref{eq:fqi-linear} yields $V_1^\pi(x;r_{1:h}) - V_1^{\pihat}(x;r_{1:h}) \le 0$.
\end{proof}

\begin{algorithm}[!htbp]
  \caption{\apxspanner: Approximate barycentric spanner via approximate linear optimization (adapted from \textsc{RobustSpanner} in \cite{mhammedi2023efficient})}
  \label{alg:spanner}
  \begin{algorithmic}[1]\onehalfspacing
    \Require~
    \begin{itemize}[leftmargin=*]
      \item Approximate linear optimization subroutine
            $\apx:\reals^d\to \cZ$. 
      \item Approximate index-to-vector subroutine
            $\est:\cZ\rightarrow  \reals^d$.
      \item Parameters $C,\veps>0$.
    \end{itemize}
    \State Set $W =(w_1,\dots,w_d)= (e_1,\dots,e_d)$.
    \For{$i=1,\dots, d$} \label{line:firstfor}
    \State Set $\theta_i =(\det(e_j,W_{-i}))_{j\in[d]}\in
      \reals^d$. \hfill\algcommentlight{$W_{-i}$ is defined to be
      $W$ without the $i$th column}
    \State Set $z_i^+ = \apx(\theta_i/\|\theta_i\|)$ and $w_i^+= \est(z_i^+)$.
    \State Set $z_i^- = \apx(-\theta_i/\|\theta_i\|)$ and $w_i^-= \est(z_i^-)$.
    \If{$\theta_i^\top w^+_i\geq - \theta_i^\top w^-_i$} \label{line:if}
    \State Set $\wtilde w_i= w^+_i$, $z_i = z_i^+$, and $w_i = \wtilde w_i + \veps \theta_i/\|\theta_i\|$.
    \Else
    \State Set $\wtilde w_i= w^-_i$, $z_i = z_i^-$, and $w_i = \wtilde w_i - \veps \theta_i/\|\theta_i\|$.
    \EndIf \label{line:endif}
    \EndFor
    \For{$n=1,2,\dots$} \label{line:for}
    \State Set $i=1$.
    \While{$i\leq d$}
    \State Set $\theta_i =(\det(e_j,W_{-i}))_{j\in[d]}\in \reals^d$.
    \State Set $z_i^+ = \apx(\theta_i/\|\theta_i\|)$ and $w_i^+= \est(z_i^+)$.
    \State Set $z_i^- = \apx(-\theta_i/\|\theta_i\|)$ and $w_i^-= \est(z_i^-)$.
    \If{$\theta_i^\top w_i^+ +\veps \cdot \|\theta_i\|  \geq C \cdot |\det(w_i, W_{-i})|$}
    \State Set $\wtilde w_i = w_i^+$, $z_i = z_i^+$, and $w_i = \wtilde w_i  + \veps \cdot \theta_i/\|\theta_i\|$.
    \State \textbf{break}
    \ElsIf{$-\theta_i^\top w_i^- +\veps \cdot \|\theta_i\|  \geq C \cdot |\det(w_i, W_{-i})|$}
    \State  Set $\wtilde w_i = w_i^-$, $z_i = z_i^-$, and $w_i = \wtilde w_i  - \veps \cdot \theta_i/\|\theta_i\|$.
    \State \textbf{break}
    \EndIf
    \State Set $i = i+1$.
    \EndWhile
    \If{$i=d+1$}
    \State  \textbf{break}
    \EndIf
    \EndFor
    \State \textbf{Return:} $(z_1, \dots, z_d)$.
  \end{algorithmic}
\end{algorithm}

\begin{algorithm}[tp]
  \caption{$\veceval(h, F, \pi,n)$: Estimate
    $\E^{\pi}[F(\x_h,\a_h)]$ for policy $\pi$ and function
    $F:\cX\times\cA\rightarrow\reals^d$.}
  \label{alg:veceval}
  \begin{algorithmic}[1]\onehalfspacing
    \Require~
    \begin{itemize}[leftmargin=*]
      \item Target layer $h\in[H]$.
      \item Vector-valued function $F:\cX \times \cA\rightarrow \reals^d$.
      \item Policy $\pi\in \Pim$.
      \item Number of samples $n\in \mathbb{N}$.
    \end{itemize}
    \State $\cD \gets\emptyset$.
    \For{$n$ times}
    \State Sample $(\x_h, \a_h)\sim
      \pi$.
    \State Update dataset: $\cD \gets \cD \cup \left\{\left(\x_h, \a_h\right)\right\}$.
    \EndFor
    \State \textbf{Return:} $\bar F = \frac{1}{n}\sum_{(x, a)\in\cD} F(x,a)$.
  \end{algorithmic}
\end{algorithm}

\section{Guarantee of \veceval{}}
\begin{lemma}[Guarantee of \veceval{}]
  \label{lem:veceval}
  Let $\delta \in(0,1)$, $h\in[H]$, $\phi\in \Phi$, $\theta \in \reals^d$, $\pi \in \Pim$, and $n\in \mathbb{N}$ be given. The output $\phibar_h= \veceval(h,\phi,\pi, n)$ (\cref{alg:veceval}) satisfies, with probability at least $1-\delta$,
  \begin{align}
    \| \phibar_h -  \E^{\pi}[\phi(\x_h,\a_h) ] \| \leq c \cdot \sqrt{n^{-1} \cdot \log (1/\delta)}, \label{eq:est}
  \end{align}
  where $c>0$ is a sufficiently large absolute constant.
\end{lemma}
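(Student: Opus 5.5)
The statement is a standard vector-valued concentration bound, and I would prove it with a bounded-differences martingale argument in $\reals^d$ rather than a coordinatewise union bound, which would lose a $\sqrt{d}$ factor. Write $Z_k \coloneqq \phi(\x_h^{(k)},\a_h^{(k)}) - \E^{\pi}[\phi(\x_h,\a_h)]$ for the $n$ i.i.d.\ samples collected by \veceval. Then $\phibar_h - \E^\pi[\phi(\x_h,\a_h)] = \frac1n\sum_{k=1}^n Z_k$, the $Z_k$ are independent and mean zero, and $\|Z_k\|\le 2$ because $\|\phi\|\le 1$ by \cref{ass:features}.

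\textbf{Expected norm.} I would first bound the expectation of the norm via Jensen and independence (the cross terms vanish):
\begin{align}
\E\Big\|\sum_k Z_k\Big\| \le \Big(\E\Big\|\sum_k Z_k\Big\|^2\Big)^{1/2} = \Big(\sum_k \E\|Z_k\|^2\Big)^{1/2}\le 2\sqrt{n}. \nn
\end{align}

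\textbf{Concentration around the expectation.} Consider $f(z_1,\dots,z_n)=\|\sum_k z_k\|$ as a function of the samples. Replacing one sample changes $f$ by at most $4$, by the triangle inequality. McDiarmid's inequality therefore gives, with probability at least $1-\delta$,
\begin{align}
\Big\|\sum_k Z_k\Big\| \le 2\sqrt{n} + 4\sqrt{\tfrac{n}{2}\log(1/\delta)}. \nn
\end{align}
Dividing by $n$ yields $\|\phibar_h-\E^\pi[\phi(\x_h,\a_h)]\|\le (2+2\sqrt{2\log(1/\delta)})/\sqrt n$. For $\delta\le 1/e$ this is at most $c\sqrt{\log(1/\delta)/n}$ with, say, $c=5$.

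\textbf{Regime $\delta > 1/e$.} This is the only step that needs care. The right-hand side of the statement can be small here while the deviation has a fixed scale: as $\delta\to 1$, $\log(1/\delta)\to 0$. There are two options. First, note that for $\delta\in(1/e,1)$ the claim is only ever used with small $\delta'$, so one can restrict to $\delta\le 1/e$ and absorb constants. Second, one can use that the event for $\delta_0=1/e$ already has probability $1-1/e$, but that event's radius exceeds $c\sqrt{\log(1/\delta)/n}$ when $\delta$ is close to $1$, so it does not suffice. I would therefore state the interpretation $\log(1/\delta)\vee 1$ implicitly, equivalently assume $\delta\le 1/e$, which is how the lemma is applied in \cref{lem:veceval_as_est}. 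Beyond this the remaining calculations are routine.
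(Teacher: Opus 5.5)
Your proof is correct, but it takes a different route from the paper. The paper's proof is a one-line citation of a dimension-free vector-valued concentration inequality (Pinelis' inequality for Hilbert-space-valued martingales), combined with $\|\phi\|\le 1$. That result bounds the tail of $\|\sum_k Z_k\|$ directly by a sub-Gaussian expression of the form $2\exp(-r^2/(2\sum_k\|Z_k\|_\infty^2))$.

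You instead split the deviation into two parts. The expectation is handled by Jensen and the vanishing of cross terms between independent centered vectors. The fluctuation around that expectation is handled by McDiarmid's bounded-differences inequality applied to $\|\sum_k z_k - n\E^\pi[\phi(\x_h,\a_h)]\|$. Your computations check out. The bounded-difference constant could be $2$ rather than $4$, since replacing one sample changes the sum by $\phi-\phi'$, but $4$ is harmless.

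What your route buys is an elementary, self-contained argument with an explicit constant. The cited inequality is shorter and also covers adaptively collected martingale-difference data. That extra generality is not needed here, because the samples drawn by \veceval{} are i.i.d.

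Your caveat about $\delta$ close to $1$ points to a genuine defect of the statement, not of your proof. Take $d=1$, $n=1$, and $\phi(\x_h,\a_h)=\pm 1$ with probability $1/2$ each. The deviation is then exactly $1$, so for $\log(1/\delta)<1/c^2$ the claimed event has probability $0<1-\delta$. The paper's cited bound gives a radius of order $\sqrt{\log(2/\delta)/n}$, so it needs the same restriction. Assuming $\delta\le 1/e$ (or replacing $\log(1/\delta)$ by $\log(e/\delta)$) is the right repair. It costs nothing, because \cref{lem:veceval_as_est} only uses the lemma with $\delta'=\delta/(8Hd^2N_{\texttt{iter}})\ll 1/e$.
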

\begin{proof}[\pfref{lem:veceval}]
  By a standard vector-valued concentration bound in Euclidean space (see for example \citet{pinelis1994optimum}) and the fact that $\norm{\phi(x, a)} \leq 1$ for all $x \in \cX$ and $a \in \cA$, there exists an absolute constant $c>0$ such that with probability at least $1 - \delta$,
  \begin{align}
    \nrm*{\phibar_h - \ee^\pi\left[ \phi(\x_h, \a_h) \right]} \leq c \cdot \sqrt{\frac{\log(1/\delta)}{n}}.
  \end{align}
\end{proof}

\section{Guarantee of \RS{}}

\begin{algorithm}[t]
  \caption{$\textsc{RejectionSampling}(Q,A,K)$: Rejection sampling (to approximate $Q(\cdot\mid A)$ with budget $K$)}
  \label{alg:rejection-sampling-budgeted}
  \begin{algorithmic}[1]
    \For{$j=1,2,\dots,K$}
    \State Draw $X_j \sim Q$ independently
    \If{$X_j \in A$}
    \State \Return $Y \gets X_j$
    \EndIf
    \EndFor
    \State \Return $Y \gets \mathbf{0}$
  \end{algorithmic}
\end{algorithm}

\begin{lemma}[Transfer under finite-budget rejection sampling on $\R^d$]
  \label{lem:transfer-rejection-Rd}
  Let $Q$ be a probability distribution on $\R^d$ and let $A \subseteq \R^d$ be measurable with $p \coloneqq Q(A) \in (0,1].$
  Define the conditional distribution $P \coloneqq Q(\cdot \mid A)$ on $\R^d$. Consider a (possibly randomized) algorithm $\Alg$ which, when run with access to i.i.d.\ samples from a distribution $D$ on $\R^d$,
  draws exactly $N \in \mathbb{N}$ samples from $D$ (possibly adaptively) and outputs $\Out(D)$.
  Assume that when $\Alg$ is run with i.i.d.\ samples from $P$, its output satisfies an event $\cE$ with probability at least $1-\delta$:
  \begin{equation}
    \label{eq:assumption-P-Rd}
    \P\big[\cE \text{ holds for } \Out(P)\big] \;\ge\; 1-\delta,
  \end{equation}
  where the probability is over the i.i.d.\ $P$-samples used by $\Alg$ and any internal randomness of $\Alg$.

  Define $\Alg^{\texttt{RS}}$ as the implementation of $\Alg$ in which each time $\Alg$ requests a sample from $P$,
  we instead invoke $\RS(Q,A,K)$; the value returned by $\RS(Q,A,K)$ is provided to $\Alg$ as the requested sample
  Let $\Out^{\texttt{RS}}$ denote the resulting output. Then the following statements hold:

  \[
    \P\big[\cE \text{ holds for } \Out^{\texttt{RS}}\big]
    \;\ge\;
    1 - \delta - N(1-p)^K
    \;\ge\;
    1 - \delta - N e^{-pK}.
  \]
\end{lemma}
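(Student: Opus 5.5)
The plan is a coupling argument between $\Alg$ run on genuine $P$-samples and $\Alg^{\texttt{RS}}$. Both are put on one probability space: the same internal randomness for $\Alg$, and the same infinite i.i.d.\ stream of $Q$-draws for each of the $N$ sample requests. For request $k\in[N]$, fix an i.i.d.\ sequence $X_{k,1},X_{k,2},\dots\sim Q$, independent across $k$ and independent of $\Alg$'s internal randomness.

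\emph{Step 1 (the ideal sampler).} Let $J_k\coloneqq\min\{j: X_{k,j}\in A\}$, which is finite almost surely since $p>0$. Define the ideal sample $Y_k^{\star}\coloneqq X_{k,J_k}$. A standard argument shows $Y_k^\star\sim Q(\cdot\mid A)=P$: sum over the value of $J_k$, using independence of the draws and $\sum_{j\ge1}(1-p)^{j-1}Q(B\cap A)=Q(B\cap A)/p$. The $Y_k^\star$ are independent across $k$ and independent of $\Alg$'s internal randomness. Hence running $\Alg$ and answering its $k$-th request with $Y_k^\star$ is a faithful implementation of $\Alg$ on i.i.d.\ $P$-samples, and by \eqref{eq:assumption-P-Rd} its output satisfies $\cE$ with probability at least $1-\delta$.

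Adaptivity needs one remark here. The $k$-th request may be issued at a time that depends on earlier answers. We always answer it with the $k$-th stream, which is independent of everything used so far, so the joint law of (internal randomness, answers) is exactly that of $\Alg$ with i.i.d.\ $P$-samples.

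\emph{Step 2 (the truncated sampler).} Answering with $\RS(Q,A,K)$ on stream $k$ returns $Y_k=X_{k,J_k}=Y^\star_k$ whenever $J_k\le K$, and returns $\mathbf{0}$ otherwise. On the event $G\coloneqq\bigcap_{k\le N}\{J_k\le K\}$, the two executions receive identical answers and share internal randomness, so they produce identical outputs: $\Out^{\texttt{RS}}=\Out(P)$ on $G$. Therefore
\[
\P[\cE\text{ fails for }\Out^{\texttt{RS}}]\le \P[\cE\text{ fails for }\Out(P)]+\P[G^c]\le \delta+\sum_{k=1}^N\P[J_k>K]=\delta+N(1-p)^K.
\]
Here $\P[J_k>K]=(1-p)^K$ because it is the probability that the first $K$ draws all miss $A$. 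The final inequality $(1-p)^K\le e^{-pK}$ follows from $1-p\le e^{-p}$.

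The only delicate step is the adaptivity in Step 1, and it is handled by indexing the streams by request number rather than by time. Because exactly $N$ requests are made, the union bound in Step 2 runs over a fixed number of terms.
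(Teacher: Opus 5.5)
Your proof is correct, but it reaches the transfer step by a different mechanism than the paper.

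The paper stays with the finite-budget sampler throughout:
\begin{itemize}
\item it computes the law of a single $\RS(Q,A,K)$ output \emph{conditional} on acceptance within $K$ draws;
\item it argues that, conditional on the event $G$ that all $N$ calls accept, the returned values are i.i.d.\ $P$, so $\Law(\Out^{\texttt{RS}}\mid G)=\Law(\Out(P))$;
\item it concludes with $\P[\cE]\ge \P[\cE\mid G]\,\P[G]\ge (1-\delta)\P[G]$.
\end{itemize}

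You instead extend each call's draws to an infinite stream and build the untruncated sampler $Y_k^\star$, whose \emph{unconditional} law is $P$. You then couple the two executions pathwise, so that $\Out^{\texttt{RS}}$ coincides with the ideal output on $G$. The bound follows from the inclusion $\{\cE\text{ fails for }\Out^{\texttt{RS}}\}\subseteq\{\cE\text{ fails for the ideal output}\}\cup G^c$.

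Your route never has to argue that conditioning on $G$ keeps the returned values independent across calls and independent of $\Alg$'s internal randomness under adaptive requests. The paper states this with only a short justification. It is true because $G$ is an intersection of per-call events, each depending only on that call's fresh draws. The price of your route is the extended probability space, and the use of $p>0$ to guarantee $J_k<\infty$ almost surely.

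The paper's route gives the marginally sharper intermediate bound $(1-\delta)\P[G]$. Both arrive at the stated $1-\delta-N(1-p)^K$, and both rest on the same single-call geometric-series computation and the same union bound $\P[G^c]\le N(1-p)^K$.
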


\begin{proof} \textbf{Step 1:} Fix one invocation of $\RS(Q,A,K)$ and let $X_1,\dots,X_K \sim Q$ be the i.i.d.\ samples drawn within that invocation.
  Let $J$ be the (random) index of the first accepted sample, with the convention $J=\infty$ if no accept occurs.
  On the event $\{J<\infty\}$, the returned value is $X_J$, while on $\{J=\infty\}$ the algorithm returns $\mathbf{0}$.

  Fix any measurable $B \subseteq \R^d$ and any $1 \le j \le K$. By independence,
  \[
    \P[X_J \in B,\ J=j]
    =
    \P\big[X_1\notin A,\dots,X_{j-1}\notin A,\ X_j \in (B\cap A)\big]
    =
    (1-p)^{j-1} Q(B\cap A),
  \]
  where $p=Q(A)$. Summing over $j$ yields
  \[
    \P[X_J \in B,\ J<\infty]
    =
    \sum_{j=1}^K (1-p)^{j-1} Q(B\cap A)
    =
    Q(B\cap A)\cdot \frac{1-(1-p)^K}{p}.
  \]
  Also, $\P[J<\infty]=1-(1-p)^K$. Therefore,
  \[
    \P[X_J \in B \mid J<\infty]
    =
    \frac{\P[X_J \in B,\ J<\infty]}{\P[J<\infty]}
    =
    \frac{Q(B\cap A)}{p}
    =
    \frac{Q(B\cap A)}{Q(A)}
    =
    Q(B\mid A)
    =
    P(B).
  \]
  Thus, conditional on not returning $\mathbf{0}$, a single call to $\RS(Q,A,K)$ returns an exact draw from $P$.

  \paragraph{Step 2: coupling for $N$ calls (including adaptive requests).}
  Run $\Alg^{\texttt{RS}}$ and let $G$ be the event that all $N$ calls to $\RS(Q,A,K)$ return a nonzero vector.
  On $G$, each call returns an exact draw from $P$ by Step 1, and the internal randomness used across calls
  is independent because each call uses fresh i.i.d.\ draws from $Q$.
  Therefore, the sequence of $N$ values provided to $\Alg$ on $G$ is i.i.d.\ with law $P$, even if $\Alg$ requests them adaptively.
  It follows that the conditional distribution of the entire execution of $\Alg^{\texttt{RS}}$ given $G$
  matches that of $\Alg$ run with true i.i.d.\ $P$-samples, which implies
  \[
    \Law(\Out^{\texttt{RS}} \mid G) = \Law(\Out(P)).
  \]

  \paragraph{Step 3: lower bound on $\P[G]$.}
  Let $F_i$ be the event that the $i$-th call to $\RS(Q,A,K)$ returns $\mathbf{0}$.
  Then $F_i$ occurs exactly when $K$ consecutive draws from $Q$ miss $A$, hence
  \[
    \P[F_i] = (1-p)^K.
  \]
  By the union bound,
  \[
    \P[G^c] = \P\left[\bigcup_{i=1}^N F_i\right] \le \sum_{i=1}^N \P[F_i] = N(1-p)^K,
  \]
  so $\P[G] \ge 1 - N(1-p)^K$.
  Using $1-p \le e^{-p}$ yields $(1-p)^K \le e^{-pK}$, hence also $\P[G] \ge 1 - N e^{-pK}$.

  \paragraph{Step 4: transferring the guarantee.}
  By Step 2 and the assumption \eqref{eq:assumption-P-Rd},
  \[
    \P[\cE \mid G] = \P\left[\cE \text{ holds for } \Out(P)\right] \ge 1-\delta.
  \]
  Therefore,
  \[
    \P[\cE]
    \ge
    \P[\cE \cap G]
    =
    \P[\cE \mid G]\cdot \P[G]
    \ge
    (1-\delta)\P[G]
    \ge
    1-\delta-\P[G^c]
    \ge
    1-\delta - N(1-p)^K,
  \]
  which gives the first inequality in the last paragraph of the lemma.
  The second inequality follows from $(1-p)^K \le e^{-pK}$.
\end{proof}

\paragraph{Choosing $K$.}
In \cref{lem:transfer-rejection-Rd} we obtained
\[
  \P[\cE \text{ holds for } \Out^{\texttt{RS}}] \;\ge\; 1-\delta - N(1-p)^K,
\]
where $p=Q(A)$. To ensure
\[
  \P[\cE \text{ holds for } \Out^{\texttt{RS}}] \;\ge\; 1-2\delta,
\]
it suffices to make the rejection-sampling failure term at most $\delta$, i.e.
\[
  N(1-p)^K \le \delta.
\]
Equivalently,
\[
  (1-p)^K \le \frac{\delta}{N}.
\]
Taking logs (noting $\log(1-p)<0$) gives the exact requirement
\[
  K \;\ge\; \frac{\log(N/\delta)}{-\log(1-p)}.
\]

\paragraph{Convenient sufficient choice.}
Using $-\log(1-p) \ge p$ (valid for $p\in(0,1)$), a simpler sufficient condition is
\[
  K \;\ge\; \frac{1}{p}\log\!\left(\frac{N}{\delta}\right),
\]
which guarantees $(1-p)^K \le e^{-pK} \le \delta/N$ and hence $N(1-p)^K \le \delta$.

\section{Generic Guarantee for \apxspanner}
\label{sec:robustspanner}

This section recalls the \apxspanner algorithm and its guarantee from \citep{mhammedi2023efficient}.
The algorithm computes an approximate barycentric spanner using oracles $\apx$ and $\est$ satisfying the following assumption.
\begin{assumption}[$\apx$ and $\est$ as approximate Linear Optimization Oracles]
  \label{ass:spanner}
  For some abstract set $\cZ$ and a collection of vectors $\{w^{z}\in \reals^d \mid z\in \cZ\}$ indexed by elements in $\cZ$, there exists $\veps'>0$ such that for any $\theta \in \reals^d\setminus\{0\}$ and $z\in \cZ$, the outputs $\hat z_{\theta} \coloneqq \apx(\theta/\|\theta\|)$ and $\hat w_z \coloneqq \est(z)$ satisfy
  \begin{align}
    \sup_{z\in \cZ} \theta^\top w^{z}
    \leq 	\theta^\top w^{\hat z_\theta} +\veps' \cdot \|\theta\|,\quad \text{and} \quad
    \|\hat w_z - w^{z}\| \leq \veps' .
  \end{align}
\end{assumption}
Let $\cW \coloneqq\{w^z \mid z\in \cZ\}$ and assume $\cW\subseteq \cB(1)$. The following proposition, adapted from \citet[Proposition E.1]{mhammedi2023efficient}, bounds the number of iterations of $\apxspanner$ and shows that the output is an approximate barycentric spanner for $\cW$ (see \cref{sec:framework}).
\begin{proposition}[{\citet[Proposition E.1]{mhammedi2023efficient}}]
  \label{prop:spanner}
  Fix $C>1$ and $\veps\in(0,1)$ and suppose that $\{w^z \mid z \in \cZ\}\subseteq \cB(1)$. If \apxspanner{} (\Cref{alg:spanner}) is run with parameters $C, \veps>0$ and oracles $\apx$, $\est$ satisfying \cref{ass:spanner} with parameter $\veps'$, then it terminates after $d + \ceil{\frac{d}{2} \log_C\frac{100 d}{\veps^2}}$ iterations, and requires at most twice that many calls to each of $\apx$ and $\est$. Furthermore, the output $z_{1:d}$ has the property that for all $z\in \cZ$, there exist $\beta_{1},\dots,\beta_d\in[-C,C]$, such that
  \begin{align}
    \nrm*{w^z - \sum_{i=1}^d\beta_i w^{z_i}}\leq 3Cd(\veps + 2\veps'). \label{eq:approxspanner}
  \end{align}
\end{proposition}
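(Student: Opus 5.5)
The plan is the standard determinant-potential argument of \citet{awerbuch2008online}, made robust to the oracle errors in \cref{ass:spanner}. The one identity used throughout comes from Cramer's rule. Let $W=(w_1,\dots,w_d)$ be the current matrix and $\theta_i=(\det(e_j,W_{-i}))_{j\in[d]}$. Then for every $v\in\reals^d$ we have $\det(v,W_{-i})=\theta_i^\top v$, and in particular $\det(W)=\theta_i^\top w_i$.

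\emph{Step 1: a lower bound on $|\det W|$ after the first loop.} The first loop of \cref{alg:spanner} processes columns in order. In iteration $i$, the sign test selects $\wtilde w_i\in\{w_i^+,w_i^-\}$ with $\pm\theta_i^\top\wtilde w_i\ge 0$. It then pushes by $\pm\veps\,\theta_i/\|\theta_i\|$, so $|\det(w_i,W_{-i})|=|\theta_i^\top w_i|\ge \veps\|\theta_i\|$. I will argue by induction on $i$ that $\|\theta_i\|$ is bounded below by the product of the lower bounds obtained for the earlier columns; the untouched columns are still $e_j$. Conclude that $|\det W|\ge\veps^d$ at the end of the loop.

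\emph{Step 2: an upper bound on $|\det W|$, and termination.} Every column satisfies $\|w_i\|\le\|w^{z_i}\|+\veps'+\veps\le 3$, using $\cW\subseteq\cB(1)$ and the $\est$ guarantee. Hadamard's inequality then gives $|\det W|\le 3^d$; I expect the constants can be arranged, as in the statement, to give $|\det W|\le (10\sqrt d)^{d}$ or similar. In the second loop, a column is replaced only when the new $w_i$ satisfies $|\det(w_i,W_{-i})|\ge C|\det W|$. Indeed, $\theta_i^\top(\wtilde w_i+\veps\theta_i/\|\theta_i\|)=\theta_i^\top w_i^++\veps\|\theta_i\|$. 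So each replacement multiplies $|\det W|$ by at least $C$. Combining with Step 1, the number of replacements is at most $\log_C(\text{upper}/\veps^d)\le\lceil\frac d2\log_C\frac{100d}{\veps^2}\rceil$. Adding the $d$ initial iterations gives the iteration count, and each iteration makes two calls to each of $\apx$ and $\est$.

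\emph{Step 3: the spanner property at termination.} At termination, both tests fail for every $i$. For any $z\in\cZ$, chain the $\apx$ guarantee with the $\est$ guarantee:
\[
\theta_i^\top w^z\le\theta_i^\top w^{z_i^+}+\veps'\|\theta_i\|\le\theta_i^\top w_i^++2\veps'\|\theta_i\|< C|\det W|+(2\veps'-\veps)\|\theta_i\|.
\]
The symmetric bound holds for $-\theta_i$. Hence $|\det(w^z,W_{-i})|\le C|\det W|+2\veps'\|\theta_i\|$.

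By Cramer's rule, $w^z=\sum_i\beta_i' w_i$ with $\beta_i'=\det(w^z,W_{-i})/\det W$. The main obstacle is the additive slack $2\veps'\|\theta_i\|/|\det W|$, which can push $|\beta_i'|$ above $C$. To handle it, I will first move $w^z$ by a perturbation of norm $O(\veps')$ in a direction chosen to absorb the slack. Concretely, I will subtract from $w^z$ the correction $\sum_i \gamma_i w_i$, where $\gamma_i$ is the excess $\beta_i'-\mathrm{clip}_{[-C,C]}(\beta_i')$, and bound $\|\sum_i\gamma_i w_i\|$ using $|\theta_i^\top v|\le\|\theta_i\|\|v\|$. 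The perturbed vector then has coefficients $\beta_i\in[-C,C]$. If this bound fails, the fallback is to use the $\veps$-push to show $\|\theta_i\|/|\det W|$ stays controlled; this is exactly why the algorithm adds $\veps\theta_i/\|\theta_i\|$.

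Finally, replace each $w_i$ by $w^{z_i}$. Since $\|w_i-w^{z_i}\|\le\veps+\veps'$ and $|\beta_i|\le C$, this costs at most $Cd(\veps+\veps')$. Adding the perturbation cost should give the total bound $3Cd(\veps+2\veps')$ of \eqref{eq:approxspanner}.
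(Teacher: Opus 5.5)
Your skeleton matches the cited source, \citet[Prop.~E.1]{mhammedi2023efficient}, which the paper imports without reproving. It consists of the identity $\det(v,W_{-i})=\theta_i^\top v$, a determinant potential that grows by a factor $C$ per swap, a lower bound from the $\veps$-push in the first loop, a Hadamard upper bound, Cramer's rule at termination, and trading $w_i$ for $w^{z_i}$ at cost $Cd(\veps+\veps')$. Two steps fail, however.

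In Step~1 the sign test does \emph{not} give $\pm\theta_i^\top\widetilde w_i\ge 0$. The oracle guarantees only give $\theta_i^\top(w_i^+-w_i^-)\ge -3\veps'\|\theta_i\|$, so the selected value can be as low as $-\tfrac{3}{2}\veps'\|\theta_i\|$. The push then yields only $|\det(w_i,W_{-i})|\ge(\veps-\tfrac{3}{2}\veps')\|\theta_i\|$. This is not a matter of constants. Take $d=1$, $w^z\equiv 0$, and estimates $w_1^+=-\veps$, $w_1^-=\veps$, which are legal once $\veps'\ge\veps$. The first loop sets $w_1=0$, and the second-loop test $0\ge C\cdot 0$ then fires on every pass, so no iteration bound in terms of $d,C,\veps$ can hold. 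Step~2's bound $\|w_i\|\le 3$ also silently assumes $\veps'\le 1$.

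In Step~3 the perturbation cannot be controlled. The excess obeys $|\gamma_i|\le(2\veps'-\veps)\|\theta_i\|/|\det W|=(2\veps'-\veps)\|e_i^\top W^{-1}\|$. Moreover, $\|w_i\|\,\|\theta_i\|/|\det W|=\|w_i\|/\mathrm{dist}(w_i,\Span W_{-i})$. The only generic bound the potential argument gives for this is $\prod_j\|w_j\|/|\det W|$, which is exponentially large in $d$, so bounding $\sum_i|\gamma_i|\|w_i\|$ by the triangle inequality is useless. Your fallback does not rescue it either. The push lower-bounds $\mathrm{dist}(w_i,\Span W_{-i})$ only at the moment $w_i$ is inserted, and later swaps change $W_{-i}$.

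The missing idea is that the push exists to \emph{absorb} the oracle slack. The source takes $\veps'=\veps/2$; what is actually needed is $2\veps'\le\veps$. Under that hypothesis, your own inequality $\theta_i^\top w^z<C|\det W|+(2\veps'-\veps)\|\theta_i\|$, before you weakened it by dropping the $-\veps$, already gives $|\det(w^z,W_{-i})|\le C|\det W|$. Cramer's rule then yields $\beta_i\in[-C,C]$ with no perturbation, and the total error is $\sum_i|\beta_i|\|w_i-w^{z_i}\|\le Cd(\veps+\veps')\le 3Cd(\veps+2\veps')$.

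The same hypothesis repairs Steps~1 and~2:
\begin{itemize}
\item Each first-loop step gives a factor at least $\veps-\tfrac{3}{2}\veps'\ge\veps/4$. Since $\|\theta_i\|\ge|\theta_i^\top e_i|=|\det W|$ before step $i$, this gives $|\det W|\ge(\veps/4)^d$.
\item Every column has norm at most $1+\veps+\veps'\le\tfrac{5}{2}$.
\item Hence there are at most $d\log_C(10/\veps)\le\frac{d}{2}\log_C\frac{100d}{\veps^2}$ swaps.
\end{itemize}
You therefore need to add the hypothesis $\veps'\le\veps/2$. With $\veps'$ unrestricted, as the statement is phrased, the termination claim is false by the $d=1$ example above.
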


\section{Helper Lemmas}

\begin{lemma}[Uniform amplification for a subspace target]
  \label{lem:uniform-amplify-subspace}
  Let $\delta\in(0,1)$ and let $\bm{u}_1,\bm{u}_2,\dots$ be i.i.d.\ random vectors in $\mathbb{R}^d$.
  Fix a linear subspace $S\subseteq \mathbb{R}^d$ and an integer $n\ge 1$.
  Let $r \coloneqq \dim(S)$ and assume $r\ge 1$.
  Assume that for every $v\in S$,
  \[
    \P\left[v \in \Span(\bm{u}_1,\dots,\bm{u}_n)\right] \ge \frac{1}{2}.
  \]
  Let $k \coloneqq \left\lceil \log_2\left(\frac{r}{\delta}\right)\right\rceil$ and $m \coloneqq kn$.
  Then \[\P\left[S \subseteq \Span(\bm{u}_1,\dots,\bm{u}_m)\right] \ge 1-\delta.\]
\end{lemma}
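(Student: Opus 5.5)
Split the $m=kn$ samples into $k$ disjoint blocks of size $n$: block $j$ consists of $\bm{u}_{(j-1)n+1},\dots,\bm{u}_{jn}$. The blocks are independent, and each has the same law as $(\bm{u}_1,\dots,\bm{u}_n)$. Set $T_j\coloneqq \Span$ of block $j$ and $T\coloneqq \Span(\bm{u}_1,\dots,\bm{u}_m)=T_1+\dots+T_k$. Since $T_j\subseteq T$ for every $j$, it suffices to show that, with probability at least $1-\delta$, the subspace $S\cap T$ equals $S$.

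The naive route is to pick a basis $b_1,\dots,b_r$ of $S$ and note that $\P[b_i\notin T]\le \prod_j \P[b_i\notin T_j]\le 2^{-k}$ by independence of the blocks. A union bound over the $r$ basis vectors then gives $\P[S\not\subseteq T]\le r2^{-k}\le \delta$ by the choice $k=\lceil\log_2(r/\delta)\rceil$. This is valid: $b_i\in T$ for all $i$ implies $S=\Span(b_{1:r})\subseteq T$ because $T$ is a subspace. The hypothesis holds for every $v\in S$, in particular for each $b_i$. Finally, $b_i\notin T$ implies $b_i\notin T_j$ for every $j$, so the probability factorizes over the independent blocks.

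I therefore expect no real obstacle; the argument is just blockwise independence plus a union bound. The only care points are these:
\begin{enumerate}
\item The event $\{b_i\notin T\}$ is contained in $\bigcap_j\{b_i\notin T_j\}$, which supplies the product bound $(1/2)^k$.
\item The basis must be fixed deterministically in advance; it can be, since $S$ is a fixed subspace.
\item The condition $r\ge 1$ is needed only so that $\log_2(r/\delta)$ is well-defined and positive.
\end{enumerate}
If one wanted to avoid choosing a basis, an alternative is a dimension-increment argument: track $\dim(S\cap(T_1+\dots+T_j))$ and show it grows with constant probability. That would give a worse constant, so I would use the basis and union-bound argument above.
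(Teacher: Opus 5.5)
Your proposal is correct and matches the paper's proof: fix a basis $b_1,\dots,b_r$ of $S$, split the $m=kn$ samples into $k$ independent blocks so that $\P[b_i\notin\Span(\bm{u}_1,\dots,\bm{u}_m)]\le 2^{-k}\le\delta/r$, and union bound over the $r$ basis vectors. The dimension-increment alternative you mention is not needed.
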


\begin{proof}
  Fix a basis $b_1,\dots,b_r$ of $S$.
  For each $i\in[r]$ and each $j\in\{1,\dots,k\}$ define the block event
  \[
    A_j(b_i) \coloneqq \left\{b_i \in \Span\left\{\bm{u}_{(j-1)n+1},\dots,\bm{u}_{jn}\right\}\right\}.
  \]
  By the i.i.d.\ assumption, for every fixed $i\in[r]$, the events $A_1(b_i),\dots,A_k(b_i)$ are independent and identically distributed, and
  \[
    \P\left[A_j(b_i)\right]
    = \P\left[b_i \in \Span\left\{\bm{u}_1,\dots,\bm{u}_n\right\}\right]
    \ge \frac{1}{2}
    \qquad\text{for all } j\in\{1,\dots,k\}.
  \]

  Let $W \coloneqq \Span\left\{\bm{u}_1,\dots,\bm{u}_m\right\}$. If $A_j(b_i)$ occurs for some $j$, then
  $b_i \in \Span\left\{\bm{u}_{(j-1)n+1},\dots,\bm{u}_{jn}\right\} \subseteq W$, hence
  \[
    \bigcup_{j=1}^k A_j(b_i) \subseteq \{b_i\in W\}.
  \]
  Therefore,
  \[
    \P[b_i\notin W]
    \le \P\left[\bigcap_{j=1}^k A_j(b_i)^c\right]
    = \prod_{j=1}^k \P\left[A_j(b_i)^c\right]
    \le \left(1-\frac{1}{2}\right)^k
    = 2^{-k}.
  \]
  By the choice $k=\left\lceil \log_2\left(r/\delta\right)\right\rceil$, we have $2^{-k}\le \delta/r$, hence
  \[
    \P[b_i\in W] \ge 1-\frac{\delta}{r}.
  \]

  Using a union bound over $i\in[r]$, we obtain
  \[
    \P\left[\forall i\in[r],\ b_i\in W\right]
    \ge 1-\sum_{i=1}^r \P[b_i\notin W]
    \ge 1-\delta.
  \]
  On the event $\{\forall i\in[r],\ b_i\in W\}$ we have
  $S=\Span(b_1,\dots,b_r)\subseteq W$. This gives
  \[
    \P\left[S \subseteq \Span\left\{\bm{u}_1,\dots,\bm{u}_m\right\}\right] \ge 1-\delta.
  \]
\end{proof}

\begin{lemma}[Noiseless linear regression]
  \label{lem:noiseless-regression-orthogonality}
  Fix $\ell\in[h]$ and let $\cD_\ell$ be a finite dataset of triples $(x,a,R)$.
  For each $(x,a,R)\in\cD_\ell$ define
  \[
    \Sigma_\ell \coloneqq \sum_{(x,a,R)\in\cD_\ell} \phi_\ell(x,a)\phi_\ell(x,a)^\top,
    \qquad
    b_\ell \coloneqq \sum_{(x,a,R)\in\cD_\ell} \phi_\ell(x,a) R.
  \]
  Assume linear realizability with deterministic targets: there exists $\theta_\ell^\star\in\reals^d$ such that
  \[
    R = \phi_\ell(x,a)^\top \theta_\ell^\star \qquad \text{for all } (x,a,R)\in\cD_\ell.
  \]
  Define the (minimum-norm) least-squares estimator $\wh\theta_\ell \coloneqq \Sigma_\ell^\dagger  b_\ell$.
  Then
  \begin{align}
    \Sigma_\ell(\wh\theta_\ell-\theta_\ell^\star)=0. \label{eq:noiseless-normal-eq}
  \end{align}
  In particular, for every $u\in \mathrm{Col}(\Sigma_\ell)=\Span\{\phi_\ell(x,a):(x,a,R)\in\cD_\ell\}$,
  \begin{align}
    \iprod{\wh\theta_\ell-\theta_\ell^\star}{u}=0, \label{eq:noiseless-orth}
  \end{align}
  and hence for every $(x,a,R)\in\cD_\ell$,
  \begin{align}
    \phi_\ell(x,a)^\top \wh\theta_\ell = \phi_\ell(x,a)^\top \theta_\ell^\star = R. \label{eq:noiseless-interp}
  \end{align}
\end{lemma}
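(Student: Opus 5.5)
The plan is to argue directly from the normal equations of the minimum-norm least-squares problem. Write the data as a matrix $\Phi\in\reals^{N\times d}$ whose rows are $\phi_\ell(x,a)^\top$ for $(x,a,R)\in\cD_\ell$. Let $Y\in\reals^N$ be the vector of targets. Then $\Sigma_\ell=\Phi^\top\Phi$ and $b_\ell=\Phi^\top Y$.

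By the realizability assumption, $Y=\Phi\theta^\star_\ell$, so $b_\ell=\Sigma_\ell\theta_\ell^\star$. This gives
\[
\Sigma_\ell\wh\theta_\ell=\Sigma_\ell\Sigma_\ell^\dagger\Sigma_\ell\theta^\star_\ell=\Sigma_\ell\theta^\star_\ell,
\]
where the last equality is the defining Moore--Penrose identity $AA^\dagger A=A$. Rearranging yields \eqref{eq:noiseless-normal-eq}. This is the only substantive step, and it needs no probabilistic reasoning, since the targets are exact.

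For \eqref{eq:noiseless-orth}, I would first note that $\Sigma_\ell$ is symmetric positive semidefinite. Hence $\ker\Sigma_\ell=\mathrm{Col}(\Sigma_\ell)^\perp$, and $\wh\theta_\ell-\theta^\star_\ell\in\ker\Sigma_\ell$ is orthogonal to every $u\in\mathrm{Col}(\Sigma_\ell)$.

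It remains to check that $\mathrm{Col}(\Sigma_\ell)=\mathrm{Col}(\Phi^\top\Phi)=\mathrm{Col}(\Phi^\top)=\Span\{\phi_\ell(x,a)\}$. The standard argument goes through kernels. If $\Phi^\top\Phi v=0$, then $\|\Phi v\|^2=0$, so $\ker(\Phi^\top\Phi)=\ker\Phi$. Taking orthogonal complements of these equal kernels gives equal row spaces.

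Finally, \eqref{eq:noiseless-interp} follows by taking $u=\phi_\ell(x,a)$, which lies in $\mathrm{Col}(\Sigma_\ell)$, and using $R=\phi_\ell(x,a)^\top\theta^\star_\ell$.

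I expect no real obstacle here. The one point to state carefully is the column-space identity $\mathrm{Col}(\Phi^\top\Phi)=\mathrm{Col}(\Phi^\top)$, which I would justify via the kernel argument above.
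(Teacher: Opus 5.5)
Your proposal is correct and follows essentially the same route as the paper: substitute $b_\ell=\Sigma_\ell\theta^\star_\ell$, apply $\Sigma_\ell\Sigma_\ell^\dagger\Sigma_\ell=\Sigma_\ell$, and then use the symmetry of $\Sigma_\ell$ to get orthogonality to $\mathrm{Col}(\Sigma_\ell)$. The paper writes $u=\Sigma_\ell v$ directly rather than citing $\ker\Sigma_\ell=\mathrm{Col}(\Sigma_\ell)^\perp$, which is the same fact. Your kernel argument for $\mathrm{Col}(\Sigma_\ell)=\Span\{\phi_\ell(x,a)\}$ is a small addition, since the paper simply takes that identity as given.
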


\begin{proof}
  By the realizability assumption,
  \[
    b_\ell
    = \sum_{(x,a,R)\in\cD_\ell} \phi_\ell(x,a) \phi_\ell(x,a)^\top \theta_\ell^\star
    = \Sigma_\ell \theta_\ell^\star.
  \]
  Therefore,
  \[
    \wh\theta_\ell
    = \Sigma_\ell^\dagger b_\ell
    = \Sigma_\ell^\dagger \Sigma_\ell \theta_\ell^\star.
  \]
  Multiplying by $\Sigma_\ell$ on the left and using $\Sigma_\ell\Sigma_\ell^\dagger\Sigma_\ell=\Sigma_\ell$ gives
  \[
    \Sigma_\ell \wh\theta_\ell
    = \Sigma_\ell\Sigma_\ell^\dagger\Sigma_\ell \theta_\ell^\star
    = \Sigma_\ell \theta_\ell^\star,
  \]
  which proves \eqref{eq:noiseless-normal-eq}. For \eqref{eq:noiseless-orth}, fix any $u\in\mathrm{Col}(\Sigma_\ell)$. Then, there exists $v\in\reals^d$ such that
  $u=\Sigma_\ell v$. Since $\Sigma_\ell$ is symmetric, we have
  \[
    \iprod{\wh\theta_\ell-\theta_\ell^\star}{u}
    = \iprod{\wh\theta_\ell-\theta_\ell^\star}{\Sigma_\ell v}
    = \iprod{\Sigma_\ell(\wh\theta_\ell-\theta_\ell^\star)}{v}
    = 0,
  \]
  where the last equality uses \eqref{eq:noiseless-normal-eq}. Finally, for any $(x,a,R)\in\cD_\ell$, we have $\phi_\ell(x,a)\in\mathrm{Col}(\Sigma_\ell)$, so applying \eqref{eq:noiseless-orth}
  with $u=\phi_\ell(x,a)$ yields $\phi_\ell(x,a)^\top(\wh\theta_\ell-\theta_\ell^\star)=0$, which is exactly \eqref{eq:noiseless-interp}.
\end{proof}

\begin{lemma}
\label{lem:decomp}
Let $r_{1:H}$ be any collection of reward functions with $r_h:\cX\times\cA\to\reals$. For any policy $\pi\in \Pi$ and $\Qhat_{1:H} \subseteq \{ f\colon \mathcal{X}\times\mathcal{A}\to\reals \}$, we have 
\begin{align}
 & \E\left[ V_1^{\pi}(\x_1;r_{1:H})\right] - \E \left[ \max_{a\in \cA} \Qhat_1(\x_1,a)\right] \nn \\
 & \quad =  \sum_{h=1}^H \E^{\pi} \left[ \left(\cT_h^r[\Qhat_{h+1}] - \Qhat_{h}\right)(\x_h,\a_h)\right] +  \sum_{h=1}^H \E^{\pi}\left[ \Qhat_{h}(\x_h,\a_h)- \wh V_h(\x_h)\right],
\end{align}
where $\cT_h^r[Q](x,a) \coloneqq r_h(x,a) + \E\left[\max_{a'\in \cA}Q(\x_{h+1}, a')  \mid \x_h =x, \a_h =a\right]$ is the Bellman operator under rewards $r_{1:H}$.
\end{lemma}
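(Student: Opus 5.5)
The plan is a standard telescoping (performance-difference) argument. We adopt the conventions implicit in the statement: $\wh V_h(x) \coloneqq \max_{a\in\cA}\Qhat_h(x,a)$ for $h\in[H]$, and $\Qhat_{H+1}\equiv 0$, so that $\wh V_{H+1}\equiv 0$ and $\cT_H^r[\Qhat_{H+1}] = r_H$. With these conventions, $\E[\max_{a}\Qhat_1(\x_1,a)] = \E^{\pi}[\wh V_1(\x_1)]$, because the law of $\x_1$ does not depend on $\pi$.

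\textbf{Step 1 (tower property for the reward).} For each $h\in[H]$, condition on $(\x_h,\a_h)$ under $\P^\pi$. The next state $\x_{h+1}$ is drawn from $P_h(\cdot\mid \x_h,\a_h)$, independently of the earlier history given $(\x_h,\a_h)$ (Markov property). Hence
\begin{align}
\E^{\pi}\left[\wh V_{h+1}(\x_{h+1})\right] = \E^{\pi}\left[\E\left[\max_{a'\in\cA}\Qhat_{h+1}(\x_{h+1},a')\mid \x_h,\a_h\right]\right],
\end{align}
and therefore
\begin{align}
\E^{\pi}[r_h(\x_h,\a_h)] = \E^{\pi}\left[\cT_h^r[\Qhat_{h+1}](\x_h,\a_h)\right] - \E^{\pi}\left[\wh V_{h+1}(\x_{h+1})\right].
\end{align}
Summing over $h$ and using $\E[V_1^\pi(\x_1;r_{1:H})] = \sum_{h}\E^\pi[r_h(\x_h,\a_h)]$ gives
\begin{align}
\E[V_1^\pi(\x_1;r_{1:H})] = \sum_{h=1}^H \E^\pi\left[\cT_h^r[\Qhat_{h+1}](\x_h,\a_h)\right] - \sum_{h=1}^H \E^\pi\left[\wh V_{h+1}(\x_{h+1})\right].
\end{align}

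\textbf{Step 2 (telescoping).} Since $\wh V_{H+1}\equiv 0$,
\begin{align}
\E^\pi[\wh V_1(\x_1)] = \sum_{h=1}^H \E^\pi\left[\wh V_h(\x_h) - \wh V_{h+1}(\x_{h+1})\right].
\end{align}
Subtracting this from the identity of Step 1, the $\wh V_{h+1}(\x_{h+1})$ terms cancel, leaving
\begin{align}
\E[V_1^\pi(\x_1;r_{1:H})] - \E^\pi[\wh V_1(\x_1)] = \sum_{h=1}^H \E^\pi\left[\cT_h^r[\Qhat_{h+1}](\x_h,\a_h) - \wh V_h(\x_h)\right].
\end{align}

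\textbf{Step 3 (split the summand).} Add and subtract $\Qhat_h(\x_h,\a_h)$ inside each summand:
\begin{align}
\cT_h^r[\Qhat_{h+1}] - \wh V_h = \left(\cT_h^r[\Qhat_{h+1}] - \Qhat_h\right) + \left(\Qhat_h - \wh V_h\right).
\end{align}
This gives exactly the two sums in the statement.

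There is no real obstacle. The only points needing care are the boundary conventions ($\Qhat_{H+1}\equiv 0$, $\wh V_h = \max_a \Qhat_h$) and the integrability needed for linearity of expectation. If the paper intends $\Qhat$ to be arbitrary, I would assume bounded or integrable $\Qhat_h$ and $r_h$. In our applications they are linear in bounded features, so this holds automatically.
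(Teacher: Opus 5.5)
Your proof is correct and follows the paper's own argument. Both expand each Bellman residual as $r_h + \wh V_{h+1}(\x_{h+1}) - \Qhat_h$, telescope the $\wh V$ terms using $\wh V_{H+1}\equiv 0$, and account for the remaining $\Qhat_h - \wh V_h$ gap; you only order the steps differently, and your boundary conventions ($\wh V_h = \max_a \Qhat_h$, $\Qhat_{H+1}\equiv 0$) are the ones the paper uses implicitly.
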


\begin{proof}
 Fix $\pi\in \Pi$. Let $\wh V_h(\cdot) \coloneqq \max_{a\in\cA} \Qhat_h(\cdot,a)$.
  By definition of the operators, we have
  \begin{align*}
    \E^{\pi} \left[ \left(\cT_h^r[\Qhat_{h+1}] - \Qhat_{h}\right)(\x_h,\a_h)\right]
     & = \E^{\pi} \left[ r_h(\x_h,\a_h) + \wh V_{h+1}(\x_{h+1}) - \Qhat_{h}(\x_h,\a_h)\right].
  \end{align*}
  Summing over $h=1,\dots,H$ and using linearity of expectation:
  \begin{align*}
    \text{RHS} & = \E^{\pi}\left[\sum_{h=1}^H r_h(\x_h,\a_h)\right] + \E^{\pi}\left[\sum_{h=1}^H \left(\wh V_{h+1}(\x_{h+1}) - \Qhat_{h}(\x_h,\a_h)\right)\right]
    \\
               & = \E\left[V_1^\pi(\x_1;r_{1:H})\right] + \E^{\pi}\left[\sum_{h=1}^H \left(\wh V_{h+1}(\x_{h+1}) - \wh V_h(\x_h) + \wh V_h(\x_h) - \Qhat_{h}(\x_h,\a_h)\right)\right].
  \end{align*}
  The term $\sum_{h=1}^H (\wh V_{h+1}(\x_{h+1}) - \wh V_h(\x_h))$ telescopes to $\wh V_{H+1} - \wh V_1 = -\wh V_1$ (since $\wh V_{H+1} \equiv 0$).
  Thus,
  \begin{align*}
    \text{RHS} = \E\left[V_1^\pi(\x_1;r_{1:H})\right] - \E\left[\wh V_1(\x_1)\right] + \sum_{h=1}^H \E^{\pi}\left[\wh V_h(\x_h) - \Qhat_{h}(\x_h,\a_h)\right].
  \end{align*}
  Rearranging the terms yields the result.
\end{proof}

\begin{corollary}[Bellman residual decomposition]
\label{cor:decomp-diff}
Let $r_{1:H}$ be any collection of reward functions with $r_h:\cX\times\cA\to\reals$. For any policy $\pi\in \Pi$ and $\Qhat_{1:H} \subseteq \{ f\colon \mathcal{X}\times\mathcal{A}\to\reals \}$. Further, let $\pihat_h(\cdot) \in \argmax_{a\in \cA} \Qhat_h(\cdot ,a)$, for all $h\in[H]$. Then, we have 
\begin{align}
 & \E\left[ V_1^{\pi}(\x_1;r_{1:H})\right] - \E \left[V_1^{\pihat}(\x_1;r_{1:H})\right] \nn \\
 &\quad  \leq \sum_{h=1}^H \E^{\pi} \left[ \left(\cT_h^r[\Qhat_{h+1}] - \Qhat_{h}\right)(\x_h,\a_h)\right]  -  \sum_{h=1}^H \E^{\pihat} \left[ \left(\cT_h^r[\Qhat_{h+1}] - \Qhat_{h}\right)(\x_h,\a_h)\right].
\end{align}
\end{corollary}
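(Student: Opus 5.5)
The statement is \cref{cor:decomp-diff}, and I will derive it from \cref{lem:decomp} by applying it twice: once to the comparator policy $\pi$ and once to the greedy policy $\pihat$, with the same collection $\Qhat_{1:H}$ and $\wh V_h(\cdot)=\max_{a\in\cA}\Qhat_h(\cdot,a)$. Subtracting the two identities cancels the common term $\E[\max_{a}\Qhat_1(\x_1,a)]$ exactly. This leaves
\begin{align*}
\E[V_1^{\pi}(\x_1;r_{1:H})]-\E[V_1^{\pihat}(\x_1;r_{1:H})]
&= \sum_{h=1}^H \E^{\pi}\bigl[(\cT_h^r[\Qhat_{h+1}]-\Qhat_h)(\x_h,\a_h)\bigr]-\sum_{h=1}^H \E^{\pihat}\bigl[(\cT_h^r[\Qhat_{h+1}]-\Qhat_h)(\x_h,\a_h)\bigr]\\
&\quad+\sum_{h=1}^H \E^{\pi}\bigl[\Qhat_h(\x_h,\a_h)-\wh V_h(\x_h)\bigr]-\sum_{h=1}^H \E^{\pihat}\bigl[\Qhat_h(\x_h,\a_h)-\wh V_h(\x_h)\bigr].
\end{align*}

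Next I handle the last two sums by sign. Under $\pihat$ we have $\a_h=\pihat_h(\x_h)\in\argmax_a\Qhat_h(\x_h,a)$, so $\Qhat_h(\x_h,\a_h)=\wh V_h(\x_h)$ pointwise and the $\pihat$-sum vanishes identically. Under $\pi$, each term $\Qhat_h(\x_h,\a_h)-\wh V_h(\x_h)\le 0$ because $\wh V_h$ is a maximum over actions, so the $\pi$-sum is nonpositive. Dropping it gives the claimed inequality.

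The argument has no real obstacle. One point to check is that the conventions match those of \cref{lem:decomp}, namely $\Qhat_{H+1}\equiv 0$ and $\wh V_{H+1}\equiv0$, so that $\cT_H^r[\Qhat_{H+1}]=r_H$. The other is that the expectations are finite, so the subtraction is legitimate; for the bounded or linear functions used in the paper this holds trivially.

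The same argument also holds conditionally on $\x_1=x$, which is the form used in the proof of \cref{lem:fqi}. The reason is that \cref{lem:decomp} holds verbatim with $\x_1$ fixed: its proof is a telescoping sum along a single trajectory distribution, so it applies to the point mass initial distribution at $x$.
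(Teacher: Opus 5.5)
Your proof is correct and is essentially the paper's argument. The paper inserts $\E[\wh V_1(\x_1)]$ as an intermediate term and applies \cref{lem:decomp} to $\pi$ and to $\pihat$ separately, which amounts to your direct subtraction, and it then uses exactly your two sign observations (the greedy sum vanishes, the $\pi$-sum is nonpositive); your remark that the bound also holds conditionally on $\x_1=x$, as needed in \cref{lem:fqi}, is also correct, since the telescoping in \cref{lem:decomp} does not depend on the initial distribution.
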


\begin{proof}
  We decompose the difference as
  \begin{align*}
   & \E\left[ V_1^{\pi}(\x_1;r_{1:H})\right] - \E \left[V_1^{\pihat}(\x_1;r_{1:H})\right] \nn \\
   & \quad  =
    \underbrace{\E\left[ V_1^{\pi}(\x_1;r_{1:H})\right] - \E \left[ \wh V_1(\x_1)\right]}_{(\mathrm{i})}
    +
    \underbrace{\E\left[ \wh V_1(\x_1)\right] - \E \left[V_1^{\pihat}(\x_1;r_{1:H})\right]}_{(\mathrm{ii})},
  \end{align*}
  where $\wh V_1(x) \coloneqq \max_{a\in\cA} \Qhat_1(x,a)$.
  For the first term $(\mathrm{i})$, \cref{lem:decomp} implies
  \begin{align*}
    (\mathrm{i}) = \sum_{h=1}^H \E^{\pi} \left[ \left(\cT_h^r[\Qhat_{h+1}] - \Qhat_{h}\right)(\x_h,\a_h)\right] + \sum_{h=1}^H \E^{\pi}\left[ \Qhat_{h}(\x_h,\a_h)- \wh V_h(\x_h)\right].
  \end{align*}
  Since $\wh V_h(x) = \max_a \Qhat_h(x,a)$, we have $\Qhat_h(x,a) - \wh V_h(x) \le 0$ for all $(x,a)$. Thus, the second sum is non-positive, so
  \begin{align*}
    (\mathrm{i}) \le \sum_{h=1}^H \E^{\pi} \left[ \left(\cT_h^r[\Qhat_{h+1}] - \Qhat_{h}\right)(\x_h,\a_h)\right].
  \end{align*}
  For the second term $(\mathrm{ii})$, applying \cref{lem:decomp} to $\pihat$ gives
  \begin{align*}
     - (\mathrm{ii})&  = \E \left[V_1^{\pihat}(\x_1;r_{1:H})\right] - \E\left[ \wh V_1(\x_1)\right], \nn \\
     & = \sum_{h=1}^H \E^{\pihat} \left[ \left(\cT_h^r[\Qhat_{h+1}] - \Qhat_{h}\right)(\x_h,\a_h)\right] + \sum_{h=1}^H \E^{\pihat}\left[ \Qhat_{h}(\x_h,\a_h)- \wh V_h(\x_h)\right].
  \end{align*}
  Since $\pihat$ is greedy with respect to $\Qhat$, we have $\Qhat_h(x,\pihat_h(x)) = \max_a \Qhat_h(x,a) = \wh V_h(x)$, so the second sum is exactly zero.
  Thus,
  \begin{align*}
    (\mathrm{ii}) = - \sum_{h=1}^H \E^{\pihat} \left[ \left(\cT_h^r[\Qhat_{h+1}] - \Qhat_{h}\right)(\x_h,\a_h)\right].
  \end{align*}
  Combining the bounds for $(\mathrm{i})$ and $(\mathrm{ii})$ yields the result.
\end{proof}

\begin{lemma}[Sequential union bound with random stopping]
\label{lem:sequnionbound-stopping}
Let $T\in\mathbb{N}$ and $\delta\in(0,1)$. Consider an algorithm $\mathcal{A}$ with state space $\StateSpace$.
During its execution, $\mathcal{A}$ may call a subroutine $\mathcal{B}$ repeatedly, for at most $T$ calls in total.
Index these \emph{potential} calls by $i\in\{1,2,\dots,T\}$, and let $\tau \in \{0,1,\dots,T\}$ be the (random) total number of calls that are actually executed.

For each $i\in\{1,\dots,T\}$, let $\mathbf{S}_{i,-}\in\StateSpace$ be the random state of $\mathcal{A}$ immediately \emph{before} the $i$-th call to $\mathcal{B}$,
and let $\mathbf{S}_{i,+}\in\StateSpace$ be the random state of $\mathcal{A}$ immediately \emph{after} that call (when executed).

Let $\mathcal{F}_{i-1}$ be the $\sigma$-algebra generated by the history of $\mathcal{A}$ up to (and including) $\mathbf{S}_{i,-}$,
and assume that $\tau$ is a stopping time with respect to $(\mathcal{F}_i)_{i\ge 0}$.

Assume further that for each $i$ and each $s\in\StateSpace$, there exists a measurable set $\mathcal{E}_i(s)\subseteq\StateSpace$
such that the event $\{\mathbf{S}_{i,+}\in\mathcal{E}_i(\mathbf{S}_{i,-})\}$ is measurable and
\[
\P\left[\mathbf{S}_{i,+}\in\mathcal{E}_i(s)\mid \mathbf{S}_{i,-}=s\right]\ge 1-\delta.
\]
Then
\[
\P\left[\forall i\in\{1,\dots,\tau\}:\ \mathbf{S}_{i,+}\in\mathcal{E}_i(\mathbf{S}_{i,-})\right]
\ge 1-\delta\,\E[\tau].
\]
In particular, since $\tau\le T$ almost surely, the right-hand side is at least $1-T\delta$.
\end{lemma}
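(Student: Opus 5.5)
We prove the statement (the sequential union bound with random stopping) by writing the failure probability as a sum over potential calls, in the spirit of a Wald identity. For each $i\in[T]$, let $B_i\coloneqq\{\mathbf{S}_{i,+}\notin\mathcal{E}_i(\mathbf{S}_{i,-})\}$ be the event that the $i$-th call fails, whenever that call is executed. The event we must control is the complement of the event in the statement. It equals $\bigcup_{i=1}^T \bigl(B_i\cap\{\tau\ge i\}\bigr)$, so by a union bound
\[
\P\Bigl[\exists i\le\tau:\ B_i\Bigr]\le\sum_{i=1}^T \P\bigl[B_i\cap\{\tau\ge i\}\bigr].
\]
The whole proof then reduces to showing $\P[B_i\cap\{\tau\ge i\}]\le\delta\,\P[\tau\ge i]$ for each $i$. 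Once this holds, summing and using $\sum_{i=1}^T\P[\tau\ge i]=\E[\tau]$ (valid since $\tau$ takes values in $\{0,\dots,T\}$) gives the bound $1-\delta\E[\tau]\ge 1-T\delta$.

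The key step is establishing that per-call inequality, and the stopping-time assumption is exactly what makes it work. Since $\tau$ is a stopping time for $(\mathcal{F}_i)_{i\ge0}$, the event $\{\tau\ge i\}=\{\tau\le i-1\}^c$ lies in $\mathcal{F}_{i-1}$. Thus whether the $i$-th call is executed is decided by the history up to $\mathbf{S}_{i,-}$. Conditioning on $\mathcal{F}_{i-1}$ and using the tower property gives
\[
\P\bigl[B_i\cap\{\tau\ge i\}\bigr]=\E\Bigl[\mathbb{I}\{\tau\ge i\}\,\P[B_i\mid\mathcal{F}_{i-1}]\Bigr].
\]
It remains to show $\P[B_i\mid\mathcal{F}_{i-1}]\le\delta$ almost surely on $\{\tau\ge i\}$.

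This last claim is the main obstacle. The hypothesis only controls the conditional probability given $\mathbf{S}_{i,-}=s$, not given the whole history $\mathcal{F}_{i-1}$. I would close this gap by the Markov convention implicit in the lemma: the state $\mathbf{S}_{i,-}$ of $\mathcal{A}$ encodes everything the subroutine's output depends on, and the call $\mathcal{B}$ uses fresh internal randomness. With this convention, the conditional law of $\mathbf{S}_{i,+}$ given $\mathcal{F}_{i-1}$ depends only on $\mathbf{S}_{i,-}$. Equivalently, one may enlarge the state to contain the entire history, which makes $\mathcal{F}_{i-1}=\sigma(\text{history up to }\mathbf{S}_{i,-})$ and $\sigma(\mathbf{S}_{i,-})$ coincide. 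Either way, $\P[B_i\mid\mathcal{F}_{i-1}]=g(\mathbf{S}_{i,-})$, where $g(s)\coloneqq\P[\mathbf{S}_{i,+}\notin\mathcal{E}_i(s)\mid\mathbf{S}_{i,-}=s]\le\delta$. Substituting this into the display above gives $\P[B_i\cap\{\tau\ge i\}]\le\delta\,\P[\tau\ge i]$. This completes the argument; measurability of $B_i$ is taken directly from the hypothesis.
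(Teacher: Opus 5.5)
Your proof is correct and follows essentially the paper's route: the paper also unions the failure events over $\{i\le\tau\}$, uses that $\{i\le\tau\}\in\mathcal{F}_{i-1}$ (stopping time) plus the tower property to bound each term by $\delta\,\P[\tau\ge i]$, and sums via $\sum_{i=1}^T\P[\tau\ge i]=\E[\tau]$. The Markov/fresh-randomness convention you flag is genuinely needed: the paper's step $\P[F_i\mid\mathcal{F}_{i-1}]=\E\bigl[\P[F_i\mid\mathbf{S}_{i,-}]\mid\mathcal{F}_{i-1}\bigr]$ silently assumes it, so making the assumption explicit tightens the argument rather than departing from it.
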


\begin{proof}
For each $i\in\{1,\dots,T\}$ define the failure event
\[
F_i \coloneqq \left\{\mathbf{S}_{i,+}\notin\mathcal{E}_i(\mathbf{S}_{i,-})\right\}.
\]
We use the pointwise bound
\[
\mathbb{I}\left\{\bigcup_{i=1}^{\tau} F_i\right\} \le \sum_{i=1}^{\tau} \mathbb{I}\{F_i\}.
\]
Taking expectations and expanding the random sum,
\[
\P\left[\bigcup_{i=1}^{\tau} F_i\right]
\le \E\left[\sum_{i=1}^{\tau} \mathbb{I}\{F_i\}\right]
= \sum_{i=1}^{T} \E\left[\mathbb{I}\{i\le\tau\}\cdot \mathbb{I}\{F_i\}\right].
\]
By the tower property,
\[
\E\left[\mathbb{I}\{i\le\tau\}\cdot \mathbb{I}\{F_i\}\right]
= \E\left[\mathbb{I}\{i\le\tau\}\cdot \E\left[\mathbb{I}\{F_i\}\mid \mathcal{F}_{i-1}\right]\right].
\]
Since $\tau$ is a stopping time, the event $\{i\le\tau\}$ is $\mathcal{F}_{i-1}$-measurable. Moreover,
\[
\E\left[\mathbb{I}\{F_i\}\mid \mathcal{F}_{i-1}\right]
= \P\left[F_i\mid \mathcal{F}_{i-1}\right]
= \E\left[\P\left[F_i\mid \mathbf{S}_{i,-}\right]\mid \mathcal{F}_{i-1}\right].
\]
By assumption, for every realization $s\in\StateSpace$ we have
\[
\P\left[F_i\mid \mathbf{S}_{i,-}=s\right]
= 1-\P\left[\mathbf{S}_{i,+}\in\mathcal{E}_i(s)\mid \mathbf{S}_{i,-}=s\right]
\le \delta,
\]
so $\P[F_i\mid \mathcal{F}_{i-1}]\le\delta$ almost surely. Hence
\[
\E\left[\mathbb{I}\{i\le\tau\}\cdot \mathbb{I}\{F_i\}\right]
\le \E\left[\mathbb{I}\{i\le\tau\}\cdot \delta\right]
= \delta\,\P[i\le\tau].
\]
Summing over $i$ gives
\[
\P\left[\bigcup_{i=1}^{\tau} F_i\right]
\le \delta \sum_{i=1}^{T} \P[i\le\tau]
= \delta\,\E[\tau],
\]
where the last identity uses $\E[\tau]=\sum_{i=1}^{T}\P[i\le\tau]$ for $\tau\in\{0,1,\dots,T\}$.
Taking complements yields the stated success bound.
\end{proof}

\begin{lemma}[Boosting a conditional spanning guarantee by oversampling]
\label{lem:boost-conditional-span}
Let $S\subseteq \R^d$ be a linear subspace, and let $\bm{x}$ be an $\R^d$-valued random variable with
\[
p \coloneqq \P[\bm{x}\notin S] \ge \veps.
\]
Let $Q$ denote the conditional law $Q \coloneqq \Law(\bm{x}\mid \bm{x}\notin S)$, and let $u\in S^\perp$ be fixed.
Assume that for some $n\in\mathbb{N}$ and $\delta\in(0,1)$,
\[
Q^{\otimes n}\left[u \in \Span \left\{\bm{y}_1^\perp,\dots,\bm{y}_n^\perp\right\}\right] \ge 1-\delta,
\]
where $\bm{y}_1,\dots,\bm{y}_n$ are i.i.d.\ from $Q$ and $(\cdot)^\perp$ denotes orthogonal projection onto $S^\perp$.

Set $m \coloneqq \left\lceil \frac{n}{\veps} + \frac{2 \sqrt{\log (1/\delta) n}}{\veps} + \frac{4 \log(1/\delta)}{\veps}\right\rceil$ and let $\bm{x}_1,\dots,\bm{x}_m$ be i.i.d.\ copies of $\bm{x}$. Then
\[
\P\left[u \in \Span\left\{\bm{x}_1^\perp,\dots,\bm{x}_m^\perp\right\}\right]
\ge 1-2\delta.
\]
\end{lemma}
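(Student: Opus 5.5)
The plan is to couple the $m$ unconditional samples with the $n$ conditional ones. Let $\bm{x}_1,\dots,\bm{x}_m$ be i.i.d.\ copies of $\bm{x}$ and let $N\coloneqq |\{j\in[m]:\bm{x}_j\notin S\}|\sim\mathrm{Bin}(m,p)$. Conditioned on $N$ and on the index set $J\subseteq[m]$ of those $j$ with $\bm{x}_j\notin S$, the samples $(\bm{x}_j)_{j\in J}$ are i.i.d.\ with law $Q$. Samples with $\bm{x}_j\in S$ have $\bm{x}_j^\perp=0$, so $\Span\{\bm{x}_j^\perp:j\in[m]\}=\Span\{\bm{x}_j^\perp:j\in J\}$. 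On the event $\{N\ge n\}$, the set $\{\bm{x}_j^\perp: j\in J\}$ contains $n$ i.i.d.\ $Q$-distributed projected samples, for instance the first $n$ elements of $J$.

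Monotonicity of the span then gives
\[
\P\bigl[u\in\Span\{\bm{x}_j^\perp:j\in J\}\mid N,J\bigr]\ \ge\ Q^{\otimes n}\bigl[u\in\Span\{\bm{y}_1^\perp,\dots,\bm{y}_n^\perp\}\bigr]\ \ge\ 1-\delta \quad\text{on } \{N\ge n\}.
\]
Taking expectations yields
\[
\P\bigl[u\in\Span\{\bm{x}_1^\perp,\dots,\bm{x}_m^\perp\}\bigr]\ \ge\ (1-\delta)\,\P[N\ge n]\ \ge\ 1-\delta-\P[N<n].
\]

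It remains to show $\P[N<n]\le\delta$. This step is the main obstacle, since the specific form of $m$ has to be matched to a concentration bound. I would apply the multiplicative Chernoff lower tail
\[
\P\bigl[N\le \mu-t\bigr]\le \exp\bigl(-t^2/(2\mu)\bigr), \qquad \mu\coloneqq mp\ge m\veps\ge n+2\sqrt{n\log(1/\delta)}+4\log(1/\delta).
\]
Set $L\coloneqq\log(1/\delta)$. It suffices that $\mu-\sqrt{2\mu L}\ge n$, which is equivalent to $\sqrt{\mu}\ge \sqrt{L/2}+\sqrt{L/2+n}$, i.e.\ $\mu\ge n+L+2\sqrt{(L/2)(L/2+n)}$.

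The final inequality is checked by bounding $2\sqrt{(L/2)(L/2+n)}\le L+\sqrt{2Ln}$ via $\sqrt{a+b}\le\sqrt a+\sqrt b$. This gives the requirement $\mu\ge n+2L+\sqrt{2Ln}$, which is implied by $\mu\ge n+2\sqrt{nL}+4L$. Monotonicity in $p$ is not needed, because we only use $\mu\ge m\veps$ and the Chernoff bound is applied with the true mean $\mu=mp$. This gives $\P[N<n]\le\delta$, and the claimed probability is at least $1-2\delta$.
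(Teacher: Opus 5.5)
Your proof is correct and follows the paper's argument essentially step for step. Both condition on which samples fall outside $S$, use the first $n$ such samples (i.i.d.\ from $Q$) to get the bound $(1-\delta)\P[N\ge n]$, and control $\P[N<n]$ with the multiplicative Chernoff lower tail. The only difference is cosmetic: the paper writes $\mu\ge(1+c)n$ and picks $c\ge\max\{2\sqrt{L/n},4L/n\}$, whereas you solve $\mu-\sqrt{2\mu L}\ge n$ directly, which makes the match to the stated $m$ more explicit.
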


\begin{proof}
Define the indicators $I_i \coloneqq \mathbb{I}\{\bm{x}_i\notin S\}$ and the count
\[
N \coloneqq \sum_{i=1}^m I_i.
\]
Then $N\sim \mathrm{Bin}(m,p)$ with mean $\mu \coloneqq \E[N]=mp$. Since $p\ge \veps$ and $m=\left\lceil (1+c)n/\veps\right\rceil$,
\[
\mu = mp \ge m\veps \ge (1+c)n.
\]
Let $J \coloneqq \{i\in\{1,\dots,m\} : I_i=1\}$ be the random index set of samples that fall outside $S$.
On the event $\{N\ge n\}$, define $i_1<\dots<i_n$ to be the first $n$ indices in $J$.

We first relate the target event to these $n$ selected indices. Since
\[
\Span\left\{\bm{x}_{i_1}^\perp,\dots,\bm{x}_{i_n}^\perp\right\} \subseteq \Span\left\{\bm{x}_1^\perp,\dots,\bm{x}_m^\perp\right\},
\]
we have
\[
\P\left[u \in \Span\left\{\bm{x}_1^\perp,\dots,\bm{x}_m^\perp\right\}\right]
\ge \P\left[\{N\ge n\}\cap \left\{u \in \Span\left\{\bm{x}_{i_1}^\perp,\dots,\bm{x}_{i_n}^\perp\right\}\right\}\right].
\]
By conditioning on $(I_1,\dots,I_m)$,
\[
\P\left[\{N\ge n\}\cap \left\{u \in \Span\left\{\bm{x}_{i_1}^\perp,\dots,\bm{x}_{i_n}^\perp\right\}\right\}\right]
= \E\left[\mathbb{I}\{N\ge n\}\,
\P\left[u \in \Span\left\{\bm{x}_{i_1}^\perp,\dots,\bm{x}_{i_n}^\perp\right\}\mid I_1,\dots,I_m\right]\right].
\]

We now analyze the conditional probability inside the expectation.
Fix any realization of $(I_1,\dots,I_m)$ with $N\ge n$. Conditional on $(I_1,\dots,I_m)$,
the variables $\{\bm{x}_i : I_i=1\}$ are independent and each has law $Q$.
Therefore the selected tuple $(\bm{x}_{i_1},\dots,\bm{x}_{i_n})$ is i.i.d.\ from $Q$, and hence
\[
\P\left[u \in \Span\left\{\bm{x}_{i_1}^\perp,\dots,\bm{x}_{i_n}^\perp\right\}\mid I_1,\dots,I_m\right]
= Q^{\otimes n}\left[u \in \Span\left\{\bm{y}_1^\perp,\dots,\bm{y}_n^\perp\right\}\right]
\ge 1-\delta.
\]
Plugging this back gives
\[
\P\left[u \in \Span\left\{\bm{x}_1^\perp,\dots,\bm{x}_m^\perp\right\}\right]
\ge (1-\delta)\P[N\ge n].
\]

It remains to lower bound $\P[N\ge n]$.
Since $\mu\ge (1+c)n$, we have $n \le \mu/(1+c)$, so
\[
\{N<n\}\subseteq \left\{N \le \left(1-\alpha\right)\mu\right\}
\quad\text{where}\quad
\alpha \coloneqq 1-\frac{1}{1+c} = \frac{c}{1+c}.
\]
By the multiplicative Chernoff bound for a binomial random variable,
\[
\P\left[N \le \left(1-\alpha\right)\mu\right] \le \exp\left(-\frac{\alpha^2}{2}\mu\right).
\]
Therefore,
\[
\P[N<n] \le \exp\left(-\frac{1}{2}\left(\frac{c}{1+c}\right)^2 \mu\right)
\le \exp\left(-\frac{1}{2}\left(\frac{c}{1+c}\right)^2 (1+c)n\right)
= \exp\left(-\frac{c^2}{2(1+c)}\,n\right),
\]
and hence
\[
\P[N\ge n] \ge 1-\exp\left(-\frac{c^2}{2(1+c)}\,n\right).
\]
Combining the bounds and setting $c \geq \max\left\{2\sqrt{\frac{L}{n}},\,\frac{4L}{n}\right\}$ completes the proof.
\end{proof}

\begin{lemma}[Ridge regression confidence ellipsoid]
  \label{lem:ridge-ellipsoid}
  Let $\lambda>0$ and let $(\cF_k)_{k\ge 0}$ be a filtration.
  Let $A_k\in\reals^d$ be $\cF_{k-1}$-measurable and let
  \[
    X_k = \iprod{A_k}{\theta_\star} + \eta_k,
  \]
  where $\theta_\star\in\reals^d$ is fixed and $(\eta_k)$ is conditionally $1$-subgaussian:
  for all $\alpha\in\reals$,
  \[
    \E\left[\exp\left(\alpha \eta_k\right)\mid \cF_{k-1}\right] \le \exp\left(\alpha^2/2\right)
    \qquad \text{a.s.}
  \]
  Define
  \[
    V_k(\lambda)\coloneqq \lambda I + \sum_{s=1}^k A_sA_s^\top,
    \qquad
    \wh\theta_k \coloneqq V_k(\lambda)^{-1}\sum_{s=1}^k X_sA_s.
  \]
  Then for any $\zeta\in(0,1)$, with probability at least $1-\zeta$, it holds that for all $k\in\mathbb{N}$,
  \[
    \|\wh\theta_k-\theta_\star\|_{V_k(\lambda)}
    \le
    \sqrt{\lambda}\,\|\theta_\star\|_2
    +
    \sqrt{2\log\left(1/\zeta\right)+\log\left(\det\left(V_k(\lambda)\right)/\lambda^d\right)}.
  \]
\end{lemma}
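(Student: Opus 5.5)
The plan is the standard self-normalized martingale argument of Abbasi-Yadkori et al. Write $S_k \coloneqq \sum_{s=1}^k \eta_s A_s$ and $\bar V_k \coloneqq \sum_{s=1}^k A_sA_s^\top$, so that $V_k(\lambda)=\lambda I+\bar V_k$. A direct computation gives
\[
\wh\theta_k-\theta_\star = V_k(\lambda)^{-1}\bigl(\bar V_k\theta_\star + S_k\bigr)-\theta_\star = V_k(\lambda)^{-1}S_k - \lambda V_k(\lambda)^{-1}\theta_\star .
\]
Taking the $V_k(\lambda)$-norm and applying the triangle inequality yields
\[
\|\wh\theta_k-\theta_\star\|_{V_k(\lambda)}\le \|S_k\|_{V_k(\lambda)^{-1}} + \lambda\|\theta_\star\|_{V_k(\lambda)^{-1}}.
\]
Since $V_k(\lambda)\succeq \lambda I$, the second term is at most $\lambda\cdot\lambda^{-1/2}\|\theta_\star\|_2=\sqrt\lambda\|\theta_\star\|_2$, which is exactly the deterministic term in the bound. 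It therefore remains to show that, uniformly over $k$ and with probability at least $1-\zeta$,
\[
\|S_k\|^2_{V_k(\lambda)^{-1}}\le 2\log\bigl(\det(V_k(\lambda))^{1/2}\det(\lambda I)^{-1/2}/\zeta\bigr).
\]
This is the same as the claimed $2\log(1/\zeta)+\log(\det V_k(\lambda)/\lambda^d)$.

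\emph{Step 1 (supermartingale).} For fixed $x\in\reals^d$, define $M_k^x\coloneqq\exp\bigl(\langle x,S_k\rangle-\tfrac12\|x\|^2_{\bar V_k}\bigr)$. Because $A_k$ is $\cF_{k-1}$-measurable and $\eta_k$ is conditionally $1$-subgaussian, we have $\E[\exp(\langle x,A_k\rangle\eta_k - \tfrac12\langle x,A_k\rangle^2)\mid\cF_{k-1}]\le 1$. Hence $M_k^x$ is a nonnegative supermartingale with $M_0^x=1$.

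\emph{Step 2 (method of mixtures).} Integrate $x$ against the Gaussian density $N(0,\lambda^{-1}I)$ to obtain $\bar M_k\coloneqq\int M_k^x\,dN(x)$. By Fubini, $\bar M_k$ is still a nonnegative supermartingale with $\E\bar M_0=1$. Completing the square in the Gaussian integral gives the closed form
\[
\bar M_k=\Bigl(\frac{\lambda^d}{\det V_k(\lambda)}\Bigr)^{1/2}\exp\bigl(\tfrac12\|S_k\|^2_{V_k(\lambda)^{-1}}\bigr).
\]

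\emph{Step 3 (uniformity in $k$).} Apply Ville's maximal inequality for nonnegative supermartingales: $\P[\exists k:\bar M_k\ge 1/\zeta]\le\zeta$. Equivalently, one can stop at the first time $\tau$ at which the bound fails and use Markov's inequality together with $\E[\bar M_\tau]\le 1$, justified by optional stopping and Fatou's lemma applied to $\tau\wedge t$. Taking logarithms of $\bar M_k<1/\zeta$ gives the required inequality simultaneously for all $k$, and combining it with the deterministic term finishes the proof.

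The main obstacle is Step 2. The Gaussian integral must be computed carefully, and one must check that mixing preserves the supermartingale property, which requires measurability and nonnegativity so that Fubini and Tonelli apply. Uniformity over all $k\in\mathbb{N}$ is handled by the stopping-time argument in Step 3 rather than by a union bound.
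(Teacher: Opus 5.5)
Your argument is correct. It is the standard self-normalized martingale proof of Abbasi-Yadkori, P\'al, and Szepesv\'ari (2011), using a mixture over an $N(0,\lambda^{-1}I)$ prior together with Ville's inequality; the paper states this lemma without proof and relies on that known result, so there is no different route to compare against.

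One hypothesis needs to be added explicitly: $\eta_k$ must be $\cF_k$-measurable. Your Step~1 uses it to make $M_k^x$ adapted, but the lemma as written omits it. It is genuinely needed. Take the trivial filtration, $d=1$, $A_k\equiv 1$, $\theta_\star=0$ and $\eta_k\equiv\eta\sim N(0,1)$. The left-hand side equals $k|\eta|/\sqrt{\lambda+k}$, while the right-hand side grows only like $\sqrt{\log k}$, so the bound fails almost surely for large $k$.
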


\begin{corollary}[Directional ridge bound]
  \label{cor:ridge-direction}
  On the event of \cref{lem:ridge-ellipsoid}, for any fixed $u\in\reals^d$ and any $k\in\mathbb{N}$,
  \[
    \left|\iprod{\wh\theta_k-\theta_\star}{u}\right|
    \le
    \beta_k(\zeta)\cdot \|u\|_{V_k(\lambda)^{-1}},
  \]
  where
  \[
    \beta_k(\zeta)
    \coloneqq
    \sqrt{\lambda}\,\|\theta_\star\|_2
    +
    \sqrt{2\log\left(1/\zeta\right)+\log\left(\det\left(V_k(\lambda)\right)/\lambda^d\right)}.
  \]
\end{corollary}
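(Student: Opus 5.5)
The plan is to reduce the statement to a single application of Cauchy--Schwarz in the geometry induced by $V_k(\lambda)$, combined with the confidence bound already provided by \cref{lem:ridge-ellipsoid}. Throughout, fix $k\in\mathbb{N}$ and $u\in\reals^d$, and work on the event of \cref{lem:ridge-ellipsoid}. On this event, the ellipsoid bound $\|\wh\theta_k-\theta_\star\|_{V_k(\lambda)}\le \beta_k(\zeta)$ holds simultaneously for all $k$, with $\beta_k(\zeta)$ exactly as defined in the corollary.

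First, I will note that $V_k(\lambda)=\lambda I+\sum_{s\le k}A_sA_s^\top$ is symmetric positive definite because $\lambda>0$. Hence $V_k(\lambda)^{1/2}$ and $V_k(\lambda)^{-1/2}$ exist and are symmetric. This lets me insert the identity $V_k(\lambda)^{1/2}V_k(\lambda)^{-1/2}=I$ and write
\[
\iprod{\wh\theta_k-\theta_\star}{u}=\iprod{V_k(\lambda)^{1/2}(\wh\theta_k-\theta_\star)}{V_k(\lambda)^{-1/2}u}.
\]
Applying the Euclidean Cauchy--Schwarz inequality to the right-hand side gives
\[
\left|\iprod{\wh\theta_k-\theta_\star}{u}\right|\le \|\wh\theta_k-\theta_\star\|_{V_k(\lambda)}\cdot\|u\|_{V_k(\lambda)^{-1}}.
\]
Here I use the identities $\|V^{1/2}z\|_2=\|z\|_V$ and $\|V^{-1/2}u\|_2=\|u\|_{V^{-1}}$.

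Second, I will bound the first factor by $\beta_k(\zeta)$ using the event of \cref{lem:ridge-ellipsoid}. This yields the claim.

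There is no genuine obstacle. The only point to check is that the ellipsoid event is uniform in $k$, so the bound holds for every $k$ at once without any further union bound. The bound also holds for every $u$ at once, since $u$ enters only deterministically through the Cauchy--Schwarz step.
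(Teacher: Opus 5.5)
Your argument is correct and is the intended one: the paper states this corollary without a separate proof, as an immediate consequence of \cref{lem:ridge-ellipsoid} via Cauchy--Schwarz in the $V_k(\lambda)$-geometry, $|\iprod{\wh\theta_k-\theta_\star}{u}|\le\|\wh\theta_k-\theta_\star\|_{V_k(\lambda)}\|u\|_{V_k(\lambda)^{-1}}$, in the same spirit as its proof of \cref{cor:ridge-norm}. Your remarks that the event is uniform in $k$ and that the bound then holds simultaneously for all $u$ are also accurate.
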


\begin{corollary}[Estimator Norm Bound]
  \label{cor:ridge-norm}
  On the event of \cref{lem:ridge-ellipsoid}, for all $k \in \mathbb{N}$,
  \begin{align}
      \|\wh\theta_k\|_2 \le \|\theta_\star\|_2 + \frac{1}{\sqrt{\lambda}} \beta_k(\zeta),
  \end{align}
  where $\beta_k(\zeta)$ is defined in \cref{cor:ridge-direction}.
\end{corollary}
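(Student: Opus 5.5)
The plan is to combine the triangle inequality with the operator inequality $V_k(\lambda)\succeq \lambda I$. We work throughout on the event of \cref{lem:ridge-ellipsoid}, so for every $k\in\mathbb{N}$ simultaneously
\[
\|\wh\theta_k-\theta_\star\|_{V_k(\lambda)}\le \beta_k(\zeta),
\]
with $\beta_k(\zeta)$ as in \cref{cor:ridge-direction}. The first step is the triangle inequality in $\ell_2$:
\[
\|\wh\theta_k\|_2\le \|\theta_\star\|_2+\|\wh\theta_k-\theta_\star\|_2 .
\]
It then only remains to convert the Euclidean norm of the error into its $V_k(\lambda)$-weighted norm.

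For this conversion, note that $V_k(\lambda)=\lambda I+\sum_{s\le k}A_sA_s^\top\succeq \lambda I$, because each $A_sA_s^\top$ is positive semidefinite. Hence for every $v\in\reals^d$,
\[
\|v\|_{V_k(\lambda)}^2=v^\top V_k(\lambda)v\ge \lambda\|v\|_2^2,
\]
that is, $\|v\|_2\le \lambda^{-1/2}\|v\|_{V_k(\lambda)}$. We apply this with $v=\wh\theta_k-\theta_\star$ and then use the ellipsoid bound, which gives $\|\wh\theta_k-\theta_\star\|_2\le \lambda^{-1/2}\beta_k(\zeta)$. Substituting into the triangle inequality yields the claim for all $k$ at once, since the ellipsoid event is uniform in $k$.

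There is no real obstacle here; the only point needing care is that the bound holds on the same event, uniformly over $k$, which is inherited directly from \cref{lem:ridge-ellipsoid}. Alternatively, one could obtain the same bound from \cref{cor:ridge-direction} by taking $u=(\wh\theta_k-\theta_\star)/\|\wh\theta_k-\theta_\star\|_2$ and using $\|u\|_{V_k(\lambda)^{-1}}\le\lambda^{-1/2}$; this $u$ is data-dependent, but the proof of that corollary is just Cauchy--Schwarz and holds for every $u$ on the event, so the substitution is legitimate.
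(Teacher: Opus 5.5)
Your proposal is correct and follows the paper's proof exactly: triangle inequality in $\ell_2$, the bound $\|v\|_2 \le \lambda^{-1/2}\|v\|_{V_k(\lambda)}$ from $V_k(\lambda)\succeq \lambda I$, and then the ellipsoid bound of \cref{lem:ridge-ellipsoid}. Your alternative route via \cref{cor:ridge-direction} with a data-dependent unit vector is also valid, but it is not needed.
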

\begin{proof}
    By triangle inequality, $\|\wh\theta_k\|_2 \le \|\theta_\star\|_2 + \|\wh\theta_k - \theta_\star\|_2$.
    Since $V_k(\lambda) \succeq \lambda I$, we have $\|u\|_{V_k(\lambda)}^2 = u^\top V_k(\lambda) u \ge \lambda \|u\|_2^2$, which implies $\|u\|_2 \le \frac{1}{\sqrt{\lambda}} \|u\|_{V_k(\lambda)}$.
    Applying this to the error term: $\|\wh\theta_k - \theta_\star\|_2 \le \frac{1}{\sqrt{\lambda}} \|\wh\theta_k - \theta_\star\|_{V_k(\lambda)}$.
    Using the bound from \cref{lem:ridge-ellipsoid}, $\|\wh\theta_k - \theta_\star\|_{V_k(\lambda)} \le \beta_k(\zeta)$.
    Combining these gives the result.
\end{proof}

\paragraphi{A convenient determinant bound.}
Under $\|A_s\|_2\le 1$ for all $s$ we have
\begin{align}
  \log\left(\frac{\det\left(V_k(\lambda)\right)}{\lambda^d}\right)
  \le d\log\left(1+\frac{k}{\lambda d}\right).
  \label{eq:det_bound_simple}
\end{align}
(Apply $\det\left(\lambda I+M\right)\le \left(\lambda+\tfrac{1}{d}\mathrm{tr}(M)\right)^d$ with $M=\sum_{s=1}^k A_sA_s^\top$.)

\begin{lemma}[Uniform convergence for linear classes]
  \label{lem:uniform-convergence}
  Let $\Theta \subset \reals^d$ be a class of parameter vectors with $\|\theta\| \le C$ for all $\theta \in \Theta$.
   Let $(\x,\a), (\x\ind{1}, \a\ind{1}), \dots, (\x\ind{n}, \a\ind{n}) \in \cX \times \cA$ be i.i.d.~samples from some distribution. Further, let $\phi:\cX\times\cA\to\reals^d$ be a feature map such that $\sup_{x,a}\|\phi(x,a)\|_2\le 1$.
  For any fixed $\delta \in (0,1)$, with probability at least $1-\delta$, we have:
  \begin{align}
      \sup_{\theta \in \Theta} \left| \E[\phi(\x,\a)^\top \theta] - \frac{1}{n} \sum_{i=1}^n \phi(\x\ind{i}, \a\ind{i})^\top \theta \right| \le C \sqrt{\frac{8 (d \log(1+2nC) + \log(2/\delta))}{n}} + \frac{2}{n}.
  \end{align}
\end{lemma}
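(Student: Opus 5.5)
This is the uniform convergence lemma for the linear class $\{(x,a)\mapsto \phi(x,a)^\top\theta : \|\theta\|\le C\}$. The plan is a standard covering-number argument: Hoeffding's inequality for each point of a finite net of the parameter ball, a union bound over the net, and a Lipschitz transfer from the net to all of $\Theta$.

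\textbf{Step 1 (net).} Fix a scale $\gamma>0$ and let $\cN$ be a $\gamma$-net of the Euclidean ball $\{\theta:\|\theta\|\le C\}$, which contains $\Theta$. A standard volumetric bound gives $|\cN|\le (1+2C/\gamma)^d$. We will take $\gamma=1/n$, so that $\log|\cN|\le d\log(1+2nC)$; this matches the logarithmic term in the statement.

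\textbf{Step 2 (fixed $\theta$).} For each $\theta'\in\cN$, the variables $\phi(\x\ind{i},\a\ind{i})^\top\theta'$ are i.i.d.\ and lie in $[-C,C]$, since $\|\phi\|\le 1$ and $\|\theta'\|\le C$. Hoeffding's inequality gives, with probability at least $1-\delta/|\cN|$,
\[
\Bigl|\E[\phi(\x,\a)^\top\theta']-\tfrac1n\textstyle\sum_i\phi(\x\ind{i},\a\ind{i})^\top\theta'\Bigr|\le C\sqrt{\tfrac{2\log(2|\cN|/\delta)}{n}}.
\]
A union bound over $\cN$ makes this hold simultaneously for all $\theta'\in\cN$ with probability at least $1-\delta$.

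Substituting $\log|\cN|\le d\log(1+2nC)$ bounds the right-hand side by
\[
C\sqrt{\tfrac{2(d\log(1+2nC)+\log(2/\delta))}{n}}.
\]
This is dominated by the claimed $C\sqrt{8(\cdot)/n}$; the factor $8$ leaves slack for a cruder constant, for instance if one uses a symmetrization or bounded-differences variant instead.

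\textbf{Step 3 (transfer to all of $\Theta$).} For arbitrary $\theta\in\Theta$, pick $\theta'\in\cN$ with $\|\theta-\theta'\|\le\gamma$. Both the population functional and the empirical functional are $1$-Lipschitz in $\theta$, because $\|\phi\|\le1$. So passing from $\theta'$ to $\theta$ costs at most $\gamma$ in each term, a total of $2\gamma=2/n$. This is exactly the additive term $2/n$ in the statement.

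None of these steps is a real obstacle. The only point needing care is the net-size bound and choosing the scale so that the logarithm reads $\log(1+2nC)$. If $\Theta$ is not itself the ball, nothing changes: the net of the ball still covers $\Theta$, and Hoeffding is applied at net points that need not lie in $\Theta$ but do have norm at most $C$.
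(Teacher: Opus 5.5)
Your proposal is correct and follows essentially the same route as the paper: a $1/n$-net of size at most $(1+2nC)^d$, a Hoeffding-type bound at each net point combined with a union bound, and a Lipschitz transfer from the net to all of $\Theta$ costing $2/n$. The differences are cosmetic and only strengthen the bound. You net the full ball rather than $\Theta$, and you use the sharper i.i.d.\ Hoeffding constant $2$ in place of the $8$ that the paper inherits from the Azuma--Hoeffding lemma it cites.
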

\begin{proof}
  We apply a covering number argument. Let $\eps = 1/n$. Let $\cN_\eps$ be an $\eps$-cover of $\Theta$ in the $\ell_2$-norm.
  Standard results for Euclidean balls imply $|\cN_\eps| \le (1 + 2C/\eps)^d = (1 + 2nC)^d$.
  For any fixed $\theta \in \cN_\eps$, let $\bm{Z}\ind{i} = \phi(\x\ind{i}, \a\ind{i})^\top \theta$. Since $\sup_{x,a}\|\phi(x,a)\|_2 \le 1$ and $\|\theta\| \le C$, we have $|\bm{Z}\ind{i}| \le C$ almost surely.
  We apply Lemma C.1 (Azuma-Hoeffding) of \cite{foster2025goodfoundationnecessaryefficient} to the sequence $\bm{Z}\ind{i}$.
  With probability $1-\delta'$, 
  \[ \left| \sum_{i=1}^n \bm{Z}\ind{i} - n\E[\bm{Z}\ind{i}] \right| \le C \sqrt{8n \log(2/\delta')}. \]
  Dividing by $n$, we get $|\frac{1}{n}\sum_{i=1}^n \bm{Z}\ind{i} - \E[\bm{Z}\ind{i}]| \le C \sqrt{\frac{8 \log(2/\delta')}{n}}$.
  
  We take a union bound over all $\theta \in \cN_\eps$. Setting $\delta' = \delta / |\cN_\eps|$, the bound holds simultaneously for all $\theta \in \cN_\eps$ with probability at least $1-\delta$.
  Substituting $|\cN_\eps| \le (1+2nC)^d$, the log term becomes $\log(2/\delta') = \log(2/\delta) + \log(|\cN_\eps|) \le \log(2/\delta) + d \log(1+2nC)$.
  Thus, under the success event, we have:
  \[ \left| \E[\phi(\x,\a)^\top \theta] - \frac{1}{n}\sum_{i=1}^n \phi(\x\ind{i}, \a\ind{i})^\top \theta \right| \le C \sqrt{\frac{8(d \log(1+2nC) + \log(2/\delta))}{n}}. \]
  We condition on this success event for the remainder of the proof.
  
  Finally, for any $\theta \in \Theta$, let $\theta' \in \cN_\eps$ be such that $\|\theta - \theta'\| \le \eps$.
  The error decomposes as:
  \begin{align}
      &\left| \E[\phi(\x,\a)^\top \theta] - \frac{1}{n} \sum_{i=1}^n \phi(\x\ind{i}, \a\ind{i})^\top \theta \right| \\
      &\le \left| \E[\phi(\x,\a)^\top \theta'] - \frac{1}{n} \sum_{i=1}^n \phi(\x\ind{i}, \a\ind{i})^\top \theta' \right| + \left| \E[\phi(\x,\a)^\top (\theta-\theta')] \right| + \left| \frac{1}{n} \sum_{i=1}^n \phi(\x\ind{i}, \a\ind{i})^\top (\theta-\theta') \right| \\
      &\le C \sqrt{\frac{8(d \log(1+2nC) + \log(2/\delta))}{n}} + \|\theta-\theta'\| + \|\theta-\theta'\| \\
      &\le C \sqrt{\frac{8(d \log(1+2nC) + \log(2/\delta))}{n}} + 2\eps.
  \end{align}
  Substituting $\eps = 1/n$ yields the result.
\end{proof}

\neurips{
\section*{NeurIPS Paper Checklist}

\begin{enumerate}

\item {\bf Claims}
    \item[] Question: Do the main claims made in the abstract and introduction accurately reflect the paper's contributions and scope?
    \item[] Answer: \answerYes{}
    \item[] Justification: The abstract and introduction clearly state our main contribution: a computationally efficient algorithm for linear Bellman complete MDPs with deterministic transitions that requires only an argmax oracle over actions, with polynomial sample and computational complexity. The formal guarantee is stated in \cref{thm:main}, which matches the claims. The key assumptions (deterministic transitions, linear Bellman completeness, bounded reward parameters) are explicitly stated in \cref{ass:bellman_complete,ass:features}.
    \item[] Guidelines:
    \begin{itemize}
        \item The answer \answerNA{} means that the abstract and introduction do not include the claims made in the paper.
        \item The abstract and/or introduction should clearly state the claims made, including the contributions made in the paper and important assumptions and limitations. A \answerNo{} or \answerNA{} answer to this question will not be perceived well by the reviewers. 
        \item The claims made should match theoretical and experimental results, and reflect how much the results can be expected to generalize to other settings. 
        \item It is fine to include aspirational goals as motivation as long as it is clear that these goals are not attained by the paper. 
    \end{itemize}

\item {\bf Limitations}
    \item[] Question: Does the paper discuss the limitations of the work performed by the authors?
    \item[] Answer: \answerYes{}
    \item[] Justification: The paper discusses its limitations throughout. The restriction to deterministic transitions is stated explicitly as a key assumption (\cref{ass:bellman_complete}) and motivated in the introduction (\cref{sec:prelims}). The need for an argmax oracle over actions for large/infinite action spaces is discussed in \cref{sec:prelims}. The open question of extending to stochastic transitions is highlighted as an important direction. The bounded reward parameter assumption is contrasted with stronger assumptions in prior work (\cref{sec:prelims}).
    \item[] Guidelines:
    \begin{itemize}
        \item The answer \answerNA{} means that the paper has no limitation while the answer \answerNo{} means that the paper has limitations, but those are not discussed in the paper. 
        \item The authors are encouraged to create a separate ``Limitations'' section in their paper.
        \item The paper should point out any strong assumptions and how robust the results are to violations of these assumptions (e.g., independence assumptions, noiseless settings, model well-specification, asymptotic approximations only holding locally). The authors should reflect on how these assumptions might be violated in practice and what the implications would be.
        \item The authors should reflect on the scope of the claims made, e.g., if the approach was only tested on a few datasets or with a few runs. In general, empirical results often depend on implicit assumptions, which should be articulated.
        \item The authors should reflect on the factors that influence the performance of the approach. For example, a facial recognition algorithm may perform poorly when image resolution is low or images are taken in low lighting. Or a speech-to-text system might not be used reliably to provide closed captions for online lectures because it fails to handle technical jargon.
        \item The authors should discuss the computational efficiency of the proposed algorithms and how they scale with dataset size.
        \item If applicable, the authors should discuss possible limitations of their approach to address problems of privacy and fairness.
        \item While the authors might fear that complete honesty about limitations might be used by reviewers as grounds for rejection, a worse outcome might be that reviewers discover limitations that aren't acknowledged in the paper. The authors should use their best judgment and recognize that individual actions in favor of transparency play an important role in developing norms that preserve the integrity of the community. Reviewers will be specifically instructed to not penalize honesty concerning limitations.
    \end{itemize}

\item {\bf Theory assumptions and proofs}
    \item[] Question: For each theoretical result, does the paper provide the full set of assumptions and a complete (and correct) proof?
    \item[] Answer: \answerYes{}
    \item[] Justification: All theoretical results (theorems, lemmas, corollaries) are formally stated with their full assumptions. The two main assumptions are stated in \cref{ass:bellman_complete,ass:features}. The main result (\cref{thm:main}) and intermediate results (\cref{thm:cover_guarantee,lem:notspanner_guarantee}) are stated with complete assumptions. A proof sketch is provided in the main text, and complete proofs are provided in the appendix (\cref{sec:lazyspanner,sec:spanner_guarantee_main,sec:policyopt-fqi,sec:mainproof}).
    \item[] Guidelines:
    \begin{itemize}
        \item The answer \answerNA{} means that the paper does not include theoretical results. 
        \item All the theorems, formulas, and proofs in the paper should be numbered and cross-referenced.
        \item All assumptions should be clearly stated or referenced in the statement of any theorems.
        \item The proofs can either appear in the main paper or the supplemental material, but if they appear in the supplemental material, the authors are encouraged to provide a short proof sketch to provide intuition. 
        \item Inversely, any informal proof provided in the core of the paper should be complemented by formal proofs provided in appendix or supplemental material.
        \item Theorems and Lemmas that the proof relies upon should be properly referenced. 
    \end{itemize}

    \item {\bf Experimental result reproducibility}
    \item[] Question: Does the paper fully disclose all the information needed to reproduce the main experimental results of the paper to the extent that it affects the main claims and/or conclusions of the paper (regardless of whether the code and data are provided or not)?
    \item[] Answer: \answerNA{}
    \item[] Justification: This paper is a theoretical contribution and does not include experiments.
    \item[] Guidelines:
    \begin{itemize}
        \item The answer \answerNA{} means that the paper does not include experiments.
        \item If the paper includes experiments, a \answerNo{} answer to this question will not be perceived well by the reviewers: Making the paper reproducible is important, regardless of whether the code and data are provided or not.
        \item If the contribution is a dataset and\slash or model, the authors should describe the steps taken to make their results reproducible or verifiable. 
        \item Depending on the contribution, reproducibility can be accomplished in various ways. For example, if the contribution is a novel architecture, describing the architecture fully might suffice, or if the contribution is a specific model and empirical evaluation, it may be necessary to either make it possible for others to replicate the model with the same dataset, or provide access to the model. In general. releasing code and data is often one good way to accomplish this, but reproducibility can also be provided via detailed instructions for how to replicate the results, access to a hosted model (e.g., in the case of a large language model), releasing of a model checkpoint, or other means that are appropriate to the research performed.
        \item While NeurIPS does not require releasing code, the conference does require all submissions to provide some reasonable avenue for reproducibility, which may depend on the nature of the contribution. For example
        \begin{enumerate}
            \item If the contribution is primarily a new algorithm, the paper should make it clear how to reproduce that algorithm.
            \item If the contribution is primarily a new model architecture, the paper should describe the architecture clearly and fully.
            \item If the contribution is a new model (e.g., a large language model), then there should either be a way to access this model for reproducing the results or a way to reproduce the model (e.g., with an open-source dataset or instructions for how to construct the dataset).
            \item We recognize that reproducibility may be tricky in some cases, in which case authors are welcome to describe the particular way they provide for reproducibility. In the case of closed-source models, it may be that access to the model is limited in some way (e.g., to registered users), but it should be possible for other researchers to have some path to reproducing or verifying the results.
        \end{enumerate}
    \end{itemize}

\item {\bf Open access to data and code}
    \item[] Question: Does the paper provide open access to the data and code, with sufficient instructions to faithfully reproduce the main experimental results, as described in supplemental material?
    \item[] Answer: \answerNA{}
    \item[] Justification: This paper is a theoretical contribution and does not include experiments or code.
    \item[] Guidelines:
    \begin{itemize}
        \item The answer \answerNA{} means that paper does not include experiments requiring code.
        \item Please see the NeurIPS code and data submission guidelines (\url{https://neurips.cc/public/guides/CodeSubmissionPolicy}) for more details.
        \item While we encourage the release of code and data, we understand that this might not be possible, so \answerNo{} is an acceptable answer. Papers cannot be rejected simply for not including code, unless this is central to the contribution (e.g., for a new open-source benchmark).
        \item The instructions should contain the exact command and environment needed to run to reproduce the results. See the NeurIPS code and data submission guidelines (\url{https://neurips.cc/public/guides/CodeSubmissionPolicy}) for more details.
        \item The authors should provide instructions on data access and preparation, including how to access the raw data, preprocessed data, intermediate data, and generated data, etc.
        \item The authors should provide scripts to reproduce all experimental results for the new proposed method and baselines. If only a subset of experiments are reproducible, they should state which ones are omitted from the script and why.
        \item At submission time, to preserve anonymity, the authors should release anonymized versions (if applicable).
        \item Providing as much information as possible in supplemental material (appended to the paper) is recommended, but including URLs to data and code is permitted.
    \end{itemize}

\item {\bf Experimental setting/details}
    \item[] Question: Does the paper specify all the training and test details (e.g., data splits, hyperparameters, how they were chosen, type of optimizer) necessary to understand the results?
    \item[] Answer: \answerNA{}
    \item[] Justification: This paper is a theoretical contribution and does not include experiments.
    \item[] Guidelines:
    \begin{itemize}
        \item The answer \answerNA{} means that the paper does not include experiments.
        \item The experimental setting should be presented in the core of the paper to a level of detail that is necessary to appreciate the results and make sense of them.
        \item The full details can be provided either with the code, in appendix, or as supplemental material.
    \end{itemize}

\item {\bf Experiment statistical significance}
    \item[] Question: Does the paper report error bars suitably and correctly defined or other appropriate information about the statistical significance of the experiments?
    \item[] Answer: \answerNA{}
    \item[] Justification: This paper is a theoretical contribution and does not include experiments.
    \item[] Guidelines:
    \begin{itemize}
        \item The answer \answerNA{} means that the paper does not include experiments.
        \item The authors should answer \answerYes{} if the results are accompanied by error bars, confidence intervals, or statistical significance tests, at least for the experiments that support the main claims of the paper.
        \item The factors of variability that the error bars are capturing should be clearly stated (for example, train/test split, initialization, random drawing of some parameter, or overall run with given experimental conditions).
        \item The method for calculating the error bars should be explained (closed form formula, call to a library function, bootstrap, etc.)
        \item The assumptions made should be given (e.g., Normally distributed errors).
        \item It should be clear whether the error bar is the standard deviation or the standard error of the mean.
        \item It is OK to report 1-sigma error bars, but one should state it. The authors should preferably report a 2-sigma error bar than state that they have a 96\% CI, if the hypothesis of Normality of errors is not verified.
        \item For asymmetric distributions, the authors should be careful not to show in tables or figures symmetric error bars that would yield results that are out of range (e.g., negative error rates).
        \item If error bars are reported in tables or plots, the authors should explain in the text how they were calculated and reference the corresponding figures or tables in the text.
    \end{itemize}

\item {\bf Experiments compute resources}
    \item[] Question: For each experiment, does the paper provide sufficient information on the computer resources (type of compute workers, memory, time of execution) needed to reproduce the experiments?
    \item[] Answer: \answerNA{}
    \item[] Justification: This paper is a theoretical contribution and does not include experiments.
    \item[] Guidelines:
    \begin{itemize}
        \item The answer \answerNA{} means that the paper does not include experiments.
        \item The paper should indicate the type of compute workers CPU or GPU, internal cluster, or cloud provider, including relevant memory and storage.
        \item The paper should provide the amount of compute required for each of the individual experimental runs as well as estimate the total compute. 
        \item The paper should disclose whether the full research project required more compute than the experiments reported in the paper (e.g., preliminary or failed experiments that didn't make it into the paper). 
    \end{itemize}
    
\item {\bf Code of ethics}
    \item[] Question: Does the research conducted in the paper conform, in every respect, with the NeurIPS Code of Ethics \url{https://neurips.cc/public/EthicsGuidelines}?
    \item[] Answer: \answerYes{}
    \item[] Justification: This work is purely theoretical and conforms with the NeurIPS Code of Ethics. It does not involve human subjects, personal data, or any ethically sensitive components.
    \item[] Guidelines:
    \begin{itemize}
        \item The answer \answerNA{} means that the authors have not reviewed the NeurIPS Code of Ethics.
        \item If the authors answer \answerNo, they should explain the special circumstances that require a deviation from the Code of Ethics.
        \item The authors should make sure to preserve anonymity (e.g., if there is a special consideration due to laws or regulations in their jurisdiction).
    \end{itemize}

\item {\bf Broader impacts}
    \item[] Question: Does the paper discuss both potential positive societal impacts and negative societal impacts of the work performed?
    \item[] Answer: \answerNA{}
    \item[] Justification: This paper is a foundational theoretical contribution to reinforcement learning theory. It does not have direct societal impacts, as it addresses fundamental algorithmic and mathematical questions about the computational tractability of learning in structured MDPs.
    \item[] Guidelines:
    \begin{itemize}
        \item The answer \answerNA{} means that there is no societal impact of the work performed.
        \item If the authors answer \answerNA{} or \answerNo, they should explain why their work has no societal impact or why the paper does not address societal impact.
        \item Examples of negative societal impacts include potential malicious or unintended uses (e.g., disinformation, generating fake profiles, surveillance), fairness considerations (e.g., deployment of technologies that could make decisions that unfairly impact specific groups), privacy considerations, and security considerations.
        \item The conference expects that many papers will be foundational research and not tied to particular applications, let alone deployments. However, if there is a direct path to any negative applications, the authors should point it out. For example, it is legitimate to point out that an improvement in the quality of generative models could be used to generate Deepfakes for disinformation. On the other hand, it is not needed to point out that a generic algorithm for optimizing neural networks could enable people to train models that generate Deepfakes faster.
        \item The authors should consider possible harms that could arise when the technology is being used as intended and functioning correctly, harms that could arise when the technology is being used as intended but gives incorrect results, and harms following from (intentional or unintentional) misuse of the technology.
        \item If there are negative societal impacts, the authors could also discuss possible mitigation strategies (e.g., gated release of models, providing defenses in addition to attacks, mechanisms for monitoring misuse, mechanisms to monitor how a system learns from feedback over time, improving the efficiency and accessibility of ML).
    \end{itemize}
    
\item {\bf Safeguards}
    \item[] Question: Does the paper describe safeguards that have been put in place for responsible release of data or models that have a high risk for misuse (e.g., pre-trained language models, image generators, or scraped datasets)?
    \item[] Answer: \answerNA{}
    \item[] Justification: This paper does not release any data, models, or code. It is a purely theoretical contribution.
    \item[] Guidelines:
    \begin{itemize}
        \item The answer \answerNA{} means that the paper poses no such risks.
        \item Released models that have a high risk for misuse or dual-use should be released with necessary safeguards to allow for controlled use of the model, for example by requiring that users adhere to usage guidelines or restrictions to access the model or implementing safety filters. 
        \item Datasets that have been scraped from the Internet could pose safety risks. The authors should describe how they avoided releasing unsafe images.
        \item We recognize that providing effective safeguards is challenging, and many papers do not require this, but we encourage authors to take this into account and make a best faith effort.
    \end{itemize}

\item {\bf Licenses for existing assets}
    \item[] Question: Are the creators or original owners of assets (e.g., code, data, models), used in the paper, properly credited and are the license and terms of use explicitly mentioned and properly respected?
    \item[] Answer: \answerNA{}
    \item[] Justification: This paper does not use any existing code, data, or model assets. All prior work is properly cited.
    \item[] Guidelines:
    \begin{itemize}
        \item The answer \answerNA{} means that the paper does not use existing assets.
        \item The authors should cite the original paper that produced the code package or dataset.
        \item The authors should state which version of the asset is used and, if possible, include a URL.
        \item The name of the license (e.g., CC-BY 4.0) should be included for each asset.
        \item For scraped data from a particular source (e.g., website), the copyright and terms of service of that source should be provided.
        \item If assets are released, the license, copyright information, and terms of use in the package should be provided. For popular datasets, \url{paperswithcode.com/datasets} has curated licenses for some datasets. Their licensing guide can help determine the license of a dataset.
        \item For existing datasets that are re-packaged, both the original license and the license of the derived asset (if it has changed) should be provided.
        \item If this information is not available online, the authors are encouraged to reach out to the asset's creators.
    \end{itemize}

\item {\bf New assets}
    \item[] Question: Are new assets introduced in the paper well documented and is the documentation provided alongside the assets?
    \item[] Answer: \answerNA{}
    \item[] Justification: This paper does not release any new assets (code, data, or models).
    \item[] Guidelines:
    \begin{itemize}
        \item The answer \answerNA{} means that the paper does not release new assets.
        \item Researchers should communicate the details of the dataset\slash code\slash model as part of their submissions via structured templates. This includes details about training, license, limitations, etc. 
        \item The paper should discuss whether and how consent was obtained from people whose asset is used.
        \item At submission time, remember to anonymize your assets (if applicable). You can either create an anonymized URL or include an anonymized zip file.
    \end{itemize}

\item {\bf Crowdsourcing and research with human subjects}
    \item[] Question: For crowdsourcing experiments and research with human subjects, does the paper include the full text of instructions given to participants and screenshots, if applicable, as well as details about compensation (if any)? 
    \item[] Answer: \answerNA{}
    \item[] Justification: This paper does not involve crowdsourcing or research with human subjects.
    \item[] Guidelines:
    \begin{itemize}
        \item The answer \answerNA{} means that the paper does not involve crowdsourcing nor research with human subjects.
        \item Including this information in the supplemental material is fine, but if the main contribution of the paper involves human subjects, then as much detail as possible should be included in the main paper. 
        \item According to the NeurIPS Code of Ethics, workers involved in data collection, curation, or other labor should be paid at least the minimum wage in the country of the data collector. 
    \end{itemize}

\item {\bf Institutional review board (IRB) approvals or equivalent for research with human subjects}
    \item[] Question: Does the paper describe potential risks incurred by study participants, whether such risks were disclosed to the subjects, and whether Institutional Review Board (IRB) approvals (or an equivalent approval/review based on the requirements of your country or institution) were obtained?
    \item[] Answer: \answerNA{}
    \item[] Justification: This paper does not involve research with human subjects.
    \item[] Guidelines:
    \begin{itemize}
        \item The answer \answerNA{} means that the paper does not involve crowdsourcing nor research with human subjects.
        \item Depending on the country in which research is conducted, IRB approval (or equivalent) may be required for any human subjects research. If you obtained IRB approval, you should clearly state this in the paper. 
        \item We recognize that the procedures for this may vary significantly between institutions and locations, and we expect authors to adhere to the NeurIPS Code of Ethics and the guidelines for their institution. 
        \item For initial submissions, do not include any information that would break anonymity (if applicable), such as the institution conducting the review.
    \end{itemize}

\item {\bf Declaration of LLM usage}
    \item[] Question: Does the paper describe the usage of LLMs if it is an important, original, or non-standard component of the core methods in this research? Note that if the LLM is used only for writing, editing, or formatting purposes and does \emph{not} impact the core methodology, scientific rigor, or originality of the research, declaration is not required.
    \item[] Answer: \answerNA{}
    \item[] Justification: LLMs were not used as a component of the core methods in this research. Any LLM usage was limited to writing and editing assistance, which does not require declaration per the NeurIPS policy.
    \item[] Guidelines:
    \begin{itemize}
        \item The answer \answerNA{} means that the core method development in this research does not involve LLMs as any important, original, or non-standard components.
        \item Please refer to our LLM policy in the NeurIPS handbook for what should or should not be described.
    \end{itemize}

\end{enumerate} }

\end{document}